\newcolumntype{x}[1]{>{\centering\arraybackslash}p{#1}}
\newtheorem{definitionNew}{Definition}
\renewenvironment{proof}[1][\proofname]{\par
\pushQED{\hfill\BlackBox}%
\normalfont \topsep6\p@\@plus6\p@\relax
\trivlist
\item\relax
{\itshape
#1\@addpunct{.}}\hspace\labelsep\ignorespaces
}{%
\popQED\endtrivlist\@endpefalse
}
\newcommand{\yuxin}[1]{\ifthenelse{\boolean{showcomments}}{\textcolor{blue}{[YC: #1]}}{}}
\newcommand{\adish}[1]{\ifthenelse{\boolean{showcomments}}{\textcolor{red}{AS: #1}}{}}
\newcommand{\commentout}[1]{}
\newcommand{\denselist}{
\itemsep -4pt\topsep-8pt\partopsep-8pt\itemindent 0pt
}
\newcommand{\Hypotheses}{\mathcal{H}}
\newcommand{\hypotheses}{H}
\newcommand{\hstar}{\hypothesis^\star}
\newcommand{\hinit}{{\hypothesis_0}}
\newcommand{\Examples}{\mathcal{Z}}
\newcommand{\Instances}{\mathcal{X}}
\newcommand{\Clabels}{\mathcal{Y}}
\newcommand{\examples}{Z}
\newcommand{\instances}{X}
\renewcommand{\example}{{z}}
\newcommand{\instance}{{x}}
\newcommand{\clabel}{{y}}
\newcommand{\TD}{\textsf{TD}\xspace}
\newcommand{\PBTD}{\textsf{PBTD}\xspace}
\newcommand{\NCTD}{\textsf{NCTD}\xspace}
\newcommand{\RTD}{\textsf{RTD}\xspace}
\newcommand{\VCD}{\textsf{VCD}\xspace}
\newcommand{\paren} [1] {\ensuremath{ \left( {#1} \right) }}
\def \argmin {\mathop{\rm arg\,min}}
\newcommand{\reals}{\ensuremath{\mathbb{R}}}
\newcommand{\hypothesis}[0]{\ensuremath{h}}
\newcommand{\bigO}[1]{\ensuremath{O\paren{#1}}}
\newcommand{\bigOmega}[1]{\ensuremath{\Omega\paren{#1}}}
\numberwithin{equation}{section}
\newcommand{\defref}[1]{Definition~\ref{#1}}
\newcommand{\tableref}[1]{Table~\ref{#1}}
\newcommand{\figref}[1]{Figure~\ref{#1}}
\newcommand{\secref}[1]{Section~\ref{#1}}
\newcommand{\thmref}[1]{Theorem~\ref{#1}}
\newcommand{\lemref}[1]{Lemma~\ref{#1}}
\renewcommand{\algref}[1]{Algorithm~\ref{#1}}
\newcommand{\lnref}[1]{Line~\ref{#1}}
\newcommand{\pref}{\sigma}
\newcommand{\CF}{\textsf{CF}}
\newcommand{\Val}{D}
\newcommand{\Candidate}{\mathbf{C}}
\newcommand{\Teacher}{\mathrm{T}}
\newcommand{\TeachingSeq}{\mathcal{S}}
\newcommand{\Sigmatd}[1]{\Sigma_{#1}{\text -}\TD}
\newcommand{\SigmaCtd}[1]{\Sigma^c_{#1}{\text -}\TD}
\newcommand{\SigmaConst}{\Sigma_{\textsf{const}}}
\newcommand{\SigmaGlobal}{\Sigma_{\textsf{global}}}
\newcommand{\SigmaContGlobal}{\Sigma_{\textsf{global}}^c}
\newcommand{\SigmaGvs}{\Sigma_{\textsf{gvs}}}
\newcommand{\SigmaLocal}{\Sigma_{\textsf{local}}}
\newcommand{\SigmaLvs}{\Sigma_{\textsf{lvs}}}
\newcommand{\SigmaWinStayLoseShift}{\Sigma_{\textsf{wsls}}}
\newcommand{\gvs}{\textsf{gvs}}
\newcommand{\glbl}{\textsf{global}}
\newcommand{\lvs}{\textsf{lvs}}
\newcommand{\lcl}{\textsf{local}}
\newcommand{\LVS}{\textsf{lvs}}
\newcommand{\Local}{\textsf{local}}
\newcommand{\worstcase}{\textsf{wc}}
\newcommand{\SigmaConstTD}{\Sigmatd{\textsf{const}}}
\newcommand{\SigmaGlobalTD}{\Sigmatd{\textsf{global}}}
\newcommand{\SigmaContGlobalTD}{\SigmaCtd{\textsf{global}}}
\newcommand{\SigmaGvsTD}{\Sigmatd{\textsf{gvs}}}
\newcommand{\SigmaLocalTD}{\Sigmatd{\textsf{local}}}
\newcommand{\SigmaLvsTD}{\Sigmatd{\textsf{lvs}}}
\newcommand{\sigmacstar}{\sigma^{c\star}}
\DeclarePairedDelimiter{\ceil}{\lceil}{\rceil}
\newcommand{\hclass}[1]{H^{#1}}
\newcommand{\compactDSet}[1]{\Psi_{#1}}
\newcommand{\indexof}[1]{I{(#1)}}
\newcommand{\hnext}{\hypothesis_{\text{next}}}
\def\constCircle{(4,0) circle (0.1cm)}
\def\gvsEllipse{(5,0) ellipse (2.2cm and 1.5cm)}
\def\localEllipse{(3,0) ellipse (2.2cm and 1.5cm)}
\def\lvsEllipse{(4,0) ellipse (4cm and 1.8cm)}
\newcommand{\distfun}{g}
\renewcommand{\cite}{\citep}
\begin{document}
\title{Preference-Based Batch and Sequential Teaching\thanks{This manuscript is an extended version of the paper \citep{mansouri2019preference} that appeared in NeurIPS'19.}}


\author{\name Farnam Mansouri \email mfarnam@mpi-sws.org \\
       \addr Max Planck Institute for Software Systems (MPI-SWS)\\
       Saarbrucken, 66123, Germany\\
       \AND
       \name Yuxin Chen \email chenyuxin@uchicago.edu \\
       \addr University of Chicago\\
       Chicago, IL 60637, USA\\
       \AND
       \name Ara Vartanian \email aravart@cs.wisc.edu \\
       \addr University of Wisconsin-Madison\\
       Madison, WI 53706, USA\\
       \AND
       \name Xiaojin Zhu  \email jerryzhu@cs.wisc.edu \\
       \addr University of Wisconsin-Madison\\
       Madison, WI 53706, USA\\
       \AND
       \name Adish Singla \email adishs@mpi-sws.org \\
       \addr Max Planck Institute for Software Systems (MPI-SWS)\\
       Saarbrucken, 66123, Germany  
       }

\editor{x}

\maketitle

\newtoggle{longversion}
\settoggle{longversion}{true}
\begin{abstract}
Algorithmic machine teaching studies the interaction between a teacher and a learner where the teacher selects labeled examples aiming at teaching a target hypothesis. In a quest to lower teaching complexity, several teaching models and complexity measures have been proposed for both the batch settings (e.g., worst-case, recursive, preference-based, and non-clashing models) and the sequential settings (e.g., local preference-based model). To better understand the connections between these models, we develop a novel framework that captures the teaching process via preference functions $\Sigma$. In our framework, each function $\sigma \in \Sigma$ induces a teacher-learner pair with teaching complexity as $\TD(\sigma)$. We show that the above-mentioned teaching models are equivalent to specific types/families of preference functions. We analyze several properties of the teaching complexity parameter $\TD(\sigma)$ associated with different families of the preference functions, e.g.,  comparison to the VC dimension of the hypothesis class and additivity/sub-additivity of $\TD(\sigma)$ over disjoint domains. Finally, we identify preference functions inducing a novel family of sequential models with teaching complexity linear in the VC dimension: this is in contrast to the best-known complexity result for the batch models, which is quadratic in the VC dimension.

\textbf{Keywords}: teaching dimension, machine teaching, preference-based learners, recursive teaching dimension, Vapnik–Chervonenkis dimension
\end{abstract}
\section{Introduction}\label{sec:intro}
Algorithmic machine teaching studies the interaction between a teacher and a learner where the teacher’s goal is to find an optimal training sequence to steer the learner towards a target hypothesis \cite{goldman1995complexity,zilles2011models,zhu2013machine,singla2014near,zhu2015machine,DBLP:journals/corr/ZhuSingla18}. An important quantity of interest is the teaching dimension (\TD) of the hypothesis class, representing the number of examples needed to teach any hypothesis in a given class. 
Given that the teaching complexity depends on what assumptions are made about teacher-learner interactions, different teaching models lead to different notions of teaching dimension. In the past two decades, several such teaching models have been proposed, primarily driven by the motivation to lower teaching complexity and to find models for which the teaching complexity has better connections with learning complexity measured by Vapnik–Chervonenkis dimension (\VCD)~\cite{vapnik1971uniform} of the class.

One particularly well-established class of teaching models for machine teaching, among others, involves the version space learner. A learner in this model class maintains a version space (i.e., a subset of hypotheses that are consistent with the examples received from a teacher) and outputs a hypothesis from this version space. 
Most of the well-studied teaching models for version space learners are for the batch setting, e.g., worst-case~\cite{goldman1995complexity,kuhlmann1999teaching}, complexity-based~\cite{balbach2008measuring}, recursive~\cite{zilles2008teaching,zilles2011models,doliwa2014recursive}, preference-based~\cite{gao2017preference}, and non-clashing~\cite{pmlr-v98-kirkpatrick19a} models; see \secref{sec:non-seq-family-pref} for formal definitions of these models. In these batch models, the teacher first provides a set of examples to the learner and then the learner outputs a hypothesis. An optimal teacher under such batch settings does not have to adapt to the learner's hypothesis during the teaching process. In other words, the teacher can construct a complete sequence of examples of the minimal length before teaching begins.

In a quest to achieve more natural teacher-learner interactions and enable richer applications, various different models have been proposed for the sequential setting.  \citet{balbach2005teaching} studied teaching a variant of the version space learner restricted to incremental learning by introducing a neighborhood relation over hypotheses. It has been demonstrated that feedback about the learner's current hypothesis can be helpful when teaching such a learner in a sequential setting. \citet{chen2018understanding} recently studied the local preference-based model for version space learners, where the learner's choice of the next hypothesis depends on a preference function parametrized by the current hypothesis. It has been shown that the teacher could lower the teaching complexity significantly by adapting to the learner's current hypothesis for such a sequential learner~\cite{chen2018understanding}. Our teaching framework generalizes these existing models; see \secref{sec:seq-family-pref} for formal definitions and details.
%

Recently, teaching complexity results have been extended beyond version space learners, including models for gradient learners~\cite{liu2017iterative,DBLP:conf/icml/LiuDLLRS18,DBLP:conf/ijcai/KamalarubanDCS19}, models inspired by control theory~\cite{zhu2018optimal,DBLP:conf/aistats/LessardZ019}, models for sequential tasks~\cite{cakmak2012algorithmic,haug2018teaching,tschiatschek2019learner,icml20-adaptive-reward-poisoning,icml20-policy-teaching,zhang2020teaching}, and models for human-centered applications that require adaptivity~\cite{singla2013actively,hunziker2018teaching}. A recent line of research has studied robust notions of teaching in settings where the teacher has limited information about the learner's dynamics~\cite{dasgupta2019teaching,DBLP:conf/ijcai/DevidzeMH0S20,cicalese-icml20-teaching-with-limited}. We see these works as complementary to ours: we focus on version space learners with the teacher having full information about the learner, and aim to provide a unified framework for the batch and sequential teaching models.
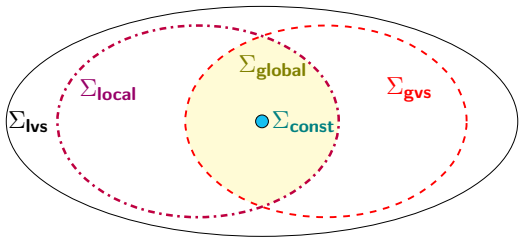
\begin{figure}[!t] 
\centering
\scalebox{.85}{
\begin{tikzpicture}
  \begin{scope}
    \clip \gvsEllipse;
    \fill[yellow!20] \localEllipse;
  \end{scope}
  \draw [fill=white!30!cyan] \constCircle node[text={rgb:green,3;blue,3}, right=.02cm] {\textbf{$\SigmaConst$}};
  \draw (3,0) node[text={rgb:red,7;green,7;blue,0},above=0.85cm,right=0.5cm] {\textbf{$\SigmaGlobal$}};
  \draw [red,line width=0.3mm,dashed] \gvsEllipse node[text={rgb:red,5;green,0;blue,0},above=.5cm,right=.8cm] {\textbf{$\SigmaGvs$}};
  \draw [purple,line width=0.4mm,dashdotted]\localEllipse node[text={rgb:red,1;purple,5;blue,2},above=.5cm,left=0.8cm] {\textbf{$\SigmaLocal$}};
  \draw \lvsEllipse node[text=black,left=3.2cm] {\textbf{$\SigmaLvs$}};
\end{tikzpicture}}
\caption{Venn diagram for different families of preference functions.} \label{fig:venndiagram_seq}
\end{figure}

\begin{table}[!t] 
\centering
\scalebox{0.9}{
\begingroup
\begin{tabular}{p{.175\linewidth}|p{.155\linewidth}|p{.19\linewidth}|p{.15\linewidth}|p{.14\linewidth}|p{.13\linewidth}}
    Families & $\SigmaConst$ & $\SigmaGlobal$ & $\SigmaGvs$ & $\SigmaLocal$ & $\SigmaLvs$\\ 
    \midrule
    Notion of \TD &  \worstcase-\TD & $\RTD$ / $\PBTD$ & $\NCTD$& 
    \Local-\PBTD & \LVS-\PBTD \\
    \hline
    Relation to \VCD & -- & $O(\VCD^2)$ & $O(\VCD^2)$ & $O(\VCD^2)$ & $O(\VCD)$\\
    \hline
    &\citet{goldman1995complexity}&\citet{zilles2011models,gao2017preference,DBLP:journals/corr/HuWLW17}&\citet{pmlr-v98-kirkpatrick19a}&\citet{chen2018understanding}&
\end{tabular}
\endgroup
}
\caption{Overview of our main results -- reduction to existing models and teaching complexity. 
}\label{tab:overview} 
\end{table}

\subsection{Overview of Main Results}\label{sec:intro.overview}

In this paper, we seek to gain a deeper understanding of how different teaching models relate to each other.
To this end, we develop a novel teaching framework that captures the teaching process via preference functions $\Sigma$. Here, a preference function $\sigma \in \Sigma$ models how a learner navigates in the version space as it receives teaching examples (see \secref{sec:model} for formal definition); in turn, each function $\sigma$ induces a teacher-learner pair with teaching dimension  $\TD(\sigma)$ (see \secref{sec:complexity}). We summarize some of the key results below:
\begin{itemize}
    \item We show that the well-studied teaching models in the batch setting, including the worst-case model \cite{goldman1995complexity}, the cooperative/recursive model \cite{zilles2011models}, the preference-based model \cite{gao2017preference} and the non-clashing model~\cite{pmlr-v98-kirkpatrick19a}, correspond to specific families of $\sigma$ functions in our framework. As a result, the teaching complexity for these models, namely the \emph{worst-case teaching dimension} (\worstcase-\TD)\footnote{In this paper, we refer to this classical notion of teaching dimension as \worstcase\textnormal{-}\TD (``\worstcase'' denoting worst-case model) instead of simply calling it \TD.}, the \emph{recursive teaching dimension} (\RTD), the \emph{preference-based teaching dimension} (\PBTD), and the \emph{no-clash teaching dimension} (\NCTD), correspond to the complexity of teaching specific families of batch learners under our framework
    (see \secref{sec:non-seq-family-pref} and \tableref{tab:overview}).
    \item We study the differences in the family of $\sigma$ functions inducing the strongest batch model~\cite{pmlr-v98-kirkpatrick19a} and functions inducing a weak sequential model~\cite{chen2018understanding}. The teaching complexity for the sequential model~\cite{chen2018understanding}, hereafter referred to as the \emph{local preference-based teaching dimension} (\Local-\PBTD), corresponds to the complexity of teaching specific family of sequential learners under our framework  (see \secref{sec:connection}, and the relationship between $\SigmaGvs$ and $\SigmaLocal$ in \figref{fig:venndiagram_seq}). 
    \item  We identify preference functions inducing a novel family of sequential models with teaching complexity linear in the \VCD of the hypothesis class. The preference functions in this family depend on both the learner's current hypothesis and the version space. Hereafter, we refer to the complexity of teaching such sequential models as the \emph{local version space preference-based teaching dimension} (\LVS-\PBTD). We provide a constructive procedure to find such $\sigma$ functions with low teaching complexity (\secref{sec:lvs_linvcd}). 
    \item 
    We analyze several important properties of the teaching complexity parameter $\TD(\sigma)$ associated with different families of the preference functions. In particular, we establish a lower bound on the teaching complexity $\TD(\sigma)$ w.r.t. \VCD for certain hypothesis classes, discuss the additivity/sub-additivity property of $\TD(\sigma)$ over disjoint domains, and compare the sizes of the different families of preference functions (\secref{sec:newresults}).
\end{itemize}

Our key findings are highlighted in \figref{fig:venndiagram_seq} and \tableref{tab:overview}. \figref{fig:venndiagram_seq} illustrates the relationship between different families of preference functions that we introduce, and \tableref{tab:overview} summarizes the key complexity results we obtain for different families. Although our main results are based on the setting where both the hypothesis class and the set of teaching examples are finite, we show that similar results could be extended to the infinite case, allowing us to establish our teaching complexity results as a generalization to \PBTD. Our unified view of the existing teaching models in turn opens up several intriguing new directions such as (i) using our constructive procedures to design preference functions for addressing open questions of whether \RTD / \NCTD is linear in \VCD, and (ii) understanding the notion of collusion-free teaching in sequential models. We discuss these directions further in \secref{sec:discussion}.


%
%
%




\section{The Teaching Model with Preference Functions}\label{sec:model}
\paragraph{The teaching domain.}
Let $\Instances$, $\Clabels$ be a ground set of unlabeled instances and the set of labels. 
Let $\Hypotheses$ be a finite class of hypotheses; each element $\hypothesis\in \Hypotheses$ is a function $\hypothesis: \Instances \rightarrow \Clabels$. Here, we only consider boolean functions and hence $\Clabels = \{0,1\}$. 
In our model, $\Instances$, $\Hypotheses$, and $\Clabels$ are known to both the teacher and the learner. There is a target hypothesis $\hstar\in \Hypotheses$ that is known to the teacher, but not the learner.  Let $\Examples \subseteq \Instances \times \Clabels$ be the ground set of labeled examples. Each element $\example = (\instance_\example,\clabel_\example) \in \Examples$ represents an example where the label is given by the target hypothesis $\hstar$, i.e., $\clabel_\example = \hstar(\instance_\example)$.
%
%
For any $\examples \subseteq \Examples$, the \emph{version space} induced by $\examples$ is the subset of hypotheses $\Hypotheses(\examples) \subseteq \Hypotheses$ that are consistent with the labels of all the examples, i.e., $\Hypotheses(\examples):= \{\hypothesis\in \Hypotheses \mid \forall \example = (\instance_\example, \clabel_\example) \in \examples, h(\instance_\example) = \clabel_\example\}$.

\paragraph{Learner's preference function.}
\looseness -1 We consider a generic model of the learner that captures our assumptions about how the learner adapts her hypothesis based on the examples received from the teacher. A key ingredient of this model is the learner's \emph{preference function} over the hypotheses. 
The learner, based on the information encoded in the inputs of preference function---which include the current hypothesis and the current version space---will choose one hypothesis in $\Hypotheses$. Our model of the learner strictly generalizes the local preference-based model considered in \cite{chen2018understanding}, where the learner's preference was only encoded by her current hypothesis.
Formally, we consider preference functions of the form $\sigma: \Hypotheses \times 2^\Hypotheses \times \Hypotheses \rightarrow \reals$.
For any two hypotheses $\hypothesis', \hypothesis''$, we say that the learner prefers $\hypothesis'$ to $\hypothesis''$ based on the current hypothesis $\hypothesis$ and version space $\hypotheses \subseteq \Hypotheses$, iff $\sigma(\hypothesis' ; \hypotheses, \hypothesis) < \sigma(\hypothesis'';\hypotheses, \hypothesis)$. If $\sigma(\hypothesis';\hypotheses, \hypothesis) = \sigma(\hypothesis'';\hypotheses, \hypothesis)$, then the learner could pick either one of these two. 
Note that many existing models of the learner could be viewed as special cases of such preference-based model with specific preference functions---e.g., when $\sigma(\hypothesis';\hypotheses, \hypothesis)$ is constant, the preference-based model reduces to the classical worst-case version space model as studied by \citet{goldman1995complexity}. We will discuss these special cases in detail in \secref{sec:non-seq-family-pref}. 

\paragraph{Interaction protocol and teaching objective.}
The teacher's goal is to steer the learner towards the target hypothesis $\hstar$ by providing a sequence of examples. The learner starts with an initial hypothesis $\hinit \in \Hypotheses$ before receiving any examples from the teacher. 
At time step $t$, the teacher selects an example $\example_t \in \Examples$, and the learner makes a transition from the current hypothesis to the next hypothesis.
Let us denote the examples received by the learner up to (and including) time step $t$ via $\examples_{t}$. Further, we denote the learner's version space at time step $t$ as $\hypotheses_t=\Hypotheses(\examples_{t})$, and the learner's hypothesis before receiving $\example_t$ as $\hypothesis_{t-1}$. The learner picks the next hypothesis based on the current hypothesis $\hypothesis_{t-1}$, version space $\hypotheses_{t}$, and preference function $\sigma$:
  \begin{align}
    \hypothesis_{t} \in \argmin_{\hypothesis'\in \hypotheses_{t}} \sigma(\hypothesis'; \hypotheses_t, \hypothesis_{t-1}).
    \label{eq.learners-jump}
  \end{align}

Upon updating the hypothesis $\hypothesis_t$, the learner sends $\hypothesis_t$ as feedback to the teacher. Teaching finishes here if the learner's updated hypothesis $\hypothesis_t$ equals $\hstar$. 
%
We summarize the interaction in Protocol~\ref{alg:interaction}. 
It is important to note that in our teaching model, the teacher and the learner use the same preference function. This assumption of shared knowledge of the preference function is also considered in existing teaching models for both the batch settings  (e.g., \cite{zilles2011models,gao2017preference}) and the sequential settings  (e.g., \cite{chen2018understanding}).




\makeatletter
\renewcommand{\ALG@name}{Protocol}
\makeatother

\begin{algorithm}[h!]
  \caption{Interaction protocol between the teacher and the learner}\label{alg:interaction}
    \begin{algorithmic}[1]
        \State learner's initial version space is $\hypotheses_{0} = \Hypotheses$ and learner starts from an initial hypothesis $\hinit \in \Hypotheses$
        \For{$t=1, 2, 3, \ldots$}
            \State learner receives $\example_t = (\instance_t, \clabel_t)$; updates  $\hypotheses_t=\hypotheses_{t-1} \cap \Hypotheses(\{\example_t\})$; picks $\hypothesis_t$ per Eq.~\eqref{eq.learners-jump};
            \State teacher receives $\hypothesis_t$ as feedback from the learner;
            \State \textbf{if \ } $\hypothesis_t = \hstar$ \textbf{then \ } teaching process terminates
        \EndFor
	\end{algorithmic}
\end{algorithm}




\section{The Complexity of Teaching with Preference Functions}\label{sec:complexity}
In this section, we formally state the notion of \emph{worst-case complexity} for teaching a preference-based learner. We first define the teaching complexity for a learner with a preference function from a given family. Then, we introduce an important family, namely \emph{collusion-free} preference functions, as the main focus of this paper.
\subsection{Teaching Dimension for a Family of Preference Functions}
\paragraph{Fixed preference function.}
Our objective is to design teaching algorithms that can steer the learner towards the target hypothesis in a minimal number of time steps. We study the \emph{worst-case} number of steps needed, as is common when measuring information complexity of teaching~\cite{goldman1995complexity,zilles2011models,gao2017preference,zhu2018optimal}. 
Fix the ground set of instances $\Instances$ and the learner's preference $\sigma$. For any version space $\hypotheses\subseteq\Hypotheses$, the worst-case optimal cost for steering the learner from $\hypothesis$ to $\hstar$ is characterized by
\vspace{1.5mm}
\begin{align*}
  \Val_{\pref}(\hypotheses, \hypothesis, \hstar) =
  \begin{cases}
    1, & 
    \exists z, \text{~s.t.~}\Candidate_{\sigma}(H,h,z) = \{\hstar\}\\
    1 + \min\limits_{\example} \max\limits_{\hypothesis'' \in \Candidate_\sigma(\hypotheses, \hypothesis, \example) 
    } \Val_{\pref}(\hypotheses \cap \Hypotheses(\{\example\}), \hypothesis'', \hstar) ,  &\text{otherwise}
  \end{cases}
\vspace{1.5mm}
\end{align*}
where 
    $\Candidate_\sigma(\hypotheses, \hypothesis, \example) =\argmin_{\hypothesis' \in \hypotheses \cap \Hypotheses(\{\example\})} \sigma(\hypothesis'; \hypotheses \cap \Hypotheses(\{\example\}), \hypothesis)$
denotes the set of candidate hypotheses most preferred by the learner. 
Note that our definition of teaching dimension is similar in spirit to the local preference-based teaching complexity defined by \citet{chen2018understanding}. We shall see in the next section, this complexity measure in fact reduces to existing notions of teaching complexity for specific families of preference functions.

Given a preference function $\pref$ and the learner's initial hypothesis $\hinit$, the teaching dimension w.r.t. $\pref$ is defined as the worst-case optimal cost for teaching any target $\hstar$:
\vspace{1.5mm}
\begin{align}\label{eq:sigmatd_fixedsigma}
    \TD_{\Instances,\Hypotheses,\hinit}(\pref) = \max_{\hstar} \Val_{\pref}(\Hypotheses, \hinit, \hstar).
\vspace{1.5mm}
\end{align}
\paragraph{Family of preference functions.}
In this paper, we will investigate several families of preference functions (as illustrated in \figref{fig:venndiagram_seq}).
For a family of preference functions $\Sigma$, 
we define the teaching dimension w.r.t the family $\Sigma$ as the teaching dimension w.r.t. the \emph{best} $\pref$ in that family:
\begin{align}\label{eq:sigmatd}
        \Sigma_{}{\text -}\TD_{\Instances,\Hypotheses,\hinit} =  \min_{\sigma\in \Sigma} \TD_{\Instances,\Hypotheses,\hinit}(\pref).
\end{align}

\subsection{Collusion-free Preference Functions}\label{sec:complexity:cf}
\looseness-1An important consideration when designing teaching models is to ensure that the teacher and the learner are ``collusion-free'', i.e., they are not allowed to collude or use some ``coding-trick'' to achieve arbitrarily low teaching complexity. 
%
A well-accepted notion of collusion-freeness in the batch setting is one proposed by \cite{goldman1996teaching} (also see \cite{angluin1997teachers,ott1999avoiding,pmlr-v98-kirkpatrick19a}). Intuitively, it captures the idea that a learner conjecturing hypothesis $\hypothesis$ will not change her mind when given additional information consistent with $\hypothesis$. 
In comparison to batch models, the notion of collusion-free teaching in the sequential models is not well understood. 
%
We introduce a novel notion of  collusion-freeness for the sequential setting, which captures the following idea: if $\hypothesis$ is the only hypothesis in the most preferred set defined by $\sigma$, then the learner 
    will always stay at $\hypothesis$ as long as additional information received by the learner is consistent with $\hypothesis$. We formalize this notion in the definition below. Note that for $\sigma$ functions corresponding to batch models (see \secref{sec:non-seq-family-pref}), \defref{def:seq-col-free} reduces to the collusion-free definition of \cite{goldman1996teaching}.

\begin{definitionNew}[Collusion-free preference]\label{def:seq-col-free}
%
Consider a time $t$ where the learner’s current hypothesis is $\hypothesis_{t-1}$ and version space is $\hypotheses_{t}$ (see Protocol \ref{alg:interaction}). Further assume that the learner’s preferred hypothesis for time $t$ is uniquely given by $\argmin_{\hypothesis' \in \hypotheses_t} \sigma(\hypothesis'; \hypotheses_t, \hypothesis_{t-1})=\{\hat{\hypothesis}\}$. 
Let $S$ be additional examples provided by an adversary from time $t$ onwards. We call a preference function collusion-free, if for any $S$ consistent with $\hat{\hypothesis}$, it holds that $\argmin_{\hypothesis' \in \hypotheses_t \cap \Hypotheses(S)} \sigma(\hypothesis'; \hypotheses_t \cap \Hypotheses(S), \hat{\hypothesis}) = \{\hat{\hypothesis}\}$.

%
%

\end{definitionNew}

In this paper, we study preference functions that are collusion-free. In particular, we use $\Sigma_{\CF}$ to denote the set of preference functions that induce collusion-free teaching: 
\begin{align*}
    \Sigma_{\CF} = \{\sigma\mid \sigma \text{ is collusion-free}\}. 
\end{align*}

Below, we provide two concrete examples for the collusion-free preference function families: 
\begin{enumerate}[(i)]
    \item \looseness-1``constant'' preference function family $\SigmaConst$ consists of functions where all the hypotheses in the current version space are preferred equally. Formally, this family is given by
    	$$\SigmaConst = \{\sigma \in  \Sigma_{\CF} \mid \exists c \in \reals, \text{~s.t.~} \forall \hypothesis', \hypotheses, \hypothesis, \sigma(\hypothesis'; \hypotheses, \hypothesis) = c\}.$$   	    
    Now, let us see why this family is collusion-free as per Definition~\ref{def:seq-col-free}. For any $\sigma \in \SigmaConst$, the only scenario where the learner's preferred hypothesis for time $t$ is given by $\argmin_{\hypothesis' \in \hypotheses_t} \sigma(\hypothesis'; \hypotheses_t, \hypothesis_{t-1})=\{\hat{\hypothesis}\}$, is when $\hypotheses_{t} = \{\hat{\hypothesis}\}$, i.e., there are no hypotheses left in the version space $\hypotheses_t$ other than $\hat{\hypothesis}$. Afterwards, by providing more examples consistent with $\hat{\hypothesis}$, the learner will stay on $\hat{\hypothesis}$. Thus, $\SigmaConst$ is a family of collusion-free preference functions. We will further discuss this family in \secref{sec:non-seq-family-pref}. 
    \item \looseness-1``win-stay-lose-shift'' preference function family $\SigmaWinStayLoseShift$ consists of functions where the learner prefers her current hypothesis as long as it stays consistent with the observed examples~\cite{bonawitz2014win,chen2018understanding}.  Formally, this family is given by 
    	$$\SigmaWinStayLoseShift = \{\sigma \in  \Sigma_{\CF} \mid \forall \hypothesis', \hypotheses, \hypothesis, \argmin_{\hypothesis' \in \Hypotheses} \pref(\hypothesis' ; \hypotheses, \hypothesis) = \{\hypothesis\} \}.$$
    It is easy to see why this family is collusion-free as per Definition~\ref{def:seq-col-free}. As per assumption in the definition, the learner will pick hypothesis $\hypothesis_t := \hat{\hypothesis}$ at time $t$. Afterwards, as long as the learner receives examples consistent with $\hat{\hypothesis}$, the desired condition in the definition holds for this family of functions.  It is important to note that $\sigma \in \SigmaWinStayLoseShift$ can depend on both the current hypothesis $\hypothesis_{t-1}$ and the version space $\hypotheses_t$, and the preferences play a role primarily when the current hypothesis becomes inconsistent. We further discuss this family in \secref{sec:lvs_linvcd}~and~\secref{sec:newresults:sub-add}.
\end{enumerate}

\section{Preference-based Batch Models} \label{sec:non-seq-family-pref}
In this section, we focus on preference functions that do not depend on the learner's current hypothesis. 
When teaching a learner with such a preference function, the teacher can construct an optimal sequence of examples in a batch before teaching begins. We study the complexity of teaching for different preference-based batch models and draw connections with well-established notions of teaching complexity in the literature.
\subsection{Families of Preference Functions}\label{sec:non-seq-family-pref.families}
We consider three families of preference functions which do not depend on the learner's current hypothesis. 
The first one, as already introduced in the previous section, is the family of constant preference functions $\SigmaConst$ given by:
\begin{align*}
    \SigmaConst =  \{\sigma \in  \Sigma_{\CF} \mid \exists c \in \reals, \text{~s.t.~} \forall \hypothesis', \hypotheses, \hypothesis, \sigma(\hypothesis'; \hypotheses, \hypothesis) = c\}.
\end{align*}
The second family, denoted by $\SigmaGlobal$, corresponds to the preference functions that do not depend on the learner's current hypothesis and version space. In other words, the preference functions capture some \emph{global} preference ordering of the hypotheses:
\begin{align*}
    \SigmaGlobal =  \{\sigma \in \Sigma_{\CF}\mid \exists~\distfun: \Hypotheses \rightarrow \reals, \text{~s.t.~}\forall \hypothesis', \hypotheses, \hypothesis,~\sigma(\hypothesis'; \hypotheses, \hypothesis) = \distfun(\hypothesis')\}.
\end{align*}
The third family, denoted by $\SigmaGvs$, corresponds to the preference functions that depend on the learner's version space, but do not depend on the learner's current hypothesis:
\begin{align*}
    \SigmaGvs =  \{\sigma \in \Sigma_{\CF} \mid \exists~\distfun: \Hypotheses\times 2^\Hypotheses \rightarrow \reals, \text{~s.t.~} \forall \hypothesis', \hypotheses, \hypothesis, \sigma(\hypothesis'; \hypotheses, \hypothesis) = \distfun(\hypothesis', \hypotheses)\}.
\end{align*}

\begin{figure}[!t]
\centering
\scalebox{.85}{
\begin{tikzpicture}
  \begin{scope}
    \clip \gvsEllipse;
    \fill[yellow!20] \localEllipse;
  \end{scope}
  \draw [fill=white!30!cyan] \constCircle node[text={rgb:green,3;blue,3}, right=.02cm] {\textbf{$\SigmaConst$}};
  \draw (3,0) node[text={rgb:red,10;green,10;blue,0},above=0.85cm,right=0.5cm] {\textbf{$\SigmaGlobal$}};
  \draw [red,line width=0.3mm,dashed] \gvsEllipse
  node[text={rgb:red,5;green,0;blue,0},above=.5cm,right=.8cm] {\textbf{$\SigmaGvs$}};
\end{tikzpicture}}
\caption{Batch models.} \label{fig:venndiagram_batch}
\end{figure}
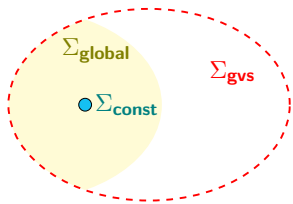

\figref{fig:venndiagram_batch} illustrates the relationship between these preference families. In \tableref{tab:batch_model_example_h2}, we provide a hypothesis class, as well as \emph{best} preference functions from the aforementioned three preference families (i.e., functions achieving minimal teaching complexity as per Eq.~\eqref{eq:sigmatd}). Specifically, the preference functions inducing the optimal teaching sequences/sets in \tableref{tab:h2} are given in Tables~\ref{tab:h2_pref_const}, \ref{tab:h2_pref_glbl}, and \ref{tab:h2_pref_gvs}. With these preference functions, one can derive the teaching complexity for this hypothesis class as $\SigmaConstTD=3$, $\SigmaGlobalTD=2$, and $\SigmaGvsTD=1$. Furthermore, in \secref{sec:seq-family-pref} we discuss Warmuth hypothesis class \cite{doliwa2014recursive} where $\SigmaConstTD=3$, $\SigmaGlobalTD=3$, and $\SigmaGvsTD=2$.

\begin{table}[h!]
    \centering
    \begin{subtable}[t]{\textwidth}
        \centering
        \begin{tabular}{c|cccccc||c|c|c}
        \backslashbox{$\Hypotheses$}{$\Instances$}
        & $\instance_1$ & $\instance_2$ & $\instance_3$ & $\instance_4$ & $\instance_5$ & $\instance_6$ & $\TeachingSeq_{\textsf{const}}$ & $\TeachingSeq_{\textsf{\glbl}}$ & $\TeachingSeq_{\gvs}$\\ 
        \hline
        $\hypothesis_1$ & 1 & 0 & 0 & 0 & 0 & 1 & $\paren{\instance_1, \instance_6}$ & $\paren{\instance_1, \instance_6}$  & $\paren{\instance_1}$\\
        $\hypothesis_2$ & 0 & 1 & 0 & 0 & 0 & 1 & $\paren{\instance_2, \instance_6}$ & $\paren{\instance_2, \instance_6}$ & $\paren{\instance_2}$\\
        $\hypothesis_3$ & 1 & 1 & 1 & 0 & 0 & 0 & $\paren{\instance_3, \instance_4, \instance_5}$ & $\paren{\instance_1}$ & $\paren{\instance_3}$\\
        $\hypothesis_4$ & 1 & 1 & 1 & 1 & 0 & 0 & $\paren{\instance_4, \instance_5}$ & $\paren{\instance_4, \instance_5}$ & $\paren{\instance_4}$\\
        $\hypothesis_5$ & 1 & 1 & 1 & 0 & 1 & 0 & $\paren{\instance_4, \instance_5}$ & $\paren{\instance_4, \instance_5}$ & $\paren{\instance_5}$\\
        $\hypothesis_6$ & 0 & 0 & 0 & 1 & 1 & 1 & $\paren{\instance_4, \instance_5}$ & $\paren{\instance_4, \instance_5}$ & $\paren{\instance_6}$
        \end{tabular}
        \caption{A hypothesis class and optimal teaching sequences/sets ($\TeachingSeq_{\textsf{const}}$, $\TeachingSeq_{\textsf{\glbl}}$, and $\TeachingSeq_{\gvs}$)  under different families of preference functions.}\label{tab:h2}
    \end{subtable}\\\vspace{3mm}
    \begin{subtable}[t]{.45\textwidth}
        \begin{tabular}{c|cccccc}
    $\hypothesis'$ & $\hypothesis_1$ & $\hypothesis_2$ & $\hypothesis_3$ & $\hypothesis_4$ & $\hypothesis_5$ & $\hypothesis_6$ \\\hline
        $\sigma_{\textsf{const}}(\hypothesis'; \cdot, \cdot)$ & 0 & 0 & 0 & 0 & 0 & 0\\
    \end{tabular}
    \caption{Preference function $\sigma_{\textsf{const}} \in \SigmaConst$} \label{tab:h2_pref_const}
    \end{subtable}
    \quad \qquad
    \begin{subtable}[t]{.45\textwidth}
        \begin{tabular}{c|cccccc}
    $\hypothesis'$ & $\hypothesis_1$ & $\hypothesis_2$ & $\hypothesis_3$ & $\hypothesis_4$ & $\hypothesis_5$ & $\hypothesis_6$ \\\hline
         $\sigma_{\textsf{global}}(\hypothesis'; \cdot, \cdot)$ & 1 & 1 & 0 & 1 & 1 & 1\\
    \end{tabular}
    \caption{Preference function $\sigma_{\glbl} \in \SigmaGlobal$}\label{tab:h2_pref_glbl}
    \end{subtable}
    \\\vspace{3mm}
    \begin{subtable}[t]{\textwidth}
    \begin{tabular}{c|cccccc}
    $\hypothesis'$ & $\hypothesis_1$ & $\hypothesis_2$ & $\hypothesis_3$ & $\hypothesis_4$ & $\hypothesis_5$ & $\hypothesis_6$ \\\hline
         \multirow{5}{*}{$\hypotheses$} & 
         $\{\hypothesis_1, \hypothesis_3, \hypothesis_4, \hypothesis_5\}$ & 
         $\{\hypothesis_2, \hypothesis_3, \hypothesis_4, \hypothesis_5\}$ &
         $\{\hypothesis_3, \hypothesis_4, \hypothesis_5\}$ & 
         $\{\hypothesis_4, \hypothesis_6\}$ & 
         $\{\hypothesis_5, \hypothesis_6\}$ & 
         $\{\hypothesis_1, \hypothesis_2, \hypothesis_6\}$ 
         \\ 
        & 
        $\{\hypothesis_1, \hypothesis_3, \hypothesis_4\}$ & 
        $\{\hypothesis_2, \hypothesis_3, \hypothesis_4\}$ &
         $\{\hypothesis_3, \hypothesis_4\}$ & 
         $\{\hypothesis_4\}$ & 
         $\{\hypothesis_5\}$ & 
         $\{\hypothesis_1, \hypothesis_6\}$ 
         \\
         & 
        $\{\hypothesis_1, \hypothesis_3, \hypothesis_5\}$ & 
        $\{\hypothesis_2, \hypothesis_3, \hypothesis_5\}$ &
         $\{\hypothesis_3, \hypothesis_5\}$ & 
          & 
          & 
         $\{\hypothesis_2, \hypothesis_6\}$ 
         \\
          & 
        $\{\hypothesis_1\}$ & 
        $\{\hypothesis_2\}$ &
         $\{\hypothesis_3\}$ & 
          & 
          & 
         $\{\hypothesis_6\}$ 
         \\\hline
         $\sigma_\gvs(\hypothesis'; \hypotheses, \cdot)$ & 0 & 0 & 0 & 0 & 0 & 0\\
    \end{tabular}
    \caption{Preference function $\sigma_\gvs \in \SigmaGvs$. For all other $\hypothesis', \hypotheses$ pairs not specified in the table, $\sigma(\hypothesis'; \hypotheses, \cdot) = 1$.}\label{tab:h2_pref_gvs}
    \end{subtable}
    \caption{A hypothesis class where $\SigmaConstTD=3$, $\SigmaGlobalTD=2$, and $\SigmaGvsTD=1$. The preference functions inducing optimal teaching sequences/sets in \tableref{tab:h2} (denoted by $\TeachingSeq_{\textsf{const}}$, $\TeachingSeq_{\textsf{\glbl}}$, and $\TeachingSeq_{\gvs}$) are specified in Tables~\ref{tab:h2_pref_const}, \ref{tab:h2_pref_glbl}, and \ref{tab:h2_pref_gvs}.
    }
    \label{tab:batch_model_example_h2}
\end{table}

\vspace{-1mm}
\subsection{Complexity Results}\label{sec:batch-complexity-finite}
We first provide several definitions, including the formal definition of the VC dimension and several existing notions of teaching dimension. The VC dimension captures the complexity notion for PAC learnability~\cite{blumer1989learnability} of a hypothesis class. Informally, it measures the capacity of a hypothesis class, i.e., characterizing how complicated and expressive a hypothesis class is in labeling the instances; we provide a rigorous definition below.


%
\begin{definitionNew}[Vapnik–Chervonenkis dimension \cite{vapnik1971uniform}]
The VC dimension for $H \subseteq \Hypotheses$ w.r.t. a fixed set of unlabeled instances $X \subseteq \Instances$, denoted by $\VCD(H,X)$, is the cardinality of the largest set of points $X' \subseteq X$ that are ``shattered''. Formally, let $\hypotheses_{|\instances} = \{(\hypothesis(\instance_1), ..., \hypothesis(\instance_n)) \ | \ \forall \hypothesis \in \hypotheses\}$ denote all possible patterns of $\hypotheses$ on $\instances$. Then $\VCD(H, X)=\max{|X'|}, \text{~s.t.~} X' \subseteq X \text{~and~} |H_{|X'}| = 2^{|X'|}$.\footnote{In the classical definition of $\VCD$, only the first argument $H$ is present; the second argument $X$ is omitted and is by default the ground set of unlabeled instances $\Instances$.} 
\end{definitionNew}

The concept of teaching dimension was
first introduced by \citet{goldman1995complexity}, measuring the minimum number of labeled instances a teacher must reveal to uniquely identify any target hypothesis; a formal definition is provided below.
\begin{definitionNew}[Teaching dimension \cite{goldman1995complexity}]\label{def:classicTD}
For any hypothesis $\hypothesis\in \Hypotheses$, we call a set of instances $\Teacher(\hypothesis) \subseteq \Instances$ a teaching set for $\hypothesis$, if it can uniquely identify $\hypothesis \in \Hypotheses$.  The teaching dimension for $\Hypotheses$, denoted by $\worstcase\textnormal{-}\TD(\Hypotheses)$, is the maximum size of the minimum teaching set for any $\hypothesis\in \Hypotheses$, i.e., $\worstcase\textnormal{-}\TD(\Hypotheses) = \max_{\hypothesis\in\Hypotheses} \min |\Teacher(\hypothesis)|$. Also, we refer to the teaching complexity of a fixed hypothesis $\hypothesis$ as $\worstcase\textnormal{-}\TD(\hypothesis, \Hypotheses) = \min |\Teacher(\hypothesis)|$.
\end{definitionNew}

As noted by \citet{zilles2008teaching}, the teaching dimension of \cite{goldman1995complexity} does not always capture the intuitive idea of cooperation between teacher and learner. The authors then introduced a model of cooperative teaching that resulted in the complexity notion of recursive teaching dimension, as defined below.
\begin{definitionNew}[Recursive teaching dimension 
\cite{zilles2008teaching,zilles2011models}]\label{def:rtd}
The recursive teaching dimension (\RTD) of $\Hypotheses$, denoted by $\RTD(\Hypotheses)$, is the
smallest number $k$, such that one can find an ordered sequence of hypotheses in $\Hypotheses$, denoted by $(\hypothesis_1, \dots, \hypothesis_i, \dots, \hypothesis_{|\Hypotheses|})$, where every hypothesis $\hypothesis_i$ has a teaching set of size no more than $k$ to be distinguished from the hypotheses in the remaining sequence.
\end{definitionNew}
%
%
%

In a recent work of  \cite{pmlr-v98-kirkpatrick19a}, a new notion of teaching complexity, called no-clash teaching dimension or \NCTD, was introduced (see definition below). Importantly, \NCTD is the optimal teaching complexity among teaching models in the batch setting that satisfy the collusion-free property of \cite{goldman1996teaching}. 
\begin{definitionNew}[No-clash teaching dimension \cite{pmlr-v98-kirkpatrick19a}]
Let $\Hypotheses$ be a hypothesis class and $\Teacher: \Hypotheses \rightarrow 2^\Instances$ be a ``teacher mapping''
on $\Hypotheses$, i.e., mapping a given hypothesis to a teaching set.\footnote{We refer the reader to the paper \cite{pmlr-v98-kirkpatrick19a} for a more formal description of ``teacher mapping''.}
We say that $\Teacher$ is non-clashing on $\Hypotheses$ iff there are no two distinct $\hypothesis,\hypothesis'\in\Hypotheses$ such that $\Teacher(\hypothesis)$ is consistent with $\hypothesis'$ and $\Teacher(\hypothesis')$ is consistent with $\hypothesis$.
The no-clash teaching dimension of $\Hypotheses$, denoted by $\NCTD(\Hypotheses)$, is defined as $\NCTD(\Hypotheses) = \min_{\Teacher \text{~is non-clashing}} \{\max_{h\in \Hypotheses} |\Teacher(h)|\}$.
\end{definitionNew}

We show in the following, that the teaching dimension $\Sigma{\text-}\TD$ in Eq.~\eqref{eq:sigmatd} unifies the above definitions of \TD's for batch models.
\begin{theorem}[Reduction to existing notions of TD's] \label{theorem:equivelence-results} Fix $\Instances,\Hypotheses,\hinit$. The teaching complexity for the three families reduces to the existing notions of teaching dimensions:
\begin{enumerate}\denselist
\item $\SigmaConstTD_{\Instances,\Hypotheses,\hinit} = \worstcase\textnormal{-}\TD(\Hypotheses)$
\item $\SigmaGlobalTD_{\Instances,\Hypotheses,\hinit} = \RTD(\Hypotheses) = O(\VCD(\Hypotheses, \Instances)^2) $
\item $\SigmaGvsTD_{\Instances,\Hypotheses,\hinit} = \NCTD(\Hypotheses) = O(\VCD(\Hypotheses, \Instances)^2)$
\end{enumerate}
\end{theorem}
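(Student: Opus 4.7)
The plan is to establish the three identities in order, showing that each family of preference functions exactly corresponds to a well-studied batch teaching model, and then invoking known $O(\VCD^2)$ upper bounds for $\RTD$ and $\NCTD$.

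For Part 1, I would observe that when $\sigma$ is constant, the candidate set $\Candidate_\sigma(\hypotheses_t, \hypothesis_{t-1}, \example_t)$ coincides with the entire updated version space $\hypotheses_t \cap \Hypotheses(\{\example_t\})$. Consequently the max in the recursion for $\Val_\sigma$ ranges over all consistent hypotheses, so the teacher has no control over which hypothesis the learner picks until the version space is reduced to the singleton $\{\hstar\}$. The minimum number of examples to reduce $\Hypotheses$ to $\{\hstar\}$ is exactly the classical teaching set size $\worstcase\textnormal{-}\TD(\hstar,\Hypotheses)$, and taking the max over $\hstar$ yields $\worstcase\textnormal{-}\TD(\Hypotheses)$. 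Since every $\sigma \in \SigmaConst$ gives the same value, the min over $\sigma$ is immediate.

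For Part 2, a function $\sigma(\hypothesis';\hypotheses,\hypothesis)=\distfun(\hypothesis')$ induces a total (pre)order on $\Hypotheses$ via the values of $\distfun$. The learner always outputs the $\distfun$-minimal hypothesis in her current version space, so to teach $\hstar$ the teacher must produce examples ruling out every $\hypothesis$ with $\distfun(\hypothesis) < \distfun(\hstar)$. This matches exactly the preference-based teaching model of \citet{gao2017preference}, so $\SigmaGlobalTD = \PBTD(\Hypotheses)$. I would then argue $\PBTD = \RTD$: given an RTD-witnessing sequence $(\hypothesis_1,\ldots,\hypothesis_{|\Hypotheses|})$, set $\distfun(\hypothesis_i) = i$; each $\hypothesis_i$ has a teaching set of size $\leq \RTD$ that distinguishes it from $\{\hypothesis_{i+1},\ldots\}$, which is precisely its remaining version-space competitors. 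The reverse direction extracts an ordering from $\distfun$ by repeatedly peeling hypotheses of smallest $\distfun$-value. Finally, invoke the known bound $\RTD(\Hypotheses) = O(\VCD(\Hypotheses,\Instances)^2)$ of \citet{DBLP:journals/corr/HuWLW17}. A subtle point to handle is that the sequential protocol permits feedback, but because $\distfun$ is a function only of $\hypothesis'$, the learner's intermediate choice is predictable from the version space alone, so the batch and sequential costs coincide.

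For Part 3, a function $\sigma(\hypothesis';\hypotheses,\hypothesis)=\distfun(\hypothesis',\hypotheses)$ lets the preference vary with the version space. Given a non-clashing teacher mapping $\Teacher:\Hypotheses\to 2^\Instances$ achieving $\NCTD$, I would define $\distfun(\hypothesis',\hypotheses)=0$ if $\hypotheses = \Hypotheses(\Teacher(\hypothesis'))$ and $\hypothesis' \in \hypotheses$, and $\distfun(\hypothesis',\hypotheses)=1$ otherwise; the non-clashing property guarantees that this $\distfun$ is well-defined (no two distinct target hypotheses both get value $0$ on the same $\hypotheses$) and collusion-free in the sense of \defref{def:seq-col-free}. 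The teacher simply presents $\Teacher(\hstar)$, forcing the learner to jump to $\hstar$ in one batch of $|\Teacher(\hstar)| \leq \NCTD$ examples. Conversely, from any $\sigma \in \SigmaGvs \cap \Sigma_{\CF}$, define $\Teacher(\hypothesis^\star)$ to be the minimal teaching sequence produced by the optimal teacher against $\sigma$; collusion-freeness rules out two hypotheses whose teaching sets are each consistent with the other hypothesis, yielding the non-clashing property. Combine with the bound $\NCTD(\Hypotheses) = O(\VCD(\Hypotheses,\Instances)^2)$ from \citet{pmlr-v98-kirkpatrick19a}.

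The main obstacle will be Part 3: the batch $\NCTD$ notion compares to a sequential complexity $\SigmaGvsTD$, and one must argue that adaptivity (seeing learner feedback) confers no advantage beyond what is already achievable by committing to $\Teacher(\hstar)$ in one shot. The collusion-freeness of $\SigmaGvs$ preference functions is precisely what enables this reduction, since it ensures that once the learner's argmin becomes $\{\hstar\}$ on some version space, further consistent examples cannot move her away—so the sequential teacher's optimal strategy is equivalent to presenting a batch that makes $\hstar$ the unique argmin, which is the non-clashing teaching set condition.
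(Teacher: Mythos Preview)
Your plan for Parts 1 and 2 and for the lower bound $\SigmaGvsTD \geq \NCTD$ in Part 3 is essentially what the paper does (the paper defers Parts 1--2 to \cite{chen2018understanding}, and for the lower bound proves a lemma that any successful teacher for a collusion-free $\sigma\in\SigmaGvs$ must be non-clashing, using exactly the clash-contradiction you sketch).

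There is, however, a genuine gap in your construction for the upper bound $\SigmaGvsTD \leq \NCTD$. You set $\distfun(\hypothesis',\hypotheses)=0$ only when $\hypotheses=\Hypotheses(\Teacher(\hypothesis'))$ \emph{exactly}. This $\distfun$ is not collusion-free in the sense of \defref{def:seq-col-free}, so it does not belong to $\SigmaGvs\subseteq\Sigma_{\CF}$. Concretely: after the teacher presents $\Teacher(\hstar)$, the version space is $\hypotheses_t=\Hypotheses(\Teacher(\hstar))$ and the argmin is $\{\hstar\}$, as you want. But if an adversary now adds further examples $S$ consistent with $\hstar$ that strictly shrink the version space to $\hypotheses'=\hypotheses_t\cap\Hypotheses(S)\subsetneq\hypotheses_t$, then $\distfun(\hstar,\hypotheses')=1$ (since $\hypotheses'\neq\Hypotheses(\Teacher(\hstar))$), and by your non-clashing argument every other $\hypothesis''\in\hypotheses'$ also has $\distfun(\hypothesis'',\hypotheses')=1$. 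Hence the argmin over $\hypotheses'$ is all of $\hypotheses'$, not $\{\hstar\}$, violating collusion-freeness whenever $|\hypotheses'|>1$.

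The paper's fix is to set $\sigma(\hypothesis;\Hypotheses(S),\cdot)=0$ for \emph{every} $S\supseteq\Teacher(\hypothesis)$ that is consistent with $\hypothesis$---equivalently, $\distfun(\hypothesis,\hypotheses)=0$ whenever $\hypothesis\in\hypotheses\subseteq\Hypotheses(\Teacher(\hypothesis))$. This makes $\hstar$ retain value $0$ under any further consistent shrinkage, and the non-clashing property still guarantees uniqueness of the $0$-valued hypothesis in each such $\hypotheses$. Your argument goes through once you enlarge the set of version spaces on which $\distfun$ takes the value $0$ in this way.
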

Our teaching model strictly generalizes the local-preference based model of \cite{chen2018understanding}, which reduces to  the worst-case model when $\sigma \in \SigmaConst$~\cite{goldman1995complexity} and the recursive or global preference-based model when $\pref \in \SigmaGlobal$~\cite{zilles2008teaching,zilles2011models,gao2017preference,DBLP:journals/corr/HuWLW17}. Hence we get $\SigmaConstTD_{\Instances,\Hypotheses,\hinit} = \worstcase\textnormal{-}\TD(\Hypotheses)$ and $\SigmaGlobalTD_{\Instances,\Hypotheses,\hinit} = \RTD(\Hypotheses)$. 
To establish the equivalence between $\SigmaGvsTD_{\Instances,\Hypotheses,\hinit}$ and $\NCTD(\Hypotheses)$, it suffices to show that for any $\Instances,\Hypotheses,\hinit$, the following holds: (i) $\SigmaGvsTD_{\Instances,\Hypotheses,\hinit} \geq \NCTD(\Hypotheses)$, and (ii) $\SigmaGvsTD_{\Instances,\Hypotheses,\hinit} \leq \NCTD(\Hypotheses)$.
\iftoggle{longversion}
{The full proof is provided in Appendix~\ref{sec.appendix.batch-models}.}
{The full proof is provided in Appendix~A.2 of the supplementary.}
\subsection{Complexity Results: Extension to Infinite Domain}\label{sec:batch-complexity-infinite}
In this section, we extend our main results on the teaching complexity for batch models (\thmref{theorem:equivelence-results}) to the infinite domain. This allows us to additionally establish our teaching complexity results as a generalization to the preference-based teaching dimension (\PBTD) \cite{gao2017preference}. Note that \RTD is equivalent to \PBTD for a finite domain. We introduce the necessary notations and results here, and defer a more detailed presentation to Appendix~\ref{sec.appendix.batch-models-infinite}.


We begin by introducing the notation for an infinite set of instances as $\Instances^c$. 
Let $\Hypotheses^c$ to be an infinite class of hypotheses, where each $\hypothesis \in \Hypotheses^c$ is a function $\hypothesis : \Instances^c \to \Clabels$.  The preference functions for the infinite domain are given by $\sigma^c: \Hypotheses^c \times 2^{\Hypotheses^c} \times \Hypotheses^c \rightarrow \reals$. Similar to \defref{def:seq-col-free}, we consider the corresponding notion of collusion-free $\sigma^c$ for the infinite domain. Using this property, we use $\Sigma_{\CF}^c$ to denote the set of preference functions in the infinite domain that induces collusion-free teaching:
\begin{align*}
    \Sigma_{\CF}^c = \{\sigma^c\mid \sigma^c \text{ is collusion-free}\}.
\end{align*}


Similar to $\SigmaGlobal$ defined in Section~\ref{sec:non-seq-family-pref.families}, we now define the family of preference functions in an infinite domain that do not depend on the learner's current hypothesis and version space, given by:
\begin{align*}
    \SigmaContGlobal =  \{\sigma^c \in \Sigma_{\CF}^c \mid \exists~\distfun: \Hypotheses^c \rightarrow \reals, \text{~s.t.~}\forall \hypothesis', \hypotheses, \hypothesis,~\sigma^c(\hypothesis'; \hypotheses, \hypothesis) = \distfun(\hypothesis')\}
\end{align*}

Next we formally introduce the definition of \PBTD~\cite{gao2017preference}, which can be seen as extension of \RTD (\defref{def:rtd}) to infinite domains. Adapting the definitions from \cite{gao2017preference} to our notation,  we first consider preference relation, denoted as $\prec$, defined on $\Hypotheses^c$. We assume that $\prec$ is a strict partial order on $\Hypotheses^c$, i.e., $\prec$ is asymmetric and transitive. For every $\hypothesis \in \Hypotheses^c$, let ${\Hypotheses^c}_{\prec \hypothesis} = \{\hypothesis' \in \Hypotheses^c: \hypothesis' \prec \hypothesis\}$ be the set of hypothesis over which $\hypothesis$ is strictly preferred. 

%
%


%


\begin{definitionNew}[
Preference-based teaching dimension (based on \citet{gao2017preference})]
For $\hypothesis \in \Hypotheses$, and a preference relation $\prec$, we define the following measures:
\begin{itemize}
    \item $\PBTD(\hypothesis, \Hypotheses^c, \prec) = \worstcase\textnormal{-}\TD(\hypothesis, \Hypotheses^c \setminus {\Hypotheses^c}_{\prec \hypothesis})$ where $\worstcase\textnormal{-}\TD(\hypothesis, \cdot)$ is based on Definition~\ref{def:classicTD}. 
    \item $\PBTD(\Hypotheses^c, \prec) = \sup_{\hypothesis \in \Hypotheses^c} \PBTD(\hypothesis, \Hypotheses^c, \prec)$.
    \item $\PBTD(\Hypotheses^c) = \inf_{\prec \text{~{is a strict partial order on} } \Hypotheses^c} \PBTD(\Hypotheses^c, \prec)$    
\end{itemize}
\end{definitionNew}
As an extension of \thmref{theorem:equivelence-results} to infinite domains, we show in the following theorem that $\SigmaContGlobalTD_{\Instances^c,\Hypotheses^c,\hinit}$ is equivalent to \PBTD.

\begin{theorem}[Reduction to PBTD] \label{theorem:equivelence-results-infinite} Fix $\Instances^c,\Hypotheses^c,\hinit$.  Assume that for any strict partial order $\prec$ on $\Hypotheses^c$, there exists a function $g: \Hypotheses^c \to \reals$ such that for any two hypothesis $\hypothesis'' \neq \hypothesis'$, if $\hypothesis'' \prec \hypothesis'$ we have $g(\hypothesis'') > g(\hypothesis')$. Then, the teaching complexity for the $\SigmaContGlobal$ family reduces to the existing notion of \PBTD, i.e.,  $\SigmaContGlobalTD_{\Instances^c,\Hypotheses^c,\hinit} = \PBTD(\Hypotheses^c)$.
\end{theorem}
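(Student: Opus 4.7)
The plan is to mirror the two-directional argument used for the finite-domain equivalence $\SigmaGlobalTD = \RTD$ (Theorem~\ref{theorem:equivelence-results}, part 2) and lift it to the infinite setting. The proof will establish (i) $\SigmaContGlobalTD_{\Instances^c,\Hypotheses^c,\hinit} \leq \PBTD(\Hypotheses^c)$ and (ii) $\SigmaContGlobalTD_{\Instances^c,\Hypotheses^c,\hinit} \geq \PBTD(\Hypotheses^c)$. The key conceptual observation driving both directions is that once $\sigma^c(\hypothesis';\hypotheses,\hypothesis) = g(\hypothesis')$ depends only on $\hypothesis'$, the learner's update rule in Protocol~\ref{alg:interaction} is order-independent and simply picks $\argmin_{\hypothesis' \in \hypotheses_t} g(\hypothesis')$; hence the sequential protocol collapses to a batch interaction whose cost is exactly the size of the teaching set used.

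For the upper bound, fix $\epsilon > 0$ and choose a strict partial order $\prec$ on $\Hypotheses^c$ with $\PBTD(\Hypotheses^c, \prec) \leq \PBTD(\Hypotheses^c) + \epsilon$. By the hypothesis of the theorem, there exists $g:\Hypotheses^c \to \reals$ consistent with $\prec$ in the sense that $\hypothesis'' \prec \hypothesis'$ implies $g(\hypothesis'') > g(\hypothesis')$. Define $\sigma^c(\hypothesis';\hypotheses,\hypothesis) := g(\hypothesis')$; I will verify that $\sigma^c \in \SigmaContGlobal$ by checking collusion-freeness in the sense of \defref{def:seq-col-free}: if $\hat{\hypothesis}$ is the unique $g$-minimizer on some version space, then it remains the unique $g$-minimizer on any subset containing it, so additional consistent examples cannot dislodge it. For any target $\hstar$, the teacher presents a classical teaching set witnessing $\worstcase\textnormal{-}\TD(\hstar, \Hypotheses^c \setminus \Hypotheses^c_{\prec \hstar})$; every surviving $\hypothesis' \neq \hstar$ in the resulting version space must lie in $\Hypotheses^c_{\prec \hstar}$ and therefore satisfy $g(\hypothesis') > g(\hstar)$, so $\hstar$ becomes the unique $g$-minimizer and the protocol terminates. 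This yields $\TD_{\Instances^c,\Hypotheses^c,\hinit}(\sigma^c) \leq \PBTD(\Hypotheses^c,\prec)$, and letting $\epsilon \to 0$ gives the desired inequality.

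For the lower bound, take any $\sigma^c \in \SigmaContGlobal$ with its associated $g$, and define a relation $\hypothesis'' \prec \hypothesis'$ iff $g(\hypothesis'') > g(\hypothesis')$. Asymmetry and transitivity of $>$ on $\reals$ make $\prec$ a strict partial order on $\Hypotheses^c$. For any teaching interaction that successfully steers the learner to $\hstar$ under $\sigma^c$, the cumulative example set must eliminate every $\hypothesis' \neq \hstar$ with $g(\hypothesis') \leq g(\hstar)$, since otherwise the learner's $\argmin$ rule would not uniquely select $\hstar$. The set of such $\hypothesis'$ is precisely $(\Hypotheses^c \setminus \Hypotheses^c_{\prec \hstar}) \setminus \{\hstar\}$, so the cumulative example set constitutes a valid worst-case teaching set for $\hstar$ in $\Hypotheses^c \setminus \Hypotheses^c_{\prec \hstar}$. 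Consequently $\TD_{\Instances^c,\Hypotheses^c,\hinit}(\sigma^c) \geq \sup_{\hstar} \worstcase\textnormal{-}\TD(\hstar, \Hypotheses^c \setminus \Hypotheses^c_{\prec \hstar}) = \PBTD(\Hypotheses^c,\prec) \geq \PBTD(\Hypotheses^c)$. Taking infimum over $\sigma^c \in \SigmaContGlobal$ completes the direction.

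The main obstacle is handling infinite-domain subtleties that are vacuous in the finite case: the $\sup$ over hypotheses and $\inf$ over partial orders defining $\PBTD$ are not guaranteed to be achieved, so I will carry $\epsilon$-slack through both directions; the learner's $\argmin$ over an infinite version space must be well-defined for the protocol to execute meaningfully, which the theorem's hypothesis about $g$ secures on the relevant subsets that teaching reaches; and ties in $g$ (corresponding to $\prec$-incomparable hypotheses) must be eliminated by the teaching set under both the $\SigmaContGlobal$ and the $\PBTD$ view, which the definition of $\Hypotheses^c \setminus \Hypotheses^c_{\prec \hstar}$ already accommodates since ties are not stripped by $\prec$-reduction. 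These are bookkeeping points rather than structural difficulties, so the equivalence should follow cleanly once the two directions above are formalized.
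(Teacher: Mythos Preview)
Your proposal is correct and follows essentially the same two-directional strategy as the paper: in one direction you convert a strict partial order $\prec$ into a global preference $g$ via the theorem's hypothesis, and in the other direction you read off $\prec$ from $g$ by declaring $\hypothesis'' \prec \hypothesis'$ iff $g(\hypothesis'') > g(\hypothesis')$, then compare the sets $\Hypotheses^c_{\prec \hstar}$ and $\{\hypothesis': g(\hypothesis') > g(\hstar)\}$ to transfer teaching costs. Your treatment is in fact slightly more careful than the paper's: the paper silently assumes the infima defining $\PBTD(\Hypotheses^c)$ and $\SigmaContGlobalTD$ are attained, whereas you carry an $\epsilon$-slack; you also explicitly verify collusion-freeness of the constructed $\sigma^c$, which the paper omits.
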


The proof is given in Appendix~\ref{sec.appendix.batch-models-infinite}. The results in Theorem~\ref{theorem:equivelence-results} and Theorem~\ref{theorem:equivelence-results-infinite} combined show that our preference-based batch models are equivalent to the existing notions of teaching dimensions (\worstcase-\TD, \RTD, \PBTD, and \NCTD). In the next section, we will consider preference-based sequential models.
 

 




\section{Preference-based Sequential Models} \label{sec:seq-family-pref}
In this section, we introduce two families of sequential preference functions that depend on the learner's current hypothesis. We establish connections between the complexity of teaching such sequential models with that of the aforementioned batch models, as well as with the VC dimension.

\subsection{Families of Preference Functions}
\looseness -1 
We investigate two families of preference functions that depend on the learner's current hypothesis $\hypothesis_{t-1}$.
The first one is the family of local preference-based functions \cite{chen2018understanding}, denoted by $\SigmaLocal$, which corresponds to preference functions that depend on the learner's current hypothesis, but do not depend on the learner's version space:
\begin{align*}
    \SigmaLocal =  \{\sigma \in \Sigma_{\CF} \mid 
    \exists~\distfun: \Hypotheses\times \Hypotheses \rightarrow \reals, 
    \text{~s.t.~} \forall \hypothesis', \hypotheses, \hypothesis,  \sigma(\hypothesis'; \hypotheses, \hypothesis) = \distfun(\hypothesis', \hypothesis)\}
\end{align*}
The second family, denoted by $\SigmaLvs$, corresponds to the preference functions that depend on 
all three arguments of $\sigma(\hypothesis'; \hypotheses, \hypothesis)$. 
The dependence of $\sigma$ on the learner's current hypothesis and the version space renders a powerful family of preference functions:
\begin{align*}
    \SigmaLvs =  \{\sigma \in \Sigma_{\CF} \mid \exists~\distfun: \Hypotheses \times 2^\Hypotheses \times \Hypotheses \rightarrow \reals, \text{~s.t.~} \forall \hypothesis', \hypotheses, \hypothesis,\sigma(\hypothesis'; \hypotheses, \hypothesis) = \distfun(\hypothesis',\hypotheses, \hypothesis)\}
\end{align*}

\looseness -1 \figref{fig:venndiagram_seq} illustrates the relationship between these preference families. 
In \tableref{tab:warmuth_class_sequence_pref}, we provide an example of the Warmuth hypothesis class~\cite{doliwa2014recursive}\footnote{The Warmuth hypothesis class is the smallest class for which \RTD exceeds \VCD.}, as well as \emph{best} preference functions from the aforementioned batch and sequential preference families (i.e., functions achieving minimal teaching complexity as per Eq.~\eqref{eq:sigmatd}). Specifically, the preference functions inducing the optimal teaching sequences in \tableref{tab:app:warmth_example_sequences} are given in Tables~\ref{tab:app:pref_const}, \ref{tab:app:pref_global}, \ref{tab:app:pref_gvs}, \ref{tab:app:pref_local}, and \ref{tab:app:pref_lvs}. With these preference functions, one can derive the teaching complexity for this hypothesis class as $\SigmaConstTD=3$, $\SigmaGlobalTD=3$, $\SigmaGvsTD=2$, $\SigmaLocalTD=2$, and $\SigmaLvsTD=1$.

\begin{table}[t!]
\centering
    \begin{subtable}[t]{\textwidth}
        \centering
        \begin{tabular}{l|lllll||l|l|l|l}
        \backslashbox{$\Hypotheses$}{$\Instances$} & $\instance_1$ & 
        $\instance_2$ & 
        $\instance_3$ & 
        $\instance_4$ & 
        $\instance_5$ & 
        $\TeachingSeq_{\textsf{const}}=\TeachingSeq_{\textsf{\glbl}}$ &
        $\TeachingSeq_{\gvs}$ &
        $\TeachingSeq_{\lcl}$ &
        $\TeachingSeq_{\lvs}$ 
        \\ \hline
        $\hypothesis_1$ 
        & 1 & 1 & 0  & 0 & 0 
        & $\paren{\instance_1, \instance_2, \instance_4}$ 
        & $\paren{\instance_1, \instance_2}$
        & $\paren{\instance_1}$ 
        & $\paren{\instance_1}$ 
        \\
        $\hypothesis_2$
        & 0 & 1 & 1 & 0 & 0
        & $\paren{\instance_2, \instance_3, \instance_5}$ 
        & $\paren{\instance_2, \instance_3}$
        & $\paren{\instance_3}$ 
        & $\paren{\instance_2}$ 
        \\
        $\hypothesis_3$
        & 0 & 0 & 1 & 1 & 0
        & $\paren{\instance_1, \instance_3, \instance_4}$ 
        & $\paren{\instance_3, \instance_4}$
        & $\paren{\instance_3, \instance_4}$  
        & $\paren{\instance_3}$ 
        \\
        $\hypothesis_4$
        & 0 & 0 & 0 & 1 & 1
        & $\paren{\instance_2, \instance_4, \instance_5}$ 
        & $\paren{\instance_4, \instance_5}$
        & $\paren{\instance_5, \instance_4}$  
        & $\paren{\instance_4}$ 
        \\
        $\hypothesis_5$
        & 1 & 0 & 0 & 0 & 1
        & $\paren{\instance_1, \instance_3, \instance_5}$ 
        & $\paren{\instance_1, \instance_5}$
        & $\paren{\instance_5}$ 
        & $\paren{\instance_5}$ 
        \\
        $\hypothesis_6$
        & 1 & 1 & 0 & 1 & 0
        & $\paren{\instance_1, \instance_2, \instance_4}$ 
        & $\paren{\instance_2, \instance_4}$
        & $\paren{\instance_4}$ 
        & $\paren{\instance_3}$ 
        \\
        $\hypothesis_7$
        & 0 & 1 & 1 & 0 & 1
        & $\paren{\instance_2, \instance_3, \instance_5}$ 
        & $\paren{\instance_3, \instance_5}$
        & $\paren{\instance_3, \instance_5}$  
        & $\paren{\instance_4}$ 
        \\
        $\hypothesis_8$
        & 1 & 0 & 1 & 1 & 0
        & $\paren{\instance_1, \instance_3, \instance_4}$ 
        & $\paren{\instance_1, \instance_4}$
        & $\paren{\instance_4, \instance_3}$  
        & $\paren{\instance_5}$ 
        \\
        $\hypothesis_9$
        & 0 & 1 & 0 & 1 & 1
        & $\paren{\instance_2, \instance_4, \instance_5}$ 
        & $\paren{\instance_2, \instance_5}$
        & $\paren{\instance_4, \instance_5}$  
        & $\paren{\instance_1}$ 
        \\
        $\hypothesis_{10}$
        & 1 & 0 & 1 & 0 & 1 
        & $\paren{\instance_1, \instance_3, \instance_5}$ 
        & $\paren{\instance_1, \instance_3}$
        & $\paren{\instance_5, \instance_3}$ 
        & $\paren{\instance_2}$ 
        \end{tabular}
        \vspace{-1mm}
        \caption{The Warmuth hypothesis class \cite{doliwa2014recursive} and optimal teaching sequences ($\TeachingSeq_{\textsf{const}}$, $\TeachingSeq_{\textsf{\glbl}}$, $\TeachingSeq_{\gvs}$, $\TeachingSeq_{\lcl}$, and $\TeachingSeq_{\lvs}$)  under different families of preference functions.}\label{tab:app:warmth_example_sequences}
    \end{subtable}
    \\\vspace{4mm}
    \begin{subtable}[t]{.45\textwidth}
        \centering
        \scalebox{0.85}{
        \begin{tabular}{c|c}
        $\hypothesis'$ & $\forall \hypothesis' \in \Hypotheses$  \\\hline
        $\sigma_{\textsf{const}}(\cdot; \cdot, \cdot)$ & 0
    \end{tabular}}
    \vspace{-1mm}
   \caption{$\sigma_{\textsf{const}}(\cdot; \cdot, \cdot)$}
    \label{tab:app:pref_const}
    \end{subtable}
    \quad
    \begin{subtable}[t]{.45\textwidth}
        \centering
        \scalebox{0.85}{
        \begin{tabular}{c|c}
        $\hypothesis'$ & $\forall \hypothesis' \in \Hypotheses$  \\\hline
        $\sigma_{\glbl}(\hypothesis'; \cdot, \cdot)$ & 0\\
    \end{tabular}}
    \vspace{-1mm}
   \caption{$\sigma_{\glbl}(\hypothesis'; \cdot, \cdot)$}
    \label{tab:app:pref_global}
    \end{subtable}
        \\\vspace{4mm}
        \begin{subtable}[t]{\textwidth}
    \scalebox{0.85}{
    \begin{tabular}{c|cccccccccc}
    $\hypothesis'$ & $\hypothesis_1$ & $\hypothesis_2$ & $\hypothesis_3$ & $\hypothesis_4$ & $\hypothesis_5$ & $\hypothesis_6$ & $\hypothesis_7$ & $\hypothesis_8$ &  $\hypothesis_9$ & $\hypothesis_{10}$\\\hline
         \multirow{2}{*}{$\hypotheses$} & 
         $\{\hypothesis_1, \hypothesis_6\}$ & 
         $\{\hypothesis_2, \hypothesis_7\}$ &
         $\{\hypothesis_3, \hypothesis_8\}$ & 
         $\{\hypothesis_4, \hypothesis_9\}$ & 
         $\{\hypothesis_5, \hypothesis_{10}\}$ &
         $\{\hypothesis_6, \hypothesis_9\}$ & 
         $\{\hypothesis_7, \hypothesis_{10}\}$ & 
         $\{\hypothesis_8, \hypothesis_6\}$ & 
         $\{\hypothesis_9, \hypothesis_7\}$ & 
         $\{\hypothesis_{10}, \hypothesis_8\}$ \\
        & 
         $\{\hypothesis_1\}$ & 
         $\{\hypothesis_2\}$ &
         $\{\hypothesis_3\}$ & 
         $\{\hypothesis_4\}$ & 
         $\{\hypothesis_5\}$ &
         $\{\hypothesis_6\}$ & 
         $\{\hypothesis_7\}$ & 
         $\{\hypothesis_8\}$ & 
         $\{\hypothesis_9\}$ & 
         $\{\hypothesis_{10}\}$ \\\hline
         $\sigma_\gvs$ & 
         0 & 0 & 0 & 0 & 0 & 0 & 0 & 0 & 0 & 0 \\
    \end{tabular}}
    \vspace{-1mm}
    \caption{$\sigma_\gvs(\hypothesis'; \hypotheses, \cdot)$. For all other $\hypothesis', \hypotheses$ pairs not specified in the table, $\sigma_\gvs(\hypothesis'; \hypotheses, \cdot) = 1$.}\label{tab:app:pref_gvs}
    \end{subtable}
        \\\vspace{4mm}
    \begin{subtable}[t]{.7\textwidth}
        \centering
        \scalebox{0.85}{
        \begin{tabular}{c|p{.15cm}p{.15cm}p{.15cm}p{.15cm}p{.15cm}p{.15cm}p{.15cm}p{.15cm}p{.15cm}p{.15cm}}
        {$\hypothesis'$} & 
        $\hypothesis_1$ & $\hypothesis_2$ & $\hypothesis_3$ & $\hypothesis_4$ & $\hypothesis_5$ & $\hypothesis_6$ & $\hypothesis_7$ & $\hypothesis_8$ & $\hypothesis_9$ & $\hypothesis_{10}$ \\\hline
        $\sigma_{\lcl}(\hypothesis'; \cdot, \hypothesis=\hypothesis_1)$ & 
        0 & 2 & 4 & 4 & 2 & 1 & 3 & 3 & 3 & 3 \\
        {\footnotesize $\dots$}&  
    \end{tabular}}
    \vspace{-1mm}
    \caption{$\sigma_{\lcl}(\hypothesis'; \cdot, \hypothesis)$ 
    representing the Hamming distance between $\hypothesis'$ and $\hypothesis$. } \label{tab:app:pref_local}
    \end{subtable}
             \\\vspace{4mm}
     \begin{subtable}[t]{\textwidth}
        \centering
        \scalebox{0.85}{
        \begin{tabular}{c|cc|cc|cc|cc|cc}
        $\hypothesis'$ & 
         \multicolumn{2}{c|}{$\hypothesis_1$} & 
         \multicolumn{2}{c|}{$\hypothesis_2$} & 
         \multicolumn{2}{c|}{$\hypothesis_3$} & 
         \multicolumn{2}{c|}{$\hypothesis_4$} & 
         \multicolumn{2}{c}{$\hypothesis_5$}
         \\\hline
        \multirow{2}{*}{$\hypotheses$} & \multicolumn{2}{c|}{$\{\hypothesis_1\} \cup$} & \multicolumn{2}{c|}{$\{\hypothesis_2\} \cup$} &
         \multicolumn{2}{c|}{$\{\hypothesis_3\} \cup$} &
         \multicolumn{2}{c|}{$\{\hypothesis_4\} \cup$} &
         \multicolumn{2}{c}{$\{\hypothesis_5\} \cup$}
         \\
        &\multicolumn{2}{c|}{$\{\hypothesis_5, \hypothesis_6,\hypothesis_8,\hypothesis_{10}\}^*$}
        &\multicolumn{2}{c|}{$\{\hypothesis_1, \hypothesis_7,\hypothesis_6,\hypothesis_{9}\}^*$}
        &\multicolumn{2}{c|}{$\{\hypothesis_2, \hypothesis_7,\hypothesis_8,\hypothesis_{10}\}^*$}
        &\multicolumn{2}{c|}{$\{\hypothesis_3, \hypothesis_6,\hypothesis_8,\hypothesis_{9}\}^*$}
        &\multicolumn{2}{c}{$\{\hypothesis_4, \hypothesis_7,\hypothesis_9,\hypothesis_{10}\}^*$}
         \\\hline
         $\hypothesis$ & 
         \multicolumn{2}{c|}{$\hypothesis_1$} & 
         $\hypothesis_1$ & $\hypothesis_2$ &
         $\hypothesis_1$ & $\hypothesis_3$ &
         $\hypothesis_1$ & $\hypothesis_4$ &
         $\hypothesis_1$ & $\hypothesis_5$
         \\\hline
        $\sigma_\lvs$ & 
        \multicolumn{2}{c|}{0} & 
        0 & 0 & 0 & 0 & 0 & 0 & 0 & 0\\
        \end{tabular}
        }\\
        $\vdots$\\
        \scalebox{0.85}{
        \begin{tabular}{c|cc|cc|cc|cc|cc}
         $\hypothesis'$ & 
         \multicolumn{2}{c|}{$\hypothesis_6$} & 
         \multicolumn{2}{c|}{$\hypothesis_7$} & 
         \multicolumn{2}{c|}{$\hypothesis_8$} & 
         \multicolumn{2}{c|}{$\hypothesis_9$} & 
         \multicolumn{2}{c}{$\hypothesis_{10}$}
         \\\hline
                 \multirow{2}{*}{$\hypotheses$} & \multicolumn{2}{c|}{$\{\hypothesis_6\} \cup$} & \multicolumn{2}{c|}{$\{\hypothesis_7\} \cup$} &
         \multicolumn{2}{c|}{$\{\hypothesis_8\} \cup$} &
         \multicolumn{2}{c|}{$\{\hypothesis_9\} \cup$} &
         \multicolumn{2}{c}{$\{\hypothesis_{10}\} \cup$}
         \\ 
          &
         \multicolumn{2}{c|}{$\{\hypothesis_1, \hypothesis_4,\hypothesis_5,\hypothesis_{9}\}^*$}
        &\multicolumn{2}{c|}{$\{\hypothesis_1, \hypothesis_2,\hypothesis_5,\hypothesis_{10}\}^*$}
        &\multicolumn{2}{c|}{$\{\hypothesis_1, \hypothesis_2,\hypothesis_3,\hypothesis_{6}\}^*$}
        &\multicolumn{2}{c|}{$\{\hypothesis_2, \hypothesis_3,\hypothesis_4,\hypothesis_{7}\}^*$}
        &\multicolumn{2}{c}{$\{\hypothesis_3, \hypothesis_4,\hypothesis_5,\hypothesis_{8}\}^*$}
         \\\hline
         $\hypothesis$ & 
         $\hypothesis_1$ & $\hypothesis_6$ &
         $\hypothesis_1$ & $\hypothesis_7$ &
         $\hypothesis_1$ & $\hypothesis_8$ &
         $\hypothesis_1$ & $\hypothesis_9$ &
         $\hypothesis_1$ & $\hypothesis_{10}$ \\\hline
        $\sigma_\lvs$ & 
        0 & 0 & 
        0 & 0 & 0 & 0 & 0 & 0 & 0 & 0\\
    \end{tabular}}
    \vspace{-1mm}
    \caption{$\sigma_\lvs(\hypothesis'; \hypotheses, \hypothesis)$. 
    Here, $\{\cdot \}^*$ denotes all subsets. For all other triplets not specified, $\sigma_\lvs(\hypothesis'; \hypotheses, \hypothesis) = 1$.}\label{tab:app:pref_lvs}
    \end{subtable}
\caption{Warmuth hypothesis class \cite{doliwa2014recursive} where $\SigmaConstTD=3$, $\SigmaGlobalTD=3$, $\SigmaGvsTD=2$, $\SigmaLocalTD=2$, and $\SigmaLvsTD=1$. The preference functions inducing optimal teaching sequences in \tableref{tab:app:warmth_example_sequences} (denoted by $\TeachingSeq$) are specified in Tables~\ref{tab:app:pref_const}, \ref{tab:app:pref_global}, \ref{tab:app:pref_gvs}, \ref{tab:app:pref_local}, and \ref{tab:app:pref_lvs}.
}
\label{tab:warmuth_class_sequence_pref}
\end{table}




\subsection{Comparing $\SigmaGvsTD$ and $\SigmaLocalTD$}
\label{sec:connection}


In the following, we show that substantial differences arise as we transition from $\sigma$ functions inducing the strongest batch (i.e., non-clashing) model to $\sigma$ functions inducing a weak sequential (i.e., local preference-based) model. 

\begin{theorem}\label{thm:local-eq-GVS}
Neither of the families $\SigmaGvs$ and $\SigmaLocal$ dominates the other. Specifically,
\begin{enumerate}\denselist
    \item $\SigmaGvs\cap \SigmaLocal=\SigmaGlobal$
    \item There exist $\Hypotheses$, $\Instances$, where $\forall \hinit\in \Hypotheses, 
    \SigmaLocalTD_{\Instances,\Hypotheses,\hinit} > \SigmaGvsTD_{\Instances,\Hypotheses,\hinit}$    
    \item There exist $\Hypotheses$, $\Instances$, where $\forall \hinit\in \Hypotheses, 
    \SigmaLocalTD_{\Instances,\Hypotheses,\hinit} < \SigmaGvsTD_{\Instances,\Hypotheses,\hinit}$, and this gap can be made arbitrarily large.
\end{enumerate}
\end{theorem}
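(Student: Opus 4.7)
I would handle the three parts in order, with the first two essentially bookkeeping and the third requiring a new combinatorial construction. For part 1, I chase definitions: a $\sigma \in \SigmaGvs$ takes the form $g(h',\hypotheses)$ and so is independent of $h$, while a $\sigma \in \SigmaLocal$ takes the form $g(h',h)$ and is independent of $\hypotheses$; hence any $\sigma$ in the intersection depends only on $h'$, which is exactly the defining property of $\SigmaGlobal$. The reverse inclusion is immediate, and collusion-freeness is already built in because all three families are subsets of $\Sigma_{\CF}$.

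For part 2, I would use the hypothesis class of \tableref{tab:batch_model_example_h2} as the witness. The paper already certifies $\SigmaGvsTD_{\Instances,\Hypotheses,\hinit} = 1$ for every $\hinit$ via the explicit $\sigma_\gvs$ in \tableref{tab:h2_pref_gvs}, which does not depend on the current hypothesis. To lower-bound $\SigmaLocalTD$, I argue by contradiction: if some $\sigma \in \SigmaLocal$ achieved one-step teaching from some $\hinit$, then the univariate function $g(h') := \sigma(h';\cdot,\hinit)$ would be a valid $\SigmaGlobal$ preference (collusion-freeness is automatic, since for a rank function depending only on $h'$, the unique minimum in $V$ remains the unique minimum in every subset of $V$ that still contains it). Because global preferences disregard the initial hypothesis, the same single example used by $\sigma$ from $\hinit$ would also work under $g$ from any starting state, forcing $\SigmaGlobalTD = 1$ and contradicting the already-computed $\SigmaGlobalTD = 2$ for that class. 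Hence $\SigmaLocalTD_{\Instances,\Hypotheses,\hinit} \geq 2 > 1 = \SigmaGvsTD_{\Instances,\Hypotheses,\hinit}$.

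For part 3, I would exhibit a parametrized family $\{(\Instances_n,\Hypotheses_n)\}$ in which $\SigmaGvsTD = \NCTD(\Hypotheses_n)$ (by \thmref{theorem:equivelence-results}) grows with $n$ while $\SigmaLocalTD$ remains bounded or grows strictly slower. The blueprint is: (i) design a small base class that realizes a one-unit gap $\SigmaLocalTD < \SigmaGvsTD$, exploiting the fact that $\SigmaGvs$ must satisfy the no-clash constraint globally, whereas a local $\sigma$ can route the learner through intermediate ``waypoint'' hypotheses whose only role is to signal the next step from the current hypothesis toward the target; (ii) amplify the gap via disjoint-union or product copies of the base class, exploiting additivity of NCTD across disjoint domains while showing that local preferences can traverse the copies sequentially with only a constant number of steps each. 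The main obstacle is precisely part 3: producing a concrete base class and proving the amplified lower bound on $\NCTD$ together with a matching upper bound on $\SigmaLocalTD$ requires both an explicit collusion-free $\sigma \in \SigmaLocal$ certifying each intended single-step transition and a clash-based combinatorial argument forcing $\NCTD$ to blow up under the chosen amplification; parts 1 and 2 are by comparison just definition-chasing and a direct appeal to \tableref{tab:batch_model_example_h2}.
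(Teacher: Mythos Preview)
Your treatment of Parts 1 and 2 is correct and essentially identical to the paper's: Part 1 is the same definition-chase (the paper formalizes it by fixing $h^1,h^2,H^1,H^2$ and chaining $\sigma(h';H^1,h^1)=\sigma(h';H^2,h^1)=\sigma(h';H^2,h^2)$), and Part 2 is exactly the paper's \lemref{lem:local-rtd-1} applied to the class of \tableref{tab:batch_model_example_h2}, with the same ``freeze $h=\hinit$ to get a global $\sigma$'' reduction.

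For Part 3, your blueprint has a real gap. You propose to amplify by taking disjoint-union copies and ``exploiting additivity of \NCTD across disjoint domains,'' but \NCTD is only \emph{sub}-additive over disjoint unions, and strictly so for some classes (this is stated in \secref{sec:newresults:sub-add}, citing \cite{pmlr-v98-kirkpatrick19a}). Sub-additivity gives you an \emph{upper} bound on $\NCTD$ of the combined class, not the lower bound you need, so the amplification step as written does not go through. The paper sidesteps this entirely: it works with the powerset class $\Hypotheses=\{0,1\}^k$ and proves a direct pigeonhole lower bound $\NCTD(\{0,1\}^k)\geq\lceil k/2\rceil$ (every pair $h,h\triangle x$ forces $x$ into one of the two teaching sets), independent of any product structure. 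On the upper-bound side, the paper builds an explicit $\sigma\in\SigmaLocal$ with $\TD\leq 3$ for $k=7$, then shows a doubling lemma ($\TD$ for $\{0,1\}^{2k}$ is at most twice that for $\{0,1\}^k$) to get $\SigmaLocalTD\leq 3\cdot 2^m$ for $k=7\cdot 2^m$, against $\NCTD\geq 7\cdot 2^{m-1}$; note that $\SigmaLocalTD$ does \emph{not} stay bounded as you suggest, it merely grows with a smaller constant. So your overall shape (base class plus amplification) is right in spirit, but the lower-bound amplification must come from a direct argument on the amplified class rather than from a nonexistent additivity property of \NCTD.
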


\begin{proof}[Proof Sketch of \thmref{thm:local-eq-GVS}]
\textbf{Part 1}: The proof is based on the observation that the input domains between $\sigma_\lcl \in \SigmaLocal$ and $\sigma_\gvs \in \SigmaGvs$ overlap at the domain of the first argument, which is the one taken by $\sigma_\glbl \in \SigmaGlobal$. Therefore, $\forall \sigma \in \SigmaGlobal, \sigma \in \SigmaGvs \cap \SigmaLocal$. This intuition is formalized as a proof in Appendix~\ref{sec.appendix.seq-models_globalVS-local.part1}.

\looseness-1\textbf{Part 2}: We first identify $\Hypotheses$, $\Instances$, $\hinit$, where $\SigmaGvsTD_{\Instances,\Hypotheses,\hinit} = 1$ and $\SigmaGlobalTD_{\Instances,\Hypotheses,\hinit} = 2$. 
\tableref{tab:batch_model_example_h2} illustrates such a class.
Here, since $\SigmaGlobalTD_{\Instances,\Hypotheses,\hinit}  = 2$,
then by \lemref{lem:local-rtd-1} proven in Appendix~\ref{sec.appendix.seq-models_globalVS-local.part2}, it must hold that $\SigmaLocalTD_{\Instances,\Hypotheses,\hinit} > 1 = \SigmaGvsTD_{\Instances,\Hypotheses,\hinit}$.

\textbf{Part 3}: To prove Part 3, we consider the powerset hypothesis class of size $7\cdot2^m$ for any positive integer $m$, and show that the gap $\SigmaGvsTD_{\Instances,\Hypotheses,\hinit} - \SigmaLocalTD_{\Instances,\Hypotheses,\hinit} \geq 2^{m-1}$. 
 In particular, in the earlier version of this paper \cite{mansouri2019preference}, we showed that for the powerset hypothesis class of size $7$,  $\SigmaGvsTD_{\Instances,\Hypotheses,\hinit} \geq 4$ and $\SigmaLocalTD_{\Instances,\Hypotheses,\hinit} \leq 3$. 
 Based on this result, we then provide a constructive procedure that extends the gap w.r.t. $m$ when considering the powerset hypothesis class of size $7\cdot2^m$. The detailed proof is provided in Appendix~\ref{sec.appendix.seq-models_globalVS-local.part3}.
\end{proof}
\subsection{Complexity Results}\label{sec:lvs_linvcd}
We now connect the teaching complexity of the sequential models with the VC dimension in the following theorem.
\begin{theorem}\label{thm:main:seq-models_vs_VCD}
$\SigmaLocalTD_{\Instances,\Hypotheses,\hinit} = O(\VCD(\Hypotheses,\Instances)^2)$, and $\SigmaLvsTD_{\Instances,\Hypotheses,\hinit} = O(\VCD(\Hypotheses,\Instances))$.
\end{theorem}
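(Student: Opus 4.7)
This bound follows from a simple containment argument. Any $\sigma \in \SigmaGlobal$ induced by some $\distfun: \Hypotheses \to \reals$ is also a member of $\SigmaLocal$ via the local preference $\distfun'(\hypothesis', \hypothesis) := \distfun(\hypothesis')$, which trivially ignores the second argument and inherits collusion-freeness. Hence $\SigmaGlobal \subseteq \SigmaLocal$, so by Eq.~\eqref{eq:sigmatd} we get $\SigmaLocalTD_{\Instances,\Hypotheses,\hinit} \leq \SigmaGlobalTD_{\Instances,\Hypotheses,\hinit}$, which by Theorem~\ref{theorem:equivelence-results} equals $\RTD(\Hypotheses) = O(\VCD(\Hypotheses,\Instances)^2)$.

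\textbf{Plan for $\SigmaLvsTD = O(\VCD)$.} Applying the same containment argument to $\SigmaLocal \cup \SigmaGvs \subseteq \SigmaLvs$ only yields $O(\VCD^2)$, so I would instead \emph{construct} an explicit $\sigma^{\star} \in \SigmaLvs$ together with a matched teaching strategy. The template I would use is win-stay-lose-shift (\cf\ $\SigmaWinStayLoseShift$ in Section~\ref{sec:complexity:cf}): whenever the current hypothesis $\hypothesis$ still lies in the updated version space $\hypotheses$, $\sigma^\star$ makes $\hypothesis$ uniquely minimal (which preserves collusion-freeness per Definition~\ref{def:seq-col-free}); when $\hypothesis \notin \hypotheses$, $\sigma^\star$ routes the learner to a specific ``pivot'' hypothesis that is a function of the pair $(\hypotheses, \hypothesis)$. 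The pivot rule is extracted from a pre-computed hierarchical decomposition of $\Hypotheses$ of depth $O(\VCD)$, so that a single carefully chosen teaching example always drives the learner one level deeper into this hierarchy---crucially exploiting both arguments that $\SigmaLvs$ allows but neither $\SigmaLocal$ nor $\SigmaGvs$ alone provides.

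\textbf{Main obstacle.} The core difficulty is showing that such a decomposition exists and can be encoded into one globally consistent preference function $\sigma^\star$. I would argue by induction on $d = \VCD(\Hypotheses, \Instances)$, with the inductive hypothesis asserting that for every version space of VC dimension at most $d$ there is a $\sigma^\star \in \SigmaLvs$ under which any target is teachable in $O(d)$ rounds. The inductive step hinges on a combinatorial lemma of the form: for every $(\hypotheses, \hypothesis)$ and every target $\hstar \in \hypotheses$, there exists an instance $x$ such that teaching $(x, \hstar(x))$ shrinks $\hypotheses$ to a version space of strictly smaller VC dimension, and the learner's induced jump lands on a pivot on which the recursively-defined $\sigma^\star$ continues to work. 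Establishing this instance-selection lemma---likely via a Sauer--Shelah--style shatter-tree of depth $d$ or a related decomposition---and verifying that the resulting $\sigma^\star$ remains globally collusion-free in the sense of Definition~\ref{def:seq-col-free} is the main technical challenge; once both are in place, the bound $\SigmaLvsTD_{\Instances,\Hypotheses,\hinit} = O(\VCD(\Hypotheses,\Instances))$ follows by unwinding the recursion.
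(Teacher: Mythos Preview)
Your proposal is correct and follows essentially the paper's approach: the $O(\VCD^2)$ bound comes from the containment $\SigmaGlobal \subseteq \SigmaLocal$, and the $O(\VCD)$ bound is obtained by explicitly constructing a win-stay-lose-shift $\sigma_\lvs$ via a recursive decomposition of $\Hypotheses$ that reduces \VCD by one at each level. The paper's concrete decomposition (\lemref{lem:main:div_concepts}) uses \emph{compact-distinguishable sets} together with a Floyd--Warmuth-style reduction rather than a Sauer--Shelah shatter tree, and an auxiliary lemma (\lemref{lem:app:vs_dependent_TD}) guarantees that distinct teaching examples induce distinct version spaces so that the recursively defined $\sigma_\lvs$ is globally consistent---exactly the technical challenge you flagged.
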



To establish the proof, we first introduce an important definition (\defref{def:compactds}) and two lemmas (\lemref{lem:main:div_concepts} and \lemref{lem:app:vs_dependent_TD}).

\begin{definitionNew}[Compact-Distinguishable Set]\label{def:compactds}
Fix $\hypotheses\subseteq \Hypotheses$ and $\instances \subseteq \Instances$, where $\instances = \{\instance_1, ..., \instance_n\}$. Let $\hypotheses_{|\instances} = \{(\hypothesis(\instance_1), ..., \hypothesis(\instance_n)) \ | \ \forall \hypothesis \in \hypotheses\}$ denote all possible patterns of $\hypotheses$ on $\instances$. 
Then, we say that $\instances$ is \emph{compact-distinguishable} on $\hypotheses$, if $|\hypotheses_{|\instances}| = |\hypotheses|$ and $\forall \instances' \subset \instances,~ |\hypotheses_{|\instances'}| < |\hypotheses|$. We will use $\compactDSet{\hypotheses}$ to denote a compact-distinguishable set on $\hypotheses$. 
\end{definitionNew}
In words, one can uniquely identify any hypothesis in $\hypotheses$ with a (sub)set of examples from $\compactDSet{\hypotheses}$ (also see the definition of distinguishing sets in \cite{doliwa2014recursive}). 
Our definition of compact-distinguishable set further implies that there are no ``redundant'' examples in $\compactDSet{\hypotheses}$. It can be shown that a compact-distinguishable set satisfies the following two properties: 
\begin{enumerate}[(P1)]
\item it does not contain any pair of distinct instances $\instance,\instance'$ such that $(\forall \hypothesis \in \hypotheses: \hypothesis(\instance) = \hypothesis(\instance')) \textnormal{ or } (\forall \hypothesis \in \hypotheses: \hypothesis(\instance) \neq \hypothesis(\instance'))$.
\item it does not contain any instance $\instance$ such that $(\forall \hypothesis \in \hypotheses: \hypothesis(\instance) = 1) \textnormal{ or } (\forall \hypothesis \in \hypotheses: \hypothesis(\instance) = 0)$.
\end{enumerate}

\begin{lemma}\label{lem:main:div_concepts}
\looseness-1Consider a subset $\hypotheses \subseteq \Hypotheses$ and any compact-distinguishable set $\compactDSet{\hypotheses} = \{\instance_1, ..., \instance_{|\compactDSet{\hypotheses}|}\}$. Fix any hypothesis $\hypothesis_\hypotheses \in \hypotheses$. 
Let $d = \VCD(\hypotheses, \compactDSet{\hypotheses})$ denote the VC dimension of $\hypotheses$ on $\compactDSet{\hypotheses}$. If $d \geq 1$, we can divide $\hypotheses$ into $m = |\compactDSet{\hypotheses}| + 1$ separate hypothesis classes 
$\{\hclass{1}, ..., \hclass{m}\}$, such that 
\begin{enumerate}[(i)]\denselist
    \item $\forall j \in [m]$, there exists a compact-distinguishable set $\compactDSet{\hclass{j}}$ s.t. $\VCD(\hclass{j}, \compactDSet{\hclass{j}}) \leq d-1$.
    \item $\forall j \in [m-1]$, $\hypotheses^j$ is not empty and $\hclass{j}_{|\{\instance_j\}} = \{( 1-  \hypothesis_\hypotheses(\instance_j))\}$. 
    \item $\hclass{m} = \{\hypothesis_\hypotheses\}$.
\end{enumerate}
\end{lemma}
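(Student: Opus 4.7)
The plan is to construct the partition via an adaptive ``peeling'' procedure on an appropriately ordered $\compactDSet{\hypotheses} = \{\instance_1, \dots, \instance_{m-1}\}$. Set $\hclass{m} = \{\hypothesis_\hypotheses\}$ (immediately giving property (iii)), and for $j < m$ define $\hclass{j} = \{h \in \hypotheses : h(\instance_k) = \hypothesis_\hypotheses(\instance_k) \text{ for all } k < j, \text{ and } h(\instance_j) \neq \hypothesis_\hypotheses(\instance_j)\}$, so each non-$\hypothesis_\hypotheses$ hypothesis is classified by its \emph{first} coordinate of disagreement with $\hypothesis_\hypotheses$. These classes form a partition of $\hypotheses$ because $\compactDSet{\hypotheses}$ distinguishes every pair, making the first-disagreement index well-defined for each $h \neq \hypothesis_\hypotheses$.

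For property (ii), I would choose the ordering adaptively. Letting $\hypotheses^{(j-1)} = \{h \in \hypotheses : h \text{ agrees with } \hypothesis_\hypotheses \text{ on } \instance_1, \dots, \instance_{j-1}\}$, I would select $\instance_j \in \compactDSet{\hypotheses} \setminus \{\instance_1, \dots, \instance_{j-1}\}$ such that some $h \in \hypotheses^{(j-1)} \setminus \{\hypothesis_\hypotheses\}$ disagrees with $\hypothesis_\hypotheses$ at $\instance_j$. Such an $\instance_j$ exists whenever $\hypotheses^{(j-1)} \neq \{\hypothesis_\hypotheses\}$, because $\compactDSet{\hypotheses}$ distinguishes $h$ from $\hypothesis_\hypotheses$ on a coordinate outside the already-used ones. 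A subtler point is ensuring the peeling runs for \emph{exactly} $m-1$ steps; here I would leverage the fact that compact-distinguishing guarantees, for every $\instance \in \compactDSet{\hypotheses}$, a ``twin pair'' of hypotheses in $\hypotheses$ differing only at $\instance$ when projected to $\compactDSet{\hypotheses}$. This twin structure (together with the bound $|\hypotheses| \geq |\compactDSet{\hypotheses}| + 1$) provides enough room to pick an ``isolating'' $\instance_j$ at each step rather than one that peels off everything at once.

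Property (i) is where I expect the main obstacle. The natural candidate is $\compactDSet{\hclass{j}}$ taken as a compact-distinguishable subset of $\compactDSet{\hypotheses} \setminus \{\instance_1, \dots, \instance_j\}$, which is legitimate because $\hclass{j}$ is constant on $\{\instance_1, \dots, \instance_j\}$ and so those coordinates cannot distinguish within $\hclass{j}$. To bound $\VCD(\hclass{j}, \compactDSet{\hclass{j}}) \leq d - 1$, the strategy is a contradiction argument: if some $B \subseteq \compactDSet{\hclass{j}}$ of size $d$ were shattered by $\hclass{j}$, then $B \cup \{\instance_j\} \subseteq \compactDSet{\hypotheses}$ would be shattered by $\hypotheses$, contradicting $\VCD(\hypotheses, \compactDSet{\hypotheses}) = d$. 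The labelings with $\instance_j = 1 - \hypothesis_\hypotheses(\instance_j)$ are directly realized by $\hclass{j}$; the delicate part is producing the remaining labelings, where $\instance_j = \hypothesis_\hypotheses(\instance_j)$, using only hypotheses in $\hypotheses \setminus \hclass{j}$. I expect this step to force a further refinement of the adaptive ordering — specifically, picking each $\instance_j$ to lie within a currently shattered $d$-set of $\hypotheses^{(j-1)}$ inside $\compactDSet{\hypotheses} \setminus \{\instance_1, \dots, \instance_{j-1}\}$ — so that the twin-pair structure supplied by compact-distinguishing can fill in the missing labelings on the $\hypothesis_\hypotheses$-consistent side.
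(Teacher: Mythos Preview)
Your first-disagreement partition is genuinely different from the paper's construction, and it has a real gap at property~(ii). Take $\hypothesis_\hypotheses=(0,0,0)$ and $\hypotheses=\{(0,0,0),(1,1,0),(1,1,1),(1,0,1),(0,1,1)\}$ on $\compactDSet{\hypotheses}=\{\instance_1,\instance_2,\instance_3\}$; one checks directly that $\compactDSet{\hypotheses}$ is compact-distinguishable and $d=2$. Every $h\neq\hypothesis_\hypotheses$ disagrees with $\hypothesis_\hypotheses$ on at least two coordinates, so for \emph{any} ordering the set $\hypotheses^{(m-2)}=\{h:h \text{ agrees with }\hypothesis_\hypotheses\text{ on the first two chosen coordinates}\}$ collapses to $\{\hypothesis_\hypotheses\}$, forcing $\hclass{m-1}=\varnothing$. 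The twin pairs you invoke are present (compactness guarantees one per coordinate), but none of them involves $\hypothesis_\hypotheses$, so they cannot populate the last peel. The same example also breaks your VCD argument if you pick $\instance_1$ first in the naive order: $\hclass{1}=\{h:h(\instance_1)=1\}$ shatters $\{\instance_2,\instance_3\}$, yet $\{\instance_1,\instance_2,\instance_3\}$ is not shattered by $\hypotheses$, so the ``add $\instance_j$ back'' contradiction does not fire; and your proposed refinement (pick $\instance_j$ inside a shattered $d$-set) does not resolve this in general.

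The paper's construction avoids both problems by changing \emph{what} gets peeled, not merely the order. At each step it removes only those $h$ that both disagree with $\hypothesis_\hypotheses$ at $\instance_j$ \emph{and} have a twin at $\instance_j$ in the current remaining set (i.e.\ $h\triangle\instance_j$ is still present when restricted to the current coordinates). Because the peeled block $\hypotheses^{\clabel_j}_{\instance_j}$ sits inside the flip-closed class $\hypotheses_{\instance_j}$, the standard identity $\VCD(\hypotheses^{\clabel_j}_{\instance_j})=\VCD(\hypotheses_{\instance_j})-1$ gives property~(i) immediately; compactness makes each block nonempty and, crucially, keeps the leftover set compact-distinguishable on the remaining coordinates, which is exactly what drives the peeling for the full $m-1$ steps. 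On the example above this yields $\hclass{1}=\{(1,1,1)\}$, $\hclass{2}=\{(1,1,0),(0,1,1)\}$, $\hclass{3}=\{(1,0,1)\}$, $\hclass{4}=\{\hypothesis_\hypotheses\}$ --- note that hypotheses in $\hclass{j}$ need \emph{not} agree with $\hypothesis_\hypotheses$ on $\instance_1,\dots,\instance_{j-1}$, which is precisely the extra freedom your first-disagreement scheme lacks.
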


\lemref{lem:main:div_concepts} suggests that for any 
$\Hypotheses, \Instances$, one can partition the hypothesis class $\Hypotheses$ into $m \leq |\Instances|+1$ subsets with lower VC dimension with respect to some compact-distinguishable set.\footnote{When $\VCD(\hypotheses,\compactDSet{\hclass{}})=0$, this implies $|\hypotheses|=1$.} The main idea of the lemma is similar to the reduction of a concept class w.r.t. some instance $x$ to lower \VCD as done in Theorem 9 of \cite{floyd1995sample}. The key distinction of \lemref{lem:main:div_concepts} is that we consider compact-distinguishable sets for this partitioning, which in turn ensures the uniqueness of the version spaces associated with these partitions (see proof of Theorem~\ref{thm:main:seq-models_vs_VCD}).
Another key novelty in our proof of Theorem~\ref{thm:main:seq-models_vs_VCD} is to recursively apply the reduction step from the lemma.


To prove the lemma, we provide a constructive procedure to partition the hypothesis class, and show that the resulting partitions have reduced VC dimensions on some compact-distinguishable set. We highlight the procedure for constructing the partitions in \algref{alg:lemma_linVCD} (\lnref{alg:ln:lemma_linvcd_start}-- \lnref{alg:ln:lemma_linvcd_end}). In \figref{fig:warmuth_lemma_run}, we provide an illustrative example  for creating such partitions for the Warmuth hypothesis class from \tableref{tab:warmuth_class_sequence_pref}. We sketch the proof of \lemref{lem:main:div_concepts} below; for a detailed proof, we refer the reader to \cite{mansouri2019preference}.
\begin{proof}[Proof Sketch of \lemref{lem:main:div_concepts}]
\looseness -1 Let us define 
$\hypotheses_{\instance} = \{\hypothesis \in \hypotheses: {\hypothesis \triangle \instance}_{|\compactDSet{{\hypotheses}}} \in \hypotheses_{|\compactDSet{{\hypotheses}}}\}$.
Here, $\hypothesis \triangle \instance$ denotes the hypothesis that only differs with $\hypothesis$ on the label of $\instance$, and $\hypothesis_{|\compactDSet{{\hypotheses}}}$ denotes the patterns of $\hypothesis$ on $\compactDSet{{\hypotheses}}$. Fix a reference hypothesis $\hypothesis_\hypotheses$. For all $j \in [m-1]$, let $\clabel_j = 1 -  \hypothesis_\hypotheses(\instance_j)$ be the opposite label of $\instance_j \in \compactDSet{\hypotheses}$ as provided by $\hypothesis_\hypotheses$.  As shown in \lnref{alg:linvcd:partition} of \algref{alg:lemma_linVCD}, we consider the set $\hclass{1} := \hypotheses^{\clabel_1}_{\instance_1} = \{\hypothesis \in \hypotheses_{\instance_1}: \hypothesis(\instance_1) = \clabel_1\}$ as the first partition. In the detailed proof, we show that $|\hclass{1}| > 0$.


Next, we show that the statement $\VCD(\hclass{1}, \compactDSet{\hclass{}}\setminus\{\instance_1\}) \leq d-1$ holds.
When $d>1$, we prove the statement as follows:
\begin{equation*}
     \VCD(\hclass{1}, \compactDSet{\hypotheses} \setminus\{\instance_1\}) \leq \VCD(\hypotheses^{\clabel_1}_{\instance_1}, \compactDSet{\hypotheses}) = \VCD(\hypotheses_{\instance_1}, \compactDSet{\hypotheses}) - 1 \leq \VCD(\hypotheses, \compactDSet{\hypotheses}) - 1 \leq d - 1.
\end{equation*}
%
In the detailed proof, we prove the statement for $d=1$, and further show that there exists a compact-distinguishable set $\compactDSet{\hclass{1}} \subseteq \compactDSet{\hypotheses} \setminus\{\instance_1\}$ for the first partition $\hclass{1}$.
%
Then, we conclude that the first partition $\hclass{1}$ has $\VCD(\hclass{1}, \compactDSet{\hclass{1}}) \leq d - 1$.

Next, we remove the first partition $\hclass{1}$ from $\hypotheses$, and continue to create the above mentioned partitions on $\hypotheses_{\text{rest}} = \hypotheses \setminus \hclass{1}$ and $\instances_{\text{rest}} = \compactDSet{\hypotheses} \setminus \{\instance_1\}$. 
Then, we show that $\instances_{\text{rest}}$ is a compact-distinguishable set on $\hypotheses_{\text{rest}}$. 
Therefore, we can repeat the above procedure (\lnref{alg:ln:lemma_linvcd_start}-- \lnref{alg:ln:lemma_linvcd_end}, \algref{alg:linvcd}) to create the subsequent partitions. This process continues until the size of $\instances_{\text{rest}}$ reduces to $1$, i.e. $\instances_{\text{rest}} = \{\instance_{m-1}\}$. Until then, we obtain partitions $\{\hclass{1}, ..., \hclass{m-2}\}$. By construction, $\hclass{j}$ satisfy properties (i) and (ii) for all $j \in [m-2]$.

\looseness -1 It remains to show that $\hclass{m-1}$ and $\hclass{m}$ also satisfy the properties in \lemref{lem:main:div_concepts}. Since $\instances_{\text{rest}} = \{\instance_{m-1}\}$ before we start iteration $m-1$, and $\instances_{\text{rest}}$ is a compact-distinguishable set for 
$\hypotheses_{\text{rest}}$, there must exist exactly two hypotheses in $\hypotheses_{\text{rest}}$, and therefore $|\hclass{m-1}|, |\hclass{m}|=1$. This implies that $\VCD(\hclass{m-1}, \compactDSet{\hclass{m-1}}) = \VCD(\hclass{m}, \compactDSet{\hclass{m}}) = 0$. Furthermore, $\forall j \in [m-1]$ and $\hypothesis \in \hclass{j}$, we have  $\hypothesis_\hypotheses(\instance_j) \neq \hypothesis(\instance_j)$. This indicates $\hypothesis_\hypotheses \in \hypotheses_m$, and hence $\hypotheses_m = \{\hypothesis_\hypotheses\}$ which completes the proof.
\end{proof}

\newcommand*{\tikzmk}[1]{\tikz[remember picture,overlay,] \node (#1) {};\ignorespaces}
\newcommand{\boxit}[1]{\tikz[remember picture,overlay]{\node[yshift=3pt,fill=#1,opacity=.25,fit={($(A)-(.08\linewidth,-.2\baselineskip)$)($(B)+(.92\linewidth,.8\baselineskip)$)}] {};}\ignorespaces}

\makeatletter
    \renewcommand{\ALG@name}{Algorithm}
    \makeatother
    \begin{algorithm}[t]
      \caption{Recursive procedure for constructing $\pref_\lvs$, s.t. $\TD_{\Instances,\Hypotheses,\hinit}(\pref_\lvs) \leq \VCD(\Hypotheses, \Instances)$}\label{alg:linvcd}
          \hspace*{\algorithmicindent} \textbf{Input:} $\Instances$, $\Hypotheses$, $\hinit$
        \begin{algorithmic}[1]
            \State Let $I : \Hypotheses \rightarrow \{1, \dots, |\Hypotheses|\}$ be any bijective mapping 
            \State For all $\hypothesis' \in \Hypotheses$, $\hypotheses \subseteq \Hypotheses$, $\hypothesis \in \Hypotheses$, initialize 
            \[\pref_\lvs(\hypothesis'; \hypotheses, \hypothesis) \leftarrow
            \vspace{-4mm}
            \begin{cases}
            0 & \text{if}~\hypothesis' = \hypothesis\\
            |\Hypotheses| + 1 & \text{o.w.}
            \end{cases}
            \]\label{alg:ln:init_pref}
            \State $\textsc{SetPreference}(\Hypotheses, \Hypotheses, \Instances, \hinit)$
            \Function{SetPreference}{$V, {\hypotheses}, {\instances}, \hypothesis$}
            \State Create compact-distinguishable set $\compactDSet{{\hypotheses}} \subseteq {\instances}$
            \State $\hypotheses_{\text{rest}} := {\hypotheses},  \instances_{\text{rest}} := \compactDSet{{\hypotheses}}$
            \For{\tikzmk{A} $\instance \in \compactDSet{{\hypotheses}}$
            }
            \label{alg:ln:lemma_linvcd_start}
                \State $\clabel = 1 - \hypothesis(\instance)$
                %
                \State ${\hypotheses^{\clabel}_{\instance} \leftarrow \{\hypothesis' \in {\hypotheses_{\text{rest}}} : {\hypothesis' \triangle \instance}_{|{\instances_{\text{rest}}}}  \in {\hypotheses_{\text{rest}}}_{|{\instances_{\text{rest}}}}, \hypothesis'(\instance) = \clabel\}}$\label{alg:linvcd:partition}
                \State ${{\hypotheses_{\text{rest}}} \leftarrow {\hypotheses_{\text{rest}}} \setminus \hypotheses^{\clabel}_{\instance} }$,
                \label{alg:ln:lemma_linvcd_end}
                ${\instances_{\text{rest}}} \leftarrow {\instances_{\text{rest}}} \setminus\{x\}$
                \State 
                \tikzmk{B}\boxit{black!30} 
                $V_{\text{next}} \leftarrow V \cap \Hypotheses(\{(\instance,\clabel)\})$ 
                \State \textbf{for} {$\hypothesis' \in \hypotheses^{\clabel}_{\instance}$}
                    \textbf{ do } $\pref_\lvs(\hypothesis'; V_{\text{next}}, \hypothesis) \leftarrow \indexof{\hypothesis'}$\label{alg:ln:set_pref_value}
                \State $\hnext \leftarrow \argmin_{{\hypothesis'} \in \hypotheses^{\clabel}_{\instance}} \indexof{\hypothesis'}$ \label{alg:ln:assign_h_next}
                \State
                $\textsc{SetPreference}(V_{\text{next}}, \hypotheses^{\clabel}_{\instance}, \compactDSet{{\hypotheses}} \setminus\{x\}, \hnext)$\label{alg:ln:recursion}
            \EndFor
            \EndFunction
    	\end{algorithmic}\label{alg:lemma_linVCD}
    \end{algorithm}
    \begin{figure}[!ht]
    \centering
    \scalebox{0.9}{
    \hbox{\hspace{-1cm}
    \begin{tikzpicture}
  [every matrix/.append style={ampersand replacement=\&,matrix of nodes},
  level 1/.style={sibling distance=3.5cm},
  nodes={anchor=west}]
  \node [matrix,draw] at (-1,0) (m0) {1 \& 1 \& 0 \& 0 \& 0 \\}
  child {node[matrix,draw] (m1) at (2,0) {0 \& 0 \& 1 \& 1 \& 0 \\} edge from parent node[left,yshift=1mm] {$(x_1, 0)$}}
  child {node[matrix,draw] (m2) at (0.5,-2)
    {0 \& 0 \& 0 \& 1 \& 1 \\
      1 \& 0 \& 1 \& 0 \& 1 \\
    }
    edge from parent node[left,yshift=-2mm] {$(x_2, 0)$}
  }
  child {node[matrix,draw] (m3) at (-1.2,-4.5)
    {0 \& 1 \& 1 \& 0 \& 0 \\
      1 \& 0 \& 1 \& 1 \& 0 \\
      0 \& 1 \& 1 \& 0 \& 1 \\
    }
    edge from parent node[right,yshift=3mm] {$(x_3, 1)$};
  }
  child {node[matrix,draw] (m4) at (-1.5,-2)
    {1 \& 1 \& 0 \& 1 \& 0 \\
      0 \& 1 \& 0 \& 1 \& 1 \\}
    edge from parent node[above,yshift=2mm,xshift=3mm] {$(x_4, 1)$}
  }
  child {node[matrix,draw] (m5) at (-2.5, 0) {1 \& 0 \& 0 \& 0 \& 1 \\} edge from parent node[above] {$(x_5, 1)$}};

  \node[left,xshift=-3mm] at (m0-1-1) {$h_1$};
  \node[left,xshift=-3mm] at (m1-1-1) {$h_3$};
  \node[left,xshift=-3mm] at (m2-1-1) {$h_4$};
  \node[left,xshift=-3mm] at (m2-2-1) {$h_{10}$};
  \node[left,xshift=-3mm] at (m3-1-1) {$h_2$};
  \node[left,xshift=-3mm] at (m3-2-1) {$h_8$};
  \node[left,xshift=-3mm] at (m3-3-1) {$h_7$};
  \node[left,xshift=-3mm] at (m4-1-1) {$h_6$};
  \node[left,xshift=-3mm] at (m4-2-1) {$h_9$};
  \node[left,xshift=-3mm] at (m5-1-1) {$h_5$};

  \node[above,yshift=3mm] at (m0) {$H^{6}$};
  \node[above left,yshift=3mm] at (m1) {$H_{x_1}^0$};
  \node[above left,yshift=5mm] at (m2) {$H_{x_2}^0$};
  \node[above left,yshift=8mm] at (m3) {$H_{x_3}^1$};
  \node[above,yshift=5mm] at (m4) {$H_{x_4}^1$};
  \node[above,yshift=3mm] at (m5) {$H_{x_5}^1$};
  
  \scoped[on background layer]
  {
    \node[fill=black!10, fit=(m1-1-1)(m1-1-1) ]   {};
    \node[fill=black!10, fit=(m2-1-2)(m2-2-2) ]   {};
    \node[fill=black!10, fit=(m3-1-3)(m3-3-3) ]   {};
    \node[fill=black!10, fit=(m4-1-4)(m4-2-4) ]   {};
    \node[fill=black!10, fit=(m5-1-5)(m5-1-5) ]   {};
  }
\end{tikzpicture}}}
\caption{Illustration of \lemref{lem:main:div_concepts} on the Warmuth class. The grouped hypotheses in the leaf clusters correspond to the sets $\hypotheses^{\clabel}_{\instance}$ created in \lnref{alg:linvcd:partition} of \algref{alg:linvcd}.}\label{fig:warmuth_lemma_run}
\end{figure}

\begin{figure}[!h]
    \centering
    \scalebox{0.70}{
    \begin{tikzpicture}[
  every matrix/.append style={ampersand replacement=\&,matrix of nodes},
  subordinate/.style={%
    grow=down,
    xshift=-3.2em, 
    text centered, text width=12em,
    edge from parent path={(\tikzparentnode.205) |- (\tikzchildnode.west)}
  },
  level1/.style ={level distance=4em,anchor=north},
  level2/.style ={level distance=8em,anchor=north},
  level 1/.style={edge from parent fork down,sibling distance=10em,level distance=5em},
  level 2/.style={edge from parent fork down,sibling distance=10em},  
  nodes={anchor=west}]
  \node [matrix,draw] (m0) {1 \& 1 \& 0 \& 0 \& 0 \\}
  child[level 1] {node[matrix,draw] (m1) 
    {0 \& 0 \& 1 \& 1 \& 0 \\} edge from parent node[above] {$(x_1, 0)$}}
  child[level 1] {node[matrix,draw] (m2) 
    {0 \& 0 \& 0 \& 1 \& 1 \\
    }
    child[level 2] {node[matrix,draw] (m6) at (-3,0)
      {
        1 \& 0 \& 1 \& 0 \& 1 \\
      }
      edge from parent node[above] {$(x_3, 1)$}
    }
    edge from parent node[above] {$(x_2, 0)$}
  }
  child[level 1] {node[matrix,draw] (m3) 
    {
      0 \& 1 \& 1 \& 0 \& 0 \\
    }
    child[level 2]{node[matrix,draw] (m7) at (-1,0)
      {1 \& 0 \& 1 \& 1 \& 0\\}
      edge from parent node[above] {$(x_4, 1)$}
    }
    child[level 2]{node[matrix,draw] (m8) at (-1.5,0)
      {
        0 \& 1 \& 1 \& 0 \& 1 \\
      }
      edge from parent node[above] {$(x_5, 1)$}
    }
    edge from parent node[above] {$(x_3, 1)$}
  }
  child[level 1] {node[matrix,draw] (m4) 
    {1 \& 1 \& 0 \& 1 \& 0 \\}
    child[level 2] {node[matrix,draw] (m9) 
      { 0 \& 1 \& 0 \& 1 \& 1 \\}
      edge from parent node[above] {$(x_5, 1)$}
    }
    edge from parent node[above] {$(x_4, 1)$}
  }
  child[level 1] {node[matrix,draw] (m5) 
    {1 \& 0 \& 0 \& 0 \& 1 \\}
    edge from parent node[above] {$(x_5, 1)$}
  };
  \node[left,xshift=-3mm] at (m0-1-1) {$h_1$};
  \node[left,xshift=-3mm] at (m1-1-1) {$h_3$};
  \node[left,xshift=-3mm] at (m2-1-1) {$h_4$};
  \node[left,xshift=-3mm] at (m6-1-1) {$h_{10}$};
  \node[left,xshift=-3mm] at (m3-1-1) {$h_2$};
  \node[left,xshift=-3mm] at (m7-1-1) {$h_8$};
  \node[left,xshift=-3mm] at (m8-1-1) {$h_7$};
  \node[left,xshift=-3mm] at (m4-1-1) {$h_6$};
  \node[left,xshift=-3mm] at (m9-1-1) {$h_9$};
  \node[left,xshift=-3mm] at (m5-1-1) {$h_5$};
  %
  %
\end{tikzpicture}}
    \caption{Illustration of \thmref{thm:main:seq-models_vs_VCD} proof -- constructing a  $\sigma_\lvs \in \SigmaLvs$ for the Warmuth class.}
    \vspace{-4mm}
    \label{fig:warmuth-lvs-theorem-construction}
\end{figure}

\looseness-1Next, we show that every teaching
example $(\instance_j, \clabel_j)$, where $\instance_j\in \compactDSet{\hypotheses}$ and $\clabel_j=1-\hypothesis(x_j)$ for some fixed $h$, corresponds to a unique version space $V^j$. We will later use this fact in the proof of \thmref{thm:main:seq-models_vs_VCD}. As a more rigorous statement of this fact, we establish the following lemma.
\begin{lemma}\label{lem:app:vs_dependent_TD}
 
 Fix $\hypotheses \subseteq \Hypotheses$, and let $\compactDSet{\hypotheses} \subseteq \Instances$ be a compact-distinguishable set on $\hypotheses$. For any $\instance, \instance' \in \compactDSet{\hypotheses}$ and $\clabel, \clabel' \in \{0,1\}$ such that $(\instance, \clabel) \neq (\instance', \clabel')$, the resulting version spaces $\{\hypothesis \in \hypotheses: \hypothesis(\instance) = \clabel\}$ and $\{\hypothesis \in \hypotheses: \hypothesis(\instance') = \clabel'\}$ are different.
 
\end{lemma}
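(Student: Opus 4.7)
The plan is to do a case analysis on whether $\instance = \instance'$ or $\instance \neq \instance'$, and in each case derive a contradiction from assuming the two version spaces coincide, by invoking properties (P1) and (P2) of compact-distinguishable sets stated earlier in the paper. Let $V_\clabel = \{\hypothesis \in \hypotheses : \hypothesis(\instance) = \clabel\}$ and $V_{\clabel'}' = \{\hypothesis \in \hypotheses : \hypothesis(\instance') = \clabel'\}$.

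First, I would handle the easy case $\instance = \instance'$, $\clabel \neq \clabel'$. Here $V_\clabel$ and $V_{\clabel'}'$ are disjoint, and both are non-empty: by (P2), $\instance$ cannot be labeled $0$ or $1$ uniformly across $\hypotheses$, so there exists some $\hypothesis$ with $\hypothesis(\instance) = \clabel$ and some $\hypothesis$ with $\hypothesis(\instance) = 1-\clabel = \clabel'$. Two disjoint non-empty sets are distinct.

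Second, I would handle the case $\instance \neq \instance'$ by contradiction. Assume $V_\clabel = V_{\clabel'}'$; then for every $\hypothesis \in \hypotheses$ we have $\hypothesis(\instance) = \clabel \iff \hypothesis(\instance') = \clabel'$. Split into two sub-cases: if $\clabel = \clabel'$, then applying the biconditional together with binary labels gives $\hypothesis(\instance) = \hypothesis(\instance')$ for all $\hypothesis \in \hypotheses$, contradicting (P1). If $\clabel \neq \clabel'$, the biconditional forces $\hypothesis(\instance) \neq \hypothesis(\instance')$ for all $\hypothesis \in \hypotheses$, which again contradicts (P1).

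I do not expect any real obstacle: the whole argument is essentially a two-line case analysis, and the only substantive content is that compact-distinguishability already forbids (i) instances that are constantly labeled and (ii) pairs of instances that are either always-equal or always-opposite. Those are exactly properties (P1) and (P2) which the excerpt explicitly grants, so I can invoke them directly rather than re-deriving them from the cardinality-minimality definition of $\compactDSet{\hypotheses}$. The only small care needed is to verify in Case 1 that both sets are non-empty (otherwise two empty sets would coincide), which is precisely what (P2) guarantees.
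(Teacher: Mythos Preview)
Your proof is correct and follows essentially the same approach as the paper: a short case analysis that reduces the claim to properties (P1) and (P2) of compact-distinguishable sets. Your decomposition (first on whether $\instance = \instance'$, then on $\clabel$ versus $\clabel'$) is in fact slightly more careful than the paper's, which cases only on $\clabel = \clabel'$ versus $\clabel \neq \clabel'$ and tacitly assumes $\instance \neq \instance'$ when invoking (P1); your explicit use of (P2) to rule out the $\instance = \instance'$ sub-case patches that minor gap.
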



\begin{proof}[Proof of \lemref{lem:app:vs_dependent_TD}]
\looseness-1Denote $A = \{\hypothesis \in \hypotheses: \hypothesis(\instance) = \clabel\}$ and $B = \{\hypothesis \in \hypotheses: \hypothesis(\instance') = \clabel'\}$. We consider the following two cases: (i) $\clabel = \clabel'$ and (ii) $\clabel \neq \clabel'$. For the case where $\clabel = \clabel'$, if  $A  = B$, this would violate the first condition of the property (P1) of compact-distinguishable sets as stated after Definition~\ref{def:compactds} (i.e., there does not exist distinct $\instance, \instance'$ s.t. $\forall \hypothesis \in \hypotheses, \hypothesis(\instance) = \hypothesis(\instance')$). For the case where $\clabel \neq \clabel'$, if $A  = B$, this would violate the second condition of  (P1) (i.e., there does not exist distinct $\instance, \instance'$ s.t. $\forall \hypothesis \in \hypotheses, \hypothesis(\instance) \neq \hypothesis(\instance')$).  Hence it completes the proof.
\end{proof}
Now we are ready to prove \thmref{thm:main:seq-models_vs_VCD}. As part of the proof, we provide a recursive procedure for constructing a $\sigma_\lvs \in \SigmaLvs$ achieving $\TD_{\Instances,\Hypotheses,\hinit}(\pref_\lvs) = \bigO{\VCD(\Hypotheses,\Instances)}$.

\begin{proof}[Proof of \thmref{thm:main:seq-models_vs_VCD}]
In a nutshell, the proof consists of three steps: (i) initialization of $\pref_\lvs$, (ii) setting the preferences by recursively invoking the constructive procedure for \lemref{lem:main:div_concepts}, and (iii) showing that there exists a teaching sequence of length up to $\VCD(\Hypotheses,\Instances)$ for any target hypothesis $\hstar$. 
We summarize the recursive procedure in \algref{alg:lemma_linVCD}. In \figref{fig:warmuth-lvs-theorem-construction}, we illustrate the recursive construction of a $\pref_\lvs \in \SigmaLvs$ for the Warmuth class.

\textit{\underline{Step (i).}}
To begin with, we initialize $\pref_\lvs$ with default values which induce high $\sigma$ values (i.e., low preference), except for $\sigma(\hypothesis';\hypotheses,\hypothesis)=0$ when $\hypothesis'=\hypothesis$ (\lnref{alg:ln:init_pref} of \algref{alg:lemma_linVCD}). The self-preference guarantees that $\sigma_\lvs$ is collusion-free as per Definition~\ref{def:seq-col-free}. 

\textit{\underline{Step (ii).}}
The recursion begins at the top level with $\hypotheses = \Hypotheses$, current version space $V = \Hypotheses$, and current hypothesis $\hypothesis=\hinit$. \lemref{lem:main:div_concepts} suggests that we can partition $\hypotheses$ into $m=|\compactDSet{\hypotheses}|+1$ groups $\{\hclass{1}, ..., \hclass{m}\}$, where for all $j \in [m]$, there exists a compact-distinguishable set $\compactDSet{\hclass{j}}$ that satisfies the properties in \lemref{lem:main:div_concepts}.


Now consider the hypothesis $\hypothesis := \hinit$. 
We show that for $j \in [m - 1]$, every $(\instance_j, \clabel_j)$, where $\instance_j\in \compactDSet{\hypotheses}$ and $\clabel_j=1-\hypothesis(x_j)$, corresponds to a unique version space $V^j:= \{h \in V : h(\instance_j) =  \clabel_j\}$. To prove this statement, we consider $R^j := {V}^j \cap \hypotheses = \{\hypothesis \in \hypotheses : \hypothesis(\instance_j) = \clabel_j\}$.
According to Lemma~\ref{lem:app:vs_dependent_TD}, we know that none of $R^j$ for $j \in [m-1]$ are equal. This indicates that none of $V^j$ for $j \in [m-1]$ are equal.  

We then set the values of the preference function $\pref_\lvs(\cdot; V^j, h)$ for all $j \in [m - 1]$ and $\clabel_j = 1 - \hypothesis(\instance_j)$ (\lnref{alg:ln:set_pref_value}). Upon receiving $(\instance_j, \clabel_j)$, the learner will be steered to the next ``search space'' $\hclass{j}$, with version space $V^j$. By \lemref{lem:main:div_concepts} we have $\VCD(\hclass{j}, \compactDSet{\hclass{j}})\leq \VCD(\hypotheses, \compactDSet{\hypotheses}) - 1$.

We will build the preference function $\pref_\lvs$ recursively $m - 1$ times for each 
$( V^j, \hclass{j}, \compactDSet{\hclass{j}}, \hypothesis_{\text{next}})$, where $\hypothesis_{\text{next}}$ corresponds to the unique hypothesis identified by function $I$ (\lnref{alg:ln:assign_h_next}--\lnref{alg:ln:recursion}).
At each level of recursion, \VCD reduces by 1. We stop the recursion when $\VCD(\hclass{j}; \compactDSet{\hclass{j}})=0$, 
which corresponds to the scenario $|\hclass{j}|=1$. 

\textit{\underline{Step (iii).}}
Given the preference function constructed in \algref{alg:linvcd}, we can build up the set of teaching examples recursively. 
Consider the beginning of the teaching process, where the learner's current hypothesis is $\hinit$ and version space is $\Hypotheses$, and the goal of the teacher is to teach $\hstar$. Consider the first level of the recursion in \algref{alg:lemma_linVCD}, where we divide $\Hypotheses$ into $m=|\compactDSet{\Hypotheses}|+1$ groups $\{\hclass{1}, ..., \hclass{m}\}$. Let us consider the case where $\hstar \in \hclass{j^\star}$ with $j^\star \in [m-1]$. 
The teacher provides an example given by $(\instance=\instance_{j^\star}, \clabel=\hstar(\instance_{j^\star}))$. After receiving the teaching example, the resulting partition $\hclass{j^\star}$ will stay in the version space; meanwhile, $\hinit$ will be removed from the version space. The new version space will be $V^{j^\star}$. The learner's new hypothesis induced by the preference function is given by $\hypothesis_{\text{next}} \in \hclass{j^\star}$. By repeating this teaching process for a maximum of $\VCD(\Hypotheses,\Instances)$ steps, the learner reaches a partition of size 1 (see \emph{{Step (ii)}} for details). At this step $\hstar$ must be the only hypothesis left in the search space. Therefore, $\hypothesis_{\text{next}} = \hstar$, and the learner has reached $\hstar$. 
%
%
\end{proof}


\looseness-1\paragraph{Remark.} The recursive procedure in \algref{alg:lemma_linVCD} creates a preference function $\pref_\lvs\in\SigmaLvs$ that has teaching complexity at most $\VCD(\Hypotheses,\Instances)$. It is interesting to note that the resulting preference function $\pref_\lvs \in \SigmaWinStayLoseShift \subset \SigmaLvs$ (cf. \secref{sec:complexity:cf}), i.e., it has the characteristic of ``win-stay, loose shift'' \cite{bonawitz2014win,chen2018understanding}.
For some problems, one can achieve lower teaching complexity for a $\sigma \in \SigmaLvs$ which does not have this characteristic. 
For the Warmuth hypothesis class, the preference function $\sigma_{\lvs}$ we provided in \tableref{tab:warmuth_class_sequence_pref} has teaching complexity $1$, while the preference function we constructed in \figref{fig:warmuth-lvs-theorem-construction} has teaching complexity $2$.



\section{Properties of the Teaching Complexity Parameter $\Sigmatd{}$}\label{sec:newresults}
In this section, we analyze several properties of the teaching complexity parameter $\Sigmatd{}$ associated with different families of the preference functions. More concretely, we establish a lower bound on  the teaching parameter $\Sigmatd{}$ w.r.t. \VCD for specific hypotheses class, discuss the additivity/sub-additivity property of $\Sigmatd{}$ over disjoint domains, and compare the sizes of the different families of preference functions.


\subsection{$\Sigmatd{}$ is Not a Constant}
One question of particular interest is showing that the teaching parameter $\Sigmatd{}$ is not upper bounded by any constant independent of the hypothesis class, which would suggest a strong collusion in our model. In the following, we show that for certain hypothesis classes, $\Sigmatd{}$ is lower bounded by a function of $\VCD$, as proved in the lemma below.



\begin{lemma}
Consider the powerset hypothesis class with $\Hypotheses=\{0,1\}^d$ (this class has $\VCD=d$). Then, for any family of collusion-free preference functions $\Sigma \subseteq \Sigma_{\CF}$, $\Sigmatd{}$ is lower bounded by $\bigOmega{\frac{d}{\log d}}$ for this hypothesis class.
\end{lemma}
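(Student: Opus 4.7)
The plan is to establish the lower bound via a purely information-theoretic counting argument: since Protocol \ref{alg:interaction} is fully deterministic once $\sigma$ and $\hinit$ are fixed, different targets must be taught by different example sequences, and a short sequence cannot carry enough bits to identify a hypothesis out of $2^d$. Concretely, I would fix an arbitrary $\sigma \in \Sigma \subseteq \Sigma_{\CF}$ and set $k := \TD_{\Instances,\Hypotheses,\hinit}(\sigma)$. For each target $\hstar \in \Hypotheses$, fix an optimal teaching strategy and let $S(\hstar) = (\example_1, \dots, \example_{t(\hstar)})$ denote the resulting sequence of labeled examples delivered by Protocol \ref{alg:interaction}, so that $t(\hstar) \leq k$ and $S(\hstar) \in (\Instances \times \Clabels)^{\leq k}$.

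The crucial step is to argue that the map $\hstar \mapsto S(\hstar)$ is injective. The learner's update rule in Eq.~\eqref{eq.learners-jump} depends only on $\sigma$ and on the examples received so far, so the final hypothesis attained by the learner is a deterministic function of the input sequence. Consequently, if two distinct targets $\hstar \neq \hstar{}'$ produced the same sequence $S(\hstar) = S(\hstar{}')$, the learner would terminate at the same hypothesis, violating correctness for at least one of them. Hence $|\{S(\hstar) : \hstar \in \Hypotheses\}| = |\Hypotheses| = 2^d$.

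On the other hand, every element of $(\Instances \times \Clabels)^{\leq k}$ is a tuple of length at most $k$ over a set of size $|\Instances|\cdot|\Clabels| = 2d$, so
\begin{align*}
|(\Instances \times \Clabels)^{\leq k}| \;\leq\; \sum_{\ell=0}^{k} (2d)^{\ell} \;\leq\; (2d)^{k+1}.
\end{align*}
Combining the two bounds gives $2^d \leq (2d)^{k+1}$; taking $\log_2$ yields $d \leq (k+1)\log_2(2d)$, i.e. $k \geq \tfrac{d}{\log_2(2d)} - 1 = \bigOmega{d/\log d}$. Since this lower bound holds uniformly over $\sigma \in \Sigma$, it transfers to $\Sigma{\text -}\TD = \min_{\sigma \in \Sigma} \TD_{\Instances,\Hypotheses,\hinit}(\sigma)$, giving the claim.

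The argument is short enough that the main point to highlight is conceptual rather than technical: the bound is purely information-theoretic, reflecting that each labeled example carries at most $\log_2(2d)$ bits against the $d$ bits of entropy in the powerset class. The assumption $\Sigma \subseteq \Sigma_{\CF}$ enters only implicitly, in ensuring that all information about $\hstar$ must flow through the example stream (so that the only way to ``identify'' a target is by a distinct teaching sequence); the counting step itself never needs to unpack Definition~\ref{def:seq-col-free}. The one small care point is that optimal teaching strategies may be non-unique, but fixing any tie-breaking rule suffices to define $S(\hstar)$ unambiguously without affecting the counting bound.
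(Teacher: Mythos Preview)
Your argument is essentially the paper's own proof: count the at most $(2d)^{k+1}$ teaching sequences of length $\leq k$ against the $2^d$ hypotheses that must receive distinct sequences, and solve $2^d \leq (2d)^{k+1}$ for $k$. The only minor wrinkle is that Eq.~\eqref{eq.learners-jump} uses ``$\in$'' (ties are allowed), so the learner is not literally a deterministic function of the example stream---you should also fix a learner-side tie-breaking rule (not just the teacher's), after which your injectivity step goes through exactly as written; the paper is equally terse on this point.
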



\begin{proof}
We will use the fact that for any collusion-free preference function $\sigma \in \Sigma$,  the teaching sequences of two distinct hypotheses cannot be exactly the same. As $\Hypotheses$, $\Instances$ denote the power set of size $d \geq 1$, we know that $|\Instances| = d$, $|\Hypotheses| = 2^d$, and $\VCD(\Hypotheses, \Instances) = d$. Also for any $\Sigma$, let us denote $k := \Sigmatd{\Instances, \Hypotheses, \hypothesis_0}$. We will denote $N(k)$ to be the number of teaching sequences of size less than or equal to $k$. Since $d \geq 1$, we have that
\begin{equation*}
    N(k) \leq \sum_{i = 0}^k (2|\Instances|)^i = \sum_{i = 0}^k (2d)^i < (2d)^{k+1}.
\end{equation*}

We note that the total number of unique teaching sequences of size less than or equal to $k$ must be greater than or equal to $|\Hypotheses|$, i.e., we require $N(k) \geq |\Hypotheses| = 2^d$. Therefore, we will conclude that $2^d < (2d)^{k+1}$. This in turn requires that $k$ is $\Omega\big({\frac{d}{\log d}}\big)$.
\end{proof}

In summary, the above lemma shows the existence of hypothesis classes such that $\Sigmatd{\Instances, \Hypotheses, \hypothesis_0}$ is $\Omega\big({\frac{\VCD(\Hypotheses, \Instances)}{\log \VCD(\Hypotheses, \Instances)}}\big)$.


\subsection{Additive and Sub-additive Properties of $\Sigmatd{}$}\label{sec:newresults:sub-add}
In this section, we explore whether the teaching complexity parameter $\Sigmatd{}$ is additive or sub-additive over \emph{disjoint unions of hypothesis classes} \cite{DBLP:journals/jmlr/DoliwaFSZ14,pmlr-v98-kirkpatrick19a}.   These properties have been studied for the existing complexity measures including $\worstcase\textnormal{-}\TD$, $\RTD$, $\NCTD$, and $\VCD$~\cite{DBLP:journals/jmlr/DoliwaFSZ14,pmlr-v98-kirkpatrick19a}. For instance, \cite{DBLP:journals/jmlr/DoliwaFSZ14} leverages the additivity property of $\RTD$ and $\VCD$ to show that the gap between these two complexity measures can be made arbitrarily large by iteratively constructing larger hypothesis classes from the Warmuth hypothesis class. Next, we will formally introduce the notion of additivity/sub-additivity over disjoint unions of hypothesis classes, and then study it for the complexity measure $\Sigmatd{}$ over different families $\Sigma$. These definitions are inspired by existing work, in particular, we refer the reader to Lemma 16 of \cite{DBLP:journals/jmlr/DoliwaFSZ14}, and Section 5 of \cite{pmlr-v98-kirkpatrick19a}. 



\begin{definitionNew}[Disjoint union of hypothesis classes]
Consider two hypothesis classes $\Hypotheses^a$ and $\Hypotheses^b$ over two disjoint instance spaces $\Instances^a$ and $\Instances^b$ respectively, i.e.,  $\Instances^a \cap \Instances^b = \varnothing$. We define a disjoint union of these hypothesis classes as $\Hypotheses^a \uplus \Hypotheses^b = \{\hypothesis^a \uplus \hypothesis^b \mid \hypothesis^a \in \Hypotheses^a, \hypothesis^b \in \Hypotheses^b\}$ where $\hypothesis := \hypothesis^a \uplus \hypothesis^b$ is a function mapping $\Instances^a \cup \Instances^b$ to $\Clabels$ such that
\begin{align*}
\hypothesis(\instance) = 
\begin{cases}
1 & \text{~if~} \hypothesis^a(\instance) = 1, \text{~when~} \instance \in \Instances^a \\ 
1 & \text{~if~} \hypothesis^b(\instance) = 1, \text{~when~} \instance \in \Instances^b\\
0 & \text{otherwise}
\end{cases}
\end{align*}
\label{def:disjointunion}
\end{definitionNew}

\vspace{2mm}
\begin{definitionNew}[Additive and sub-additive property]
Consider a family of preference functions $\Sigma$ to be one of the families studied, i.e.,  $\Sigma \in \{\SigmaConst, \SigmaGlobal, \SigmaGvs, \SigmaLocal, \SigmaWinStayLoseShift, \SigmaLvs\}$. Then, $\Sigmatd{}$ is additive/sub-additive over the operator $\uplus$ as defined in Definition~\ref{def:disjointunion}, if for any two hypothesis classes $\Hypotheses^a$ and $\Hypotheses^b$ over two disjoint instance spaces $\Instances^a$ and $\Instances^b$ respectively, and any $\hypothesis_0^a \in \Hypotheses^a,  \hypothesis_0^b \in \Hypotheses^b$, the following holds:
\begin{align*}
    (\textbf{Additivity}) \ \ &\Sigmatd{\Instances^a \cup \Instances^b, \Hypotheses^a \uplus \Hypotheses^b, \hypothesis_0^a \uplus \hypothesis_0^b} = \Sigmatd{\Instances^a, \Hypotheses^a,\hypothesis_0^a} + \Sigmatd{\Instances^b, \Hypotheses^b,\hypothesis_0^b}\\
    (\textbf{Sub-additivity}) \ \ &\Sigmatd{\Instances^a \cup \Instances^b, \Hypotheses^a \uplus \Hypotheses^b, \hypothesis_0^a \uplus \hypothesis_0^b}  \leq  \Sigmatd{\Instances^a, \Hypotheses^a,\hypothesis_0^a} + \Sigmatd{\Instances^b, \Hypotheses^b,\hypothesis_0^b}  
\end{align*}
\label{def:additive-subadditive}
\end{definitionNew}

We first establish the additive/sub-additive properties of batch preference families, namely,  $\Sigma \in \{\SigmaConst, \SigmaGlobal, \SigmaGvs\}$. In  Lemma 16 of \cite{DBLP:journals/jmlr/DoliwaFSZ14}, it is shown that $\RTD$ is additive. Similarly, it can be shown that $\worstcase\textnormal{-}\TD$ is additive: This follows from \cite{goldman1995complexity} where it is clear that the optimal teaching set for the worst-case model can be obtained as a solution to a set cover problem, and a disjoint union of hypothesis classes leads to two disjoint set cover problems. In the recent work \cite{pmlr-v98-kirkpatrick19a}, it has been proven that $\NCTD$ is sub-additive; also, it has been shown that for certain hypothesis classes $\NCTD$ acts \emph{strictly} sub-additive, i.e., the $\leq$ relation in Definition~\ref{def:additive-subadditive} holds with $<$. The equivalence results in the Theorem~\ref{theorem:equivelence-results} directly establish that $\SigmaConstTD$ is additive, $\SigmaGlobalTD$ is additive, and $\SigmaGvsTD$ is sub-additive.

Next, we study these properties for local families of preference functions that depend on the learner's current hypothesis. In particular, the lemma below establishes the sub-additive property for an important family of local preference functions $\SigmaWinStayLoseShift$; furthermore, this property holds strictly.

\vspace{2mm}
\begin{lemma}\label{lem:additional:subadditive}
Consider the family of preference functions $\Sigma := \SigmaWinStayLoseShift$. Then, for any two hypothesis classes $\Hypotheses^a$ and $\Hypotheses^b$ over two disjoint instance spaces  $\Instances^a$ and $\Instances^b$ respectively, and any $\hypothesis_0^a \in \Hypotheses^a,  \hypothesis_0^b \in \Hypotheses^b$, the sub-additive property holds, i.e.,
\begin{align*}
    \Sigmatd{\Instances^a \cup \Instances^b, \Hypotheses^a \uplus \Hypotheses^b, \hypothesis_0^a \uplus \hypothesis_0^b} \leq \Sigmatd{\Instances^a, \Hypotheses^a,\hypothesis_0^a} + \Sigmatd{\Instances^b, \Hypotheses^b,\hypothesis_0^b}\\
\vspace{-8mm}
\end{align*}
Furthermore, the sub-additive property holds strictly, i.e., there exist hypothesis classes where the relation $\leq$ above holds with $<$.
\end{lemma}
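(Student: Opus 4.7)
My plan is to prove sub-additivity by explicitly constructing an additive combined preference function on the disjoint union from optimal ones on each side, then to handle the strict version by exhibiting a concrete witness class.

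\textbf{Combined preference function.} Let $\sigma^a \in \SigmaWinStayLoseShift$ and $\sigma^b \in \SigmaWinStayLoseShift$ be optimal preference functions achieving $k_a := \Sigmatd{\Instances^a,\Hypotheses^a,\hypothesis_0^a}$ and $k_b := \Sigmatd{\Instances^b,\Hypotheses^b,\hypothesis_0^b}$ respectively. Because $\Instances^a \cap \Instances^b = \emptyset$, every version space that arises during teaching on $\Hypotheses^a \uplus \Hypotheses^b$ factors as $H = H^a \uplus H^b$ with $H^a \subseteq \Hypotheses^a$ and $H^b \subseteq \Hypotheses^b$. Define
\[
\sigma(h'^a \uplus h'^b;\, H^a \uplus H^b,\, h^a \uplus h^b) := \sigma^a(h'^a;H^a,h^a) + \sigma^b(h'^b;H^b,h^b),
\]
extending to non-factored version spaces by coordinate projections. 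I would then verify that $\sigma \in \SigmaWinStayLoseShift$: each summand is uniquely minimized over its own class at the corresponding coordinate of $h^a \uplus h^b$ by the win-stay property of $\sigma^a$ and $\sigma^b$, so the sum is uniquely minimized over $\Hypotheses^a \uplus \Hypotheses^b$ at $h^a \uplus h^b$, which also yields collusion-freeness.

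\textbf{Concatenated teaching.} Run the two individual optimal teaching sequences back-to-back: first the $a$-sequence that teaches $\hypothesis^{*a}$ under $\sigma^a$, then the $b$-sequence that teaches $\hypothesis^{*b}$ under $\sigma^b$. By induction on the step index in Phase~1, the version space has the form $H_t^a \uplus \Hypotheses^b$ and the learner sits at (current $a$-hypothesis)$\,\uplus\,\hypothesis_0^b$: the additive $\sigma$ splits the learner's $\argmin$, and the win-stay property of $\sigma^b$ pins the $b$-coordinate at $\hypothesis_0^b$ because no $b$-example has yet been given, while the $a$-coordinate evolves exactly as in the standalone $\sigma^a$-teaching. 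After $k_a$ steps the learner reaches $\hypothesis^{*a} \uplus \hypothesis_0^b$. Phase~2 is symmetric, with the $a$-coordinate now pinned at $\hypothesis^{*a}$ and the $b$-coordinate driven to $\hypothesis^{*b}$ in at most $k_b$ further steps, yielding $\TD(\sigma) \leq k_a+k_b$. Taking the infimum over $\sigma \in \SigmaWinStayLoseShift$ establishes the claimed sub-additive inequality.

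\textbf{Strict inequality.} For the strict version I would exhibit a pair of small hypothesis classes together with a non-additive $\sigma \in \SigmaWinStayLoseShift$ that achieves teaching complexity strictly below $k_a+k_b$. The idea is to correlate preferences across the two coordinates so that a single example that renders the learner's current hypothesis inconsistent induces a simultaneous jump in both coordinates, saving at least one step relative to any additive strategy. The main obstacle is preserving the global win-stay constraint (uniqueness of $\argmin$ at the current hypothesis over the full class $\Hypotheses^a \uplus \Hypotheses^b$) while encoding enough cross-coordinate structure, since these two requirements pull against each other; my approach is to start from a class known to be strictly sub-additive for $\NCTD$ from \cite{pmlr-v98-kirkpatrick19a} and verify by direct enumeration that a compatible $\sigma \in \SigmaWinStayLoseShift$ can be designed.
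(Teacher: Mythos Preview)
Your sub-additivity argument (Part~1) is essentially identical to the paper's: both construct the additive preference $\sigma = \sigma^{a\star} + \sigma^{b\star}$, check that it lies in $\SigmaWinStayLoseShift$, and teach by concatenating the two individual sequences while the inactive coordinate is pinned by win-stay. The only cosmetic difference is the order of the two phases.

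For strict sub-additivity (Part~2), the paper takes a more concrete and self-contained route than you propose. It lets $\Hypotheses^a$ and $\Hypotheses^b$ be the powerset classes of sizes $3$ and $4$, so that $\Hypotheses^a \uplus \Hypotheses^b$ is the powerset class of size $7$. A one-line counting argument gives $\SigmaWinStayLoseShiftTD \geq 2$ on each factor: from $\hypothesis_0$ any $\sigma \in \SigmaWinStayLoseShift$ forces the first example to be inconsistent with $\hypothesis_0$, and there are only $|\Instances|$ such examples, which cannot cover all $2^{|\Instances|}-1$ non-initial targets in one step. For the union, the paper simply invokes its earlier \lemref{lem:lvs-H7-leq3}, which already constructed an explicit $\sigma \in \SigmaLocal \cap \SigmaWinStayLoseShift$ with $\TD = 3$ on the size-$7$ powerset, giving $3 < 2+2$. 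Your plan to start from an $\NCTD$-strictly-subadditive example from \cite{pmlr-v98-kirkpatrick19a} is riskier: $\NCTD$ corresponds to $\SigmaGvs$, not $\SigmaWinStayLoseShift$, so strict sub-additivity for one does not transfer to the other without the very enumeration you flag as the main obstacle. Your ``simultaneous jump in both coordinates'' intuition is exactly the mechanism the paper's size-$7$ tree exploits, but the paper gets the witness for free from a lemma it has already proved, avoiding any new search.
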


\vspace{4mm}
The proof is provided in Appendix~\ref{appendix:newresults:sub-add}. We conjecture that the sub-additive property also holds for the more general family $\SigmaLvs$ and leave the proof as future work.

\subsection{Teaching Complexity $\Sigmatd{}$  w.r.t. the Size of $\Sigma$}
The results in Table~\ref{tab:overview} showcase that the teaching complexity $\Sigmatd{}$ goes down as we consider more powerful families of preference functions.  Here, we discuss this reduction in the teaching complexity from the viewpoint of the size of the family---the larger the set $\Sigma$, the teacher/learner can find a better $\sigma \in \Sigma$ in Eq.~\eqref{eq:sigmatd} achieving a lower teaching complexity. The Venn diagram in \figref{fig:venndiagram_seq} already illustrated the relationship between different families of preference functions $\Sigma$, and here we provide a more quantitative view of this Venn diagram. 

Consider a hypothesis class $\Hypotheses$ over an instance space $\Instances$. Let $m = |\Hypotheses|$ denote the size of the hypothesis class, $N$ denote the number of possible version spaces that can be induced by labeled instances (upper bounded by $2^{m})$, and let $C:\mathbb{Z}_{+} \rightarrow \mathbb{Z}_{+}$ be  a function given by:
\begin{align*}
	C(m) =\sum_{k = 1}^m \ \ \sum_{t \in \mathbb{Z}_{+}:\  t_1 + t_2 + \ldots t_{k} = m} \left(\begin{array}{c} m \\ t_1, t_2, \ldots, t_{k} \end{array}\right).
\end{align*}

\looseness-1Next we discuss the size of different families denoted as $|\Sigma|$ in terms of $m$, $N$,  and $C(m)$. Note that we are not interested in the actual number of possible $\sigma$ functions in the set $\Sigma$---this number is unbounded even for the simplest family $\SigmaConst$ as the preferences are given in terms of real-valued functions. Instead, we measure the size $|\Sigma|$ in terms of the possible number of preference relations that can be induced within a given family. Below, we illustrate how the  size of the families grows as we go from $\SigmaConst$, $\SigmaGlobal$ to $\SigmaGvs$/$\SigmaLocal$, and finally to $\SigmaLvs$:
\begin{itemize}
	\item $\SigmaConst$:  We have $|\SigmaConst|=1$ as all the hypotheses are equally preferred for any $\sigma \in \SigmaConst$.
	\item $\SigmaGlobal$: We have $|\SigmaGlobal| = C(m)$ as the function $C$ defined above computes the number of preference relations that can be induced by a global preference function. 
	\item $\SigmaGvs$: These preference functions depend on the learner’s version space and $|\SigmaGvs|$ grows as $\big(C(m)\big)^{N}$.
	\item $\SigmaLocal$:  These preference functions depend on the learner’s current hypothesis and $|\SigmaLocal|$ grows as $\big(C(m)\big)^{m}$.
	\item $\SigmaLvs$: These preference functions depend on the learner’s current hypothesis and the version space, inducing a powerful family of preference relations. $|\SigmaLvs|$ grows as $\big(C(m)\big)^{m \cdot N}$.
\end{itemize}
	


%
%

\paragraph{Remark on run time complexity.} While run time has not been the focus of this paper, it would be interesting to characterize the presumably increased run time complexity of sequential learners and teachers with complex preference functions. Furthermore, as the size of the families grows, the problem of finding the best preference function $\sigma$ in a given family $\Sigma$ that achieves the minima in Eq.~\eqref{eq:sigmatd} becomes more computationally challenging.

\section{Conclusion and Future Work}\label{sec:discussion}
In this paper, we introduced a general preference-based teaching model, which encompasses a number of previously studied batch and sequential models. In particular, we showed that the classical worst-case teaching model, the recursive/preference-based teaching model, the no-clash teaching model, and the local preference-based teaching model could be viewed as special cases of our model, corresponding to different families of preference functions. We then provided a procedure for constructing preference functions $\sigma$ which induce a novel family of sequential models with teaching complexity $\TD(\sigma)$ linear in the VC dimension: this is in contrast to the best-known complexity result for the batch models, which is quadratic in the VC dimension. We further analyzed several properties of the teaching complexity parameter $\TD(\sigma)$ associated with different families of the preference functions.
One fundamental aspect of modeling teacher-learner interactions is the notion of collusion-free teaching. Collusion-freeness for the batched setting is well established in the research community and \NCTD characterizes the complexity of the strongest collusion-free batch model. In this paper, we are introducing a new notion of collusion-freeness for the sequential setting (\defref{def:seq-col-free}). As discussed at the end of \secref{sec:lvs_linvcd}, a stricter notion is the ``win-stay lose-shift'' condition, which is easier to validate without running the teaching algorithm. In contrast, the condition of \defref{def:seq-col-free} is more involved in terms of validation and is a joint property of the teacher-learner pair. One intriguing question for future work is defining notions of collusion-free teaching in sequential models and understanding their implications on teaching complexity.
\looseness-1 Our framework provides novel tools for reasoning about teaching complexity by constructing preference functions. This opens up an interesting direction of research to tackle important open problems, such as proving whether \NCTD or \RTD is linear in \VCD~\cite{simon2015open,chen2016recursive,DBLP:journals/corr/HuWLW17,pmlr-v98-kirkpatrick19a}. In this paper, we showed that neither of the families $\SigmaGvs$ and $\SigmaLocal$ dominates the other (Theorem~\ref{thm:local-eq-GVS}). As a direction for future work, it would be important to further quantify the complexity of $\SigmaLocal$ family.

\subsubsection*{Acknowledgements}
Yuxin Chen is supported by NSF 2037026, and a C3.ai DTI Research Award 049755. 
Xiaojin Zhu is supported by NSF 1545481, 1561512, 1623605, 1704117, 1836978 and the MADLab AF CoE FA9550-18-1-0166.

\bibliography{main}

\begin{thebibliography}{43}
\providecommand{\natexlab}[1]{#1}
\providecommand{\url}[1]{\texttt{#1}}
\expandafter\ifx\csname urlstyle\endcsname\relax
  \providecommand{\doi}[1]{doi: #1}\else
  \providecommand{\doi}{doi: \begingroup \urlstyle{rm}\Url}\fi

\bibitem[Angluin and Kri{\c{k}}is(1997)]{angluin1997teachers}
Dana Angluin and M{\=a}rti{\c{n}}{\v{s}} Kri{\c{k}}is.
\newblock Teachers, learners and black boxes.
\newblock In \emph{COLT}, pages 285--297, 1997.

\bibitem[Balbach(2008)]{balbach2008measuring}
Frank~J Balbach.
\newblock Measuring teachability using variants of the teaching dimension.
\newblock \emph{Theoretical Computer Science}, 397\penalty0 (1-3):\penalty0
  94--113, 2008.

\bibitem[Balbach and Zeugmann(2005)]{balbach2005teaching}
Frank~J Balbach and Thomas Zeugmann.
\newblock Teaching learners with restricted mind changes.
\newblock In \emph{ALT}, pages 474--489, 2005.

\bibitem[Blumer et~al.(1989)Blumer, Ehrenfeucht, Haussler, and
  Warmuth]{blumer1989learnability}
Anselm Blumer, Andrzej Ehrenfeucht, David Haussler, and Manfred~K Warmuth.
\newblock Learnability and the vapnik-chervonenkis dimension.
\newblock \emph{Journal of the ACM (JACM)}, 36\penalty0 (4):\penalty0 929--965,
  1989.

\bibitem[Bonawitz et~al.(2014)Bonawitz, Denison, Gopnik, and
  Griffiths]{bonawitz2014win}
Elizabeth Bonawitz, Stephanie Denison, Alison Gopnik, and Thomas~L Griffiths.
\newblock Win-stay, lose-sample: A simple sequential algorithm for
  approximating bayesian inference.
\newblock \emph{Cognitive psychology}, 74:\penalty0 35--65, 2014.

\bibitem[Cakmak and Lopes(2012)]{cakmak2012algorithmic}
Maya Cakmak and Manuel Lopes.
\newblock Algorithmic and human teaching of sequential decision tasks.
\newblock In \emph{AAAI}, 2012.

\bibitem[Chen et~al.(2016)Chen, Cheng, and Tang]{chen2016recursive}
Xi~Chen, Yu~Cheng, and Bo~Tang.
\newblock On the recursive teaching dimension of vc classes.
\newblock In \emph{Advances in Neural Information Processing Systems}, pages
  2164--2171, 2016.

\bibitem[Chen et~al.(2018)Chen, Singla, Mac~Aodha, Perona, and
  Yue]{chen2018understanding}
Yuxin Chen, Adish Singla, Oisin Mac~Aodha, Pietro Perona, and Yisong Yue.
\newblock Understanding the role of adaptivity in machine teaching: The case of
  version space learners.
\newblock In \emph{Advances in Neural Information Processing Systems}, pages
  1476--1486, 2018.

\bibitem[Cicalese et~al.(2020)Cicalese, Filho, Laber, and
  Molinaro]{cicalese-icml20-teaching-with-limited}
Ferdinando Cicalese, Sergio Filho, Eduardo Laber, and Marco Molinaro.
\newblock Teaching with limited information on the learner's behaviour.
\newblock In \emph{ICML}, 2020.

\bibitem[Dasgupta et~al.(2019)Dasgupta, Hsu, Poulis, and
  Zhu]{dasgupta2019teaching}
Sanjoy Dasgupta, Daniel Hsu, Stefanos Poulis, and Xiaojin Zhu.
\newblock Teaching a black-box learner.
\newblock In \emph{ICML}, pages 1547--1555, 2019.

\bibitem[Devidze et~al.(2020)Devidze, Mansouri, Haug, Chen, and
  Singla]{DBLP:conf/ijcai/DevidzeMH0S20}
Rati Devidze, Farnam Mansouri, Luis Haug, Yuxin Chen, and Adish Singla.
\newblock Understanding the power and limitations of teaching with imperfect
  knowledge.
\newblock In \emph{IJCAI}, 2020.

\bibitem[Doliwa et~al.(2014{\natexlab{a}})Doliwa, Fan, Simon, and
  Zilles]{DBLP:journals/jmlr/DoliwaFSZ14}
Thorsten Doliwa, Gaojian Fan, Hans~Ulrich Simon, and Sandra Zilles.
\newblock Recursive teaching dimension, vc-dimension and sample compression.
\newblock \emph{JMLR}, 15\penalty0 (1):\penalty0 3107--3131,
  2014{\natexlab{a}}.

\bibitem[Doliwa et~al.(2014{\natexlab{b}})Doliwa, Fan, Simon, and
  Zilles]{doliwa2014recursive}
Thorsten Doliwa, Gaojian Fan, Hans~Ulrich Simon, and Sandra Zilles.
\newblock Recursive teaching dimension, vc-dimension and sample compression.
\newblock \emph{JMLR}, 15\penalty0 (1):\penalty0 3107--3131,
  2014{\natexlab{b}}.

\bibitem[Floyd and Warmuth(1995)]{floyd1995sample}
Sally Floyd and Manfred Warmuth.
\newblock Sample compression, learnability, and the vapnik-chervonenkis
  dimension.
\newblock \emph{Machine learning}, 21\penalty0 (3):\penalty0 269--304, 1995.

\bibitem[Gao et~al.(2017)Gao, Ries, Simon, and Zilles]{gao2017preference}
Ziyuan Gao, Christoph Ries, Hans~U Simon, and Sandra Zilles.
\newblock Preference-based teaching.
\newblock \emph{JMLR}, 18\penalty0 (31):\penalty0 1--32, 2017.

\bibitem[Goldman and Kearns(1995)]{goldman1995complexity}
Sally~A Goldman and Michael~J Kearns.
\newblock On the complexity of teaching.
\newblock \emph{Journal of Computer and System Sciences}, 50\penalty0
  (1):\penalty0 20--31, 1995.

\bibitem[Goldman and Mathias(1996)]{goldman1996teaching}
Sally~A Goldman and H~David Mathias.
\newblock Teaching a smarter learner.
\newblock \emph{Journal of Computer and System Sciences}, 52\penalty0
  (2):\penalty0 255--267, 1996.

\bibitem[Haug et~al.(2018)Haug, Tschiatschek, and Singla]{haug2018teaching}
Luis Haug, Sebastian Tschiatschek, and Adish Singla.
\newblock Teaching inverse reinforcement learners via features and
  demonstrations.
\newblock In \emph{Advances in Neural Information Processing Systems}, pages
  8464--8473, 2018.

\bibitem[Hu et~al.(2017)Hu, Wu, Li, and Wang]{DBLP:journals/corr/HuWLW17}
Lunjia Hu, Ruihan Wu, Tianhong Li, and Liwei Wang.
\newblock Quadratic upper bound for recursive teaching dimension of finite {VC}
  classes.
\newblock In \emph{COLT}, pages 1147--1156, 2017.

\bibitem[Hunziker et~al.(2019)Hunziker, Chen, Mac~Aodha, Rodriguez, Krause,
  Perona, Yue, and Singla]{hunziker2018teaching}
Anette Hunziker, Yuxin Chen, Oisin Mac~Aodha, Manuel~Gomez Rodriguez, Andreas
  Krause, Pietro Perona, Yisong Yue, and Adish Singla.
\newblock Teaching multiple concepts to a forgetful learner.
\newblock In \emph{Advances in Neural Information Processing Systems}, 2019.

\bibitem[Kamalaruban et~al.(2019)Kamalaruban, Devidze, Cevher, and
  Singla]{DBLP:conf/ijcai/KamalarubanDCS19}
Parameswaran Kamalaruban, Rati Devidze, Volkan Cevher, and Adish Singla.
\newblock Interactive teaching algorithms for inverse reinforcement learning.
\newblock In \emph{IJCAI}, pages 2692--2700, 2019.

\bibitem[Kirkpatrick et~al.(2019)Kirkpatrick, Simon, and
  Zilles]{pmlr-v98-kirkpatrick19a}
David Kirkpatrick, Hans~U. Simon, and Sandra Zilles.
\newblock Optimal collusion-free teaching.
\newblock In \emph{ALT}, volume~98, pages 506--528, 2019.

\bibitem[Kuhlmann(1999)]{kuhlmann1999teaching}
Christian Kuhlmann.
\newblock On teaching and learning intersection-closed concept classes.
\newblock In \emph{European Conference on Computational Learning Theory}, pages
  168--182. Springer, 1999.

\bibitem[Kuzmin and K.~Warmuth(2007)]{KKW07-nonclashing}
Dima Kuzmin and Manfred K.~Warmuth.
\newblock Unlabeled compression schemes for maximum classes.
\newblock \emph{JMLR}, 8:\penalty0 2047--2081, 2007.

\bibitem[Lessard et~al.(2019)Lessard, Zhang, and
  Zhu]{DBLP:conf/aistats/LessardZ019}
Laurent Lessard, Xuezhou Zhang, and Xiaojin Zhu.
\newblock An optimal control approach to sequential machine teaching.
\newblock In \emph{AISTATS}, pages 2495--2503, 2019.

\bibitem[Liu et~al.(2017)Liu, Dai, Humayun, Tay, Yu, Smith, Rehg, and
  Song]{liu2017iterative}
Weiyang Liu, Bo~Dai, Ahmad Humayun, Charlene Tay, Chen Yu, Linda~B. Smith,
  James~M. Rehg, and Le~Song.
\newblock Iterative machine teaching.
\newblock In \emph{ICML}, pages 2149--2158, 2017.

\bibitem[Liu et~al.(2018)Liu, Dai, Li, Liu, Rehg, and
  Song]{DBLP:conf/icml/LiuDLLRS18}
Weiyang Liu, Bo~Dai, Xingguo Li, Zhen Liu, James~M. Rehg, and Le~Song.
\newblock Towards black-box iterative machine teaching.
\newblock In \emph{ICML}, pages 3147--3155, 2018.

\bibitem[Mansouri et~al.(2019)Mansouri, Chen, Vartanian, Zhu, and
  Singla]{mansouri2019preference}
Farnam Mansouri, Yuxin Chen, Ara Vartanian, Jerry Zhu, and Adish Singla.
\newblock Preference-based batch and sequential teaching: Towards a unified
  view of models.
\newblock In \emph{Advances in Neural Information Processing Systems}, pages
  9199--9209, 2019.

\bibitem[Ott and Stephan(1999)]{ott1999avoiding}
Matthias Ott and Frank Stephan.
\newblock Avoiding coding tricks by hyperrobust learning.
\newblock In \emph{European Conference on Computational Learning Theory}, pages
  183--197. Springer, 1999.

\bibitem[Rakhsha et~al.(2020)Rakhsha, Radanovic, Devidze, Zhu, and
  Singla]{icml20-policy-teaching}
Amin Rakhsha, Goran Radanovic, Rati Devidze, Xiaojin Zhu, and Adish Singla.
\newblock Policy teaching via environment poisoning: Training-time adversarial
  attacks against reinforcement learning.
\newblock In \emph{ICML}, 2020.

\bibitem[Simon and Zilles(2015)]{simon2015open}
Hans~U Simon and Sandra Zilles.
\newblock Open problem: Recursive teaching dimension versus vc dimension.
\newblock In \emph{COLT}, pages 1770--1772, 2015.

\bibitem[Singla et~al.(2013)Singla, Bogunovic, Bart{\'o}k, Karbasi, and
  Krause]{singla2013actively}
Adish Singla, Ilija Bogunovic, G~Bart{\'o}k, A~Karbasi, and A~Krause.
\newblock On actively teaching the crowd to classify.
\newblock In \emph{NIPS Workshop on Data Driven Education}, 2013.

\bibitem[Singla et~al.(2014)Singla, Bogunovic, Bart{\'o}k, Karbasi, and
  Krause]{singla2014near}
Adish Singla, Ilija Bogunovic, G{\'a}bor Bart{\'o}k, Amin Karbasi, and Andreas
  Krause.
\newblock Near-optimally teaching the crowd to classify.
\newblock In \emph{ICML}, pages 154--162, 2014.

\bibitem[Tschiatschek et~al.(2019)Tschiatschek, Ghosh, Haug, Devidze, and
  Singla]{tschiatschek2019learner}
Sebastian Tschiatschek, Ahana Ghosh, Luis Haug, Rati Devidze, and Adish Singla.
\newblock Learner-aware teaching: Inverse reinforcement learning with
  preferences and constraints.
\newblock In \emph{Advances in Neural Information Processing Systems}, 2019.

\bibitem[Vapnik and Chervonenkis(1971)]{vapnik1971uniform}
VN~Vapnik and A~Ya Chervonenkis.
\newblock On the uniform convergence of relative frequencies of events to their
  probabilities.
\newblock \emph{Theory of Probability and its Applications}, 16\penalty0
  (2):\penalty0 264, 1971.

\bibitem[Zhang et~al.(2020{\natexlab{a}})Zhang, Bharti, Ma, Singla, and
  Zhu]{zhang2020teaching}
Xuezhou Zhang, Shubham~Kumar Bharti, Yuzhe Ma, Adish Singla, and Xiaojin Zhu.
\newblock The teaching dimension of q-learning.
\newblock \emph{CoRR}, abs/2006.09324, 2020{\natexlab{a}}.

\bibitem[Zhang et~al.(2020{\natexlab{b}})Zhang, Ma, Singla, and
  Zhu]{icml20-adaptive-reward-poisoning}
Xuezhou Zhang, Yuzhe Ma, Adish Singla, and Xiaojin Zhu.
\newblock Adaptive reward-poisoning attacks against reinforcement learning.
\newblock In \emph{ICML}, 2020{\natexlab{b}}.

\bibitem[Zhu(2013)]{zhu2013machine}
Xiaojin Zhu.
\newblock Machine teaching for bayesian learners in the exponential family.
\newblock In \emph{Advances in Neural Information Processing Systems}, pages
  1905--1913, 2013.

\bibitem[Zhu(2015)]{zhu2015machine}
Xiaojin Zhu.
\newblock Machine teaching: An inverse problem to machine learning and an
  approach toward optimal education.
\newblock In \emph{AAAI}, pages 4083--4087, 2015.

\bibitem[Zhu(2018)]{zhu2018optimal}
Xiaojin Zhu.
\newblock An optimal control view of adversarial machine learning.
\newblock \emph{CoRR}, abs/1811.04422, 2018.

\bibitem[Zhu et~al.(2018)Zhu, Singla, Zilles, and
  Rafferty]{DBLP:journals/corr/ZhuSingla18}
Xiaojin Zhu, Adish Singla, Sandra Zilles, and Anna~N. Rafferty.
\newblock An overview of machine teaching.
\newblock \emph{CoRR}, abs/1801.05927, 2018.

\bibitem[Zilles et~al.(2008)Zilles, Lange, Holte, and
  Zinkevich]{zilles2008teaching}
Sandra Zilles, Steffen Lange, Robert Holte, and Martin Zinkevich.
\newblock Teaching dimensions based on cooperative learning.
\newblock In \emph{COLT}, pages 135--146, 2008.

\bibitem[Zilles et~al.(2011)Zilles, Lange, Holte, and
  Zinkevich]{zilles2011models}
Sandra Zilles, Steffen Lange, Robert Holte, and Martin Zinkevich.
\newblock Models of cooperative teaching and learning.
\newblock \emph{JMLR}, 12\penalty0 (Feb):\penalty0 349--384, 2011.

\end{thebibliography}

\iftoggle{longversion}{
\clearpage
\onecolumn
\appendix 
{\allowdisplaybreaks
\section{Supplementary Materials for \secref{sec:non-seq-family-pref}: Proof of \thmref{theorem:equivelence-results}} \label{sec.appendix.batch-models}

Before we prove our main results for the batch models, we first establish the following results on the non-clashing teaching. The notion of a non-clashing teacher was first introduced by \cite{KKW07-nonclashing}. Our proof is inspired by \cite{pmlr-v98-kirkpatrick19a}, which shows the non-clashing property for collusion-free teacher-learner pair in the batch setting. 
\begin{lemma}\label{lm:seq-vs-bat_coll-free} 
Consider a collusion-free preference function $\sigma \in \SigmaGvs$. Then, a successful teacher $\Teacher$ w.r.t. a learner $L_\sigma$ with preferences $\sigma$ must be non-clashing on $\Hypotheses$. i.e., for any two distinct $\hypothesis, \hypothesis' \in \Hypotheses$ such that $\Teacher(\hypothesis)$ is consistent with $\hypothesis'$, $\Teacher(\hypothesis')$ cannot be consistent with $\hypothesis$.
\end{lemma}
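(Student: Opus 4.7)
The plan is to argue by contradiction, applying the collusion-free property symmetrically to the two teaching sets. Suppose for contradiction there exist distinct $\hypothesis, \hypothesis' \in \Hypotheses$ with $\Teacher(\hypothesis)$ consistent with $\hypothesis'$ and $\Teacher(\hypothesis')$ consistent with $\hypothesis$. Write $V_\hypothesis := \Hypotheses(\Teacher(\hypothesis))$ and $V_{\hypothesis'} := \Hypotheses(\Teacher(\hypothesis'))$, so that $\hypothesis, \hypothesis'$ both lie in $V := V_\hypothesis \cap V_{\hypothesis'} = \Hypotheses(\Teacher(\hypothesis)\cup\Teacher(\hypothesis'))$.

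The first step is to convert ``$\Teacher$ is a successful teacher'' into a uniqueness statement about the learner's argmin. Since the learner's update rule \eqref{eq.learners-jump} is allowed to break ties arbitrarily inside $\argmin_{\hypothesis''\in V_\hypothesis}\sigma(\hypothesis''; V_\hypothesis, \cdot)$, a teacher that succeeds on the target $\hypothesis$ in the worst case must ensure that this argmin is the singleton $\{\hypothesis\}$; otherwise some valid tie-break would leave the learner at a different hypothesis. The same reasoning gives $\argmin_{\hypothesis''\in V_{\hypothesis'}}\sigma(\hypothesis''; V_{\hypothesis'}, \cdot)=\{\hypothesis'\}$. The third argument is omitted because $\sigma\in\SigmaGvs$ does not depend on the learner's current hypothesis.

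The second step applies \defref{def:seq-col-free} twice in a mirror-symmetric fashion. Instantiate it once with current version space $V_\hypothesis$, uniquely preferred $\hat{\hypothesis}=\hypothesis$, and adversarial extension $S:=\Teacher(\hypothesis')$, which is consistent with $\hypothesis$ by assumption; collusion-freeness then yields $\argmin_{\hypothesis''\in V}\sigma(\hypothesis''; V, \hypothesis)=\{\hypothesis\}$. Instantiate it a second time with the roles of $\hypothesis$ and $\hypothesis'$ swapped, exploiting the assumption that $\Teacher(\hypothesis)$ is consistent with $\hypothesis'$, to obtain $\argmin_{\hypothesis''\in V}\sigma(\hypothesis''; V, \hypothesis')=\{\hypothesis'\}$. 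Since $\sigma\in\SigmaGvs$ ignores its third argument, both expressions denote the same set $\argmin_{\hypothesis''\in V}\sigma(\hypothesis''; V, \cdot)$, which cannot simultaneously equal two distinct singletons $\{\hypothesis\}$ and $\{\hypothesis'\}$—a contradiction.

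The main obstacle I anticipate is the first step: one must argue carefully that the worst-case definition of ``successful teacher'' genuinely forces the post-batch argmin to be a singleton, not merely to contain the target. This is the only place where the tie-breaking freedom in \eqref{eq.learners-jump} enters, and resolving it against the teacher in the worst-case formulation is what makes the subsequent symmetric applications of collusion-freeness go through cleanly. Once that uniqueness is nailed down, the rest of the argument is essentially two invocations of \defref{def:seq-col-free} combined with the defining property of $\SigmaGvs$.
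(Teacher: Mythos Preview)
Your proposal is correct and follows essentially the same approach as the paper's proof: both argue by contradiction, use the worst-case tie-breaking in \eqref{eq.learners-jump} to force the post-teaching argmin to be a singleton, then apply the collusion-free property symmetrically to push both singletons down to the common version space $\Hypotheses(\Teacher(\hypothesis)\cup\Teacher(\hypothesis'))$, where the $\SigmaGvs$ restriction (no dependence on the third argument) yields the contradiction. Your explicit discussion of the singleton step is slightly more careful than the paper's, which simply asserts it.
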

\begin{proof}[Proof of \lemref{lm:seq-vs-bat_coll-free}] 
By definition of the preference function, we have  $\forall \sigma\in \SigmaGvs, \hypothesis'\in \Hypotheses$, $\sigma(\hypothesis'; \Hypotheses(\examples'), \cdot) = g_\sigma(\hypothesis', \Hypotheses(\examples'))$ for some function $g_\sigma$.
We prove the lemma by contradiction. Assume that the teacher mapping $\Teacher$ is not non-clashing. This assumption implies that there exists $\hypothesis \neq \hypothesis' \in \Hypotheses$, where $\examples = \Teacher(\hypothesis)$ and $\examples' = \Teacher(\hypothesis')$ are consistent with both $\hypothesis$ and $\hypothesis'$.

Assume that the last current hypothesis before the teacher provides the last example of $\examples$ is $\hypothesis_1$. Then,
\begin{equation*}
    \{\hypothesis\} = \argmin_{\hypothesis'' \in \Hypotheses(\examples)} \sigma(\hypothesis'';\Hypotheses(\examples),\hypothesis_1) = \argmin_{\hypothesis'' \in \Hypotheses(\examples\cup\examples')} \sigma(\hypothesis''; \Hypotheses(\examples \cup \examples'), \hypothesis) = \argmin_{\hypothesis'' \in \Hypotheses(\examples \cup \examples')} g_\sigma(\hypothesis'', \Hypotheses(\examples \cup \examples'))
\end{equation*}
where the first equality is the definition of a teaching sequence and the second equality is by the definition of collusion-free preference function (\defref{def:seq-col-free}). 
Similarly we have
\begin{equation*}
    \{\hypothesis'\} =  \argmin_{\hypothesis'' \in \Hypotheses(\examples' \cup \examples)} g_\sigma(\hypothesis'', \Hypotheses(\examples' \cup \examples)).
\end{equation*}
Consequently, $\hypothesis = \hypothesis'$, which is a contradiction. This indicates that $\Teacher$ is non-clashing.
\end{proof}

Now we are ready to provide the proof for \thmref{theorem:equivelence-results}. We divide the proof of the \thmref{theorem:equivelence-results} into three parts, each corresponding to the equivalence results for a different preference function family.
\begin{proof}[Proof of \thmref{theorem:equivelence-results}] 
Part 1 (reduction to \worstcase-\TD) and Part 2 (reduction to \RTD) of the proof are included in the main paper.
%
%
%
For Part 3, i.e., to establish the equivalence between $\SigmaGvsTD$ and $\NCTD$, it suffices to show that for any $\Instances,\Hypotheses,\hinit$, the following holds:
\begin{enumerate}[(i)]
	\item $\SigmaGvsTD_{\Instances,\Hypotheses,\hinit} \geq \NCTD(\Hypotheses)$
	\item $\SigmaGvsTD_{\Instances,\Hypotheses,\hinit} \leq \NCTD(\Hypotheses)$
\end{enumerate}

We first prove (i). 
According to \lemref{lm:seq-vs-bat_coll-free}, for any $\pref \in \Sigma_{\gvs}$, a successful teacher $\Teacher$ w.r.t. a learner $L_\sigma$ is non-clashing on $\Hypotheses$. Therefore, we have the following: $$\SigmaGvsTD_{\Instances,\Hypotheses,\hinit} =   \min_{\text{Successful Teacher $\Teacher$}}\max_{\hypothesis \in \Hypotheses}  |\Teacher(\hypothesis)| \geq \min_{\text{Non-clashing Teacher~} \Teacher} \max_{h\in \Hypotheses}|\Teacher(\hypothesis)| = \NCTD(\Hypotheses).$$


We now proceed to prove (ii). Consider any non-clashing teacher mapping $\Teacher$. 
We will prove by showing that there exists a collusion-free $\pref \in \Sigma_\gvs$ such that $\Teacher$ is successful w.r.t. a learner $L_\sigma$ on $\Hypotheses$.
We can construct such as a preference function $\sigma$ as follows. First, we initialize $\sigma(\cdot;\cdot,\cdot) = 1$. Then, for every $\hypothesis \in \Hypotheses$ and every $S$ such that $\Teacher(h) \subseteq S$ and $S$ is consistent with $\hypothesis$, we assign $\sigma(\hypothesis; \Hypotheses(S),\cdot) = 0$.

As shown in the earlier version of this paper~\cite{mansouri2019preference}, the $\sigma$ function constructed above is collusion-free and the teacher mapping $\Teacher$ is successful for the learner $L_\sigma$. Therefore, we conclude that for any non-clashing teacher mapping $T$, we can construct a preference function $\sigma\in \SigmaGvs$ such that $\max_{\hypothesis\in\Hypotheses}|\Teacher(h)| \geq \TD_{\Instances,\Hypotheses,\hinit}(\pref)$. Consequently, $\SigmaGvsTD_{\Instances,\Hypotheses,\hinit} \leq \NCTD(\Hypotheses)$. Combining this result with (i) completes the proof for Part3.
\end{proof}
\section{Supplementary Materials for \secref{sec:non-seq-family-pref}: Proof of \thmref{theorem:equivelence-results-infinite}} \label{sec.appendix.batch-models-infinite}
We extend our results for batch models in \secref{sec:batch-complexity-finite} to infinite domain. We first introduce the necessary notations and definitions here, expanding on the presentation in \secref{sec:batch-complexity-infinite}.


\subsection*{The Teaching Model with Preference Functions}  
Let $\Instances^c$ be an infinite ground set of unlabeled instances and let $\Hypotheses^c$ be an infinite class of hypotheses; each hypothesis $\hypothesis \in \Hypotheses^c$ is a function $\hypothesis : \Instances^c \to \Clabels$. Let $\Examples^c \subseteq \Instances^c \times \Clabels$ be the ground set of labeled examples. Again, for any $\examples \subseteq \Examples^c$, we define $\Hypotheses^c(\examples):= \{\hypothesis \in \Hypotheses^c \mid \forall \example = (\instance_\example, \clabel_\example) \in \examples, \hypothesis(\instance_\example) = \clabel_\example\}$.

The preference functions for the infinite domain are given by $\sigma^c: \Hypotheses^c \times 2^{\Hypotheses^c} \times \Hypotheses^c \rightarrow \reals$. Adapting \eqref{eq.learners-jump} to the infinite domain, the learner picks the next hypothesis based on the current hypothesis $\hypothesis_{t-1}$, version space $\hypotheses_{t}$, and preference function $\sigma_c$:
\begin{align}
\hypothesis_{t} \in \arg \inf_{\hypothesis'\in \hypotheses_{t}} \sigma^c(\hypothesis'; \hypotheses_t, \hypothesis_{t-1}).
\label{eq.learners-jump-infinite}
\end{align}

\subsection*{The Complexity of Teaching with Preference Functions}
Fix preference function $\sigma^c$. For any version space $\hypotheses\subseteq\Hypotheses^c$, the worst-case optimal cost for steering the learner from $\hypothesis$ to $\hstar$ is characterized by

\begin{align*}
  \Val_{\pref^c}(\hypotheses, \hypothesis, \hstar) =
  \begin{cases}
    1, & 
    \exists z, \text{~s.t.~}\Candidate_{\sigma^c}(\hypotheses,\hypothesis,\example) = \{\hstar\}\\
    1 + \inf\limits_{\example} \sup\limits_{\hypothesis'' \in \Candidate_{\sigma^c}(\hypotheses, \hypothesis, \example) 
    } \Val_{\pref^c}(\hypotheses \cap \Hypotheses^c(\{\example\}), \hypothesis'', \hstar) ,  &\text{otherwise}
  \end{cases}
\end{align*}
where 
    $\Candidate_{\sigma^c}(\hypotheses, \hypothesis, \example) =\arg \inf_{\hypothesis' \in \hypotheses \cap \Hypotheses^c(\{\example\})} \sigma^c(\hypothesis'; \hypotheses \cap \Hypotheses^c(\{\example\}), \hypothesis)$
denotes the set of candidate hypotheses most preferred by the learner. Note that $\Val_{\pref^c}(\hypotheses, \hypothesis, \hstar)$ can be infinite.

Now we can define the teaching dimension w.r.t. $\pref^c$ and the learner's initial hypothesis $\hinit$, in a similar way as  Eq.~\eqref{eq:sigmatd_fixedsigma}:
\begin{align}\label{eq:sigmatd_fixedsigma-infinite}
    \TD_{\Instances^c,\Hypotheses^c,\hinit}(\pref^c) = \sup_{\hstar} \Val_{\pref^c}(\Hypotheses^c, \hinit, \hstar).
\end{align}

Analogous to Eq~\eqref{eq:sigmatd}, for a family $\Sigma^c$ of 
preference functions $\sigma^c$ over the infinite domain, we define the teaching dimension w.r.t $\Sigma^c$ as the teaching dimension w.r.t. the \emph{best} $\pref^c$ in that family:
\begin{align}\label{eq:sigmatd-infinite}
        \Sigma^c_{}{\text -}\TD_{\Instances^c,\Hypotheses^c,\hinit} =  \inf_{\sigma^c \in \Sigma^c} \TD_{\Instances^c,\Hypotheses^c,\hinit}(\pref^c).
\end{align}

Similar to \defref{def:seq-col-free}, we consider collusion-free preference functions $\sigma^c$ for the infinite domain. Using this property, we use $\Sigma_{\CF}^c$ to denote the set of preference functions in the infinite domain that induce collusion-free teaching: 
\begin{align*}
    \Sigma_{\CF}^c = \{\sigma^c\mid \sigma^c \text{ is collusion-free}\}.
\end{align*}

\subsection*{Proof of the Theorem}

\begin{proof}[Proof of \thmref{theorem:equivelence-results-infinite}] We divide the proof in two parts.\\
\textbf{Part 1} $\PBTD(\Hypotheses^c) \leq \SigmaContGlobalTD_{\Instances^c,\Hypotheses^c,\hinit}$:\\
Consider $\sigmacstar \in \SigmaContGlobal$ such that $\TD_{\Instances^c,\Hypotheses^c,\hinit}(\sigmacstar)
= \SigmaContGlobalTD_{\Instances^c,\Hypotheses^c,\hinit}$. 
We build $\prec$ out of $\sigmacstar$ in the following way. For each pair of hypotheses $\hypothesis'' \neq \hypothesis'$, we define the $\prec$ as follows: 
\begin{itemize}
    \item If $\sigmacstar(\hypothesis'; \cdot, \cdot) < \sigmacstar(\hypothesis'', \cdot, \cdot)$, let $\hypothesis'' \prec \hypothesis'$. 
    \item If $\sigmacstar(\hypothesis'; \cdot, \cdot) > \sigmacstar(\hypothesis''; \cdot, \cdot)$, let $\hypothesis' \prec \hypothesis''$. 
\end{itemize}
For every $\hypothesis \in \Hypotheses^c$, we define the following:
\begin{align}
    \Hypotheses^c_{\sigmacstar, \hypothesis} = \{\hypothesis' \in \Hypotheses^c: \sigmacstar(\hypothesis'; \cdot, \cdot) > \sigmacstar(\hypothesis; \cdot, \cdot)\}.
\end{align}

Also, we rewrite the following definition below for ${\Hypotheses^c}_{\prec \hypothesis}$ which was introduced in the main paper: 
\begin{equation}
    {\Hypotheses^c}_{\prec \hypothesis} = \{\hypothesis' \in \Hypotheses^c: \hypothesis' \prec \hypothesis\}
\end{equation}


Based on the construction of $\prec$, it is clear that $\Hypotheses^c_{\sigmacstar, \hypothesis} = {\Hypotheses^c}_{\prec \hypothesis}$. This in turn allows us to establish that the teaching complexity for any hypothesis $\hypothesis \in \Hypotheses^c$ under $\sigma^c$ is same as teaching complexity under $\prec$. This means that the $\prec$ we have constructed from $\sigmacstar$ has $\PBTD(\Hypotheses^c, \prec) =  \TD_{\Instances^c,\Hypotheses^c,\hinit}(\sigmacstar)$. Also, since the preference function $\sigmacstar \in \SigmaContGlobal$  satisfies $\TD_{\Instances^c,\Hypotheses^c,\hinit}(\sigmacstar)
= \SigmaContGlobalTD_{\Instances^c,\Hypotheses^c,\hinit}$, we get the following result: $\PBTD(\Hypotheses^c, \prec) = \SigmaContGlobalTD_{\Instances^c,\Hypotheses^c,\hinit}$. From here, we can establish the proof as follows:
\begin{align*}
    \PBTD(\Hypotheses^c) = \inf_{\prec'} \PBTD(\Hypotheses^c, \prec') \leq \PBTD(\Hypotheses^c, \prec) = \SigmaContGlobalTD_{\Instances^c,\Hypotheses^c,\hinit}
\end{align*}

\noindent\textbf{Part 2} $\SigmaContGlobalTD_{\Instances^c,\Hypotheses^c,\hinit} \leq \PBTD(\Hypotheses^c)$:\\

In this part, we consider the relation $\prec^\star$ that achieves the lowest teaching complexity $\PBTD(\Hypotheses^c)$, i.e., $\PBTD(\Hypotheses^c, \prec^\star) = \PBTD(\Hypotheses^c)$.
For any strict partial relation on $\Hypotheses^c$, based on assumption of the theorem, there exists a $g$ such that for any two hypothesis $\hypothesis'' \neq \hypothesis'$, if $\hypothesis'' \prec^\star \hypothesis'$ we have $g(\hypothesis'') > g(\hypothesis')$. We define $\sigma^c(\hypothesis';\cdot, \cdot ) = g(\hypothesis')$.

Based on the construction of $\sigma^c$, it is clear that $\Hypotheses^c_{\sigma^c, \hypothesis} \supseteq {\Hypotheses^c}_{\prec^\star \hypothesis}$. This in turn allows us to establish that the teaching complexity for any hypothesis $\hypothesis \in \Hypotheses^c$ under $\sigma^c$ is at most the teaching complexity under $\prec^\star$. This means that the $\sigma^c$ we have constructed from $\prec^\star$ has $\TD_{\Instances^c,\Hypotheses^c,\hinit}(\sigma^c) \leq \PBTD(\Hypotheses^c, \prec^\star) = \PBTD(\Hypotheses^c)$. From here, we can establish the proof as follows:
\begin{align*}
    \SigmaContGlobalTD_{\Instances^c,\Hypotheses^c,\hinit} = \inf_{\sigma^{c'}} \TD_{\Instances^c,\Hypotheses^c,\hinit}(\sigma^{c'}) \leq \TD_{\Instances^c,\Hypotheses^c,\hinit}(\sigma^c) \leq \PBTD(\Hypotheses^c, \prec^\star) = \PBTD(\Hypotheses^c)
\end{align*}
\end{proof}

\section{Supplementary Materials for \secref{sec:seq-family-pref}: Proof of \thmref{thm:local-eq-GVS}
} \label{sec.appendix.seq-models_globalVS-local}

We divide the proof into three parts. The first part shows that the intersection of the two families is $\SigmaGlobal$. In part 2 and part 3 of the proof, we show that there exist  hypothesis classes, such that $\SigmaLocalTD_{\Instances,\Hypotheses,\hinit} > \SigmaGvsTD_{\Instances,\Hypotheses,\hinit}$, or $\SigmaLocalTD_{\Instances,\Hypotheses,\hinit} < \SigmaGvsTD_{\Instances,\Hypotheses,\hinit}$.



\subsection{Part 1}\label{sec.appendix.seq-models_globalVS-local.part1}
In this subsection, we provide the full proof for part 1 of \thmref{thm:local-eq-GVS}, i.e., $\SigmaGvs\cap \SigmaLocal=\SigmaGlobal$. 
Intuitively, observe that the input domains between $\sigma_\lcl \in \SigmaLocal$ and $\sigma_\gvs \in \SigmaGvs$ overlap at the domain of the first argument, which is the one taken by $\sigma_\glbl$. Therefore, $\forall \sigma \in \SigmaGlobal, \sigma \in \SigmaGvs \cap \SigmaLocal$. We formalize this idea in the proof below. 

\begin{proof}[Proof of Part 1 of \thmref{thm:local-eq-GVS}]
Assume $\sigma \in \SigmaLocal \cap \SigmaGvs$. Then, by the definitions of $\SigmaLocal$ and $\SigmaGvs$, we get 
\begin{enumerate}[(i)]\denselist
    \item $\exists g^a$, s.t. $\forall \hypothesis, \hypothesis' \in \Hypotheses: \sigma( \hypothesis' ; \cdot, \hypothesis)  = g^a(\hypothesis', \hypothesis)$, and 
    \item $\exists g^b$, s.t. $\forall \hypothesis' \in \Hypotheses, \hypotheses \subseteq \Hypotheses: \sigma( \hypothesis' ; \hypotheses, \cdot)  = g^b(\hypothesis', \hypotheses)$ 
\end{enumerate}
Now consider $\hypothesis', \hypothesis^1, \hypothesis^2 \in \Hypotheses$, and $\hypotheses^1, \hypotheses^2 \subseteq \Hypotheses$. According to (i), $\sigma(\hypothesis'; \hypotheses^1, \hypothesis^1) = \sigma(\hypothesis'; \hypotheses^2, \hypothesis^1)$. Also, according to (ii), $\sigma(\hypothesis'; \hypotheses^2, \hypothesis^1) = \sigma(\hypothesis'; \hypotheses^2, \hypothesis^2)$. This indicates that $\forall \hypothesis', \hypothesis^1, \hypothesis^2 \in \Hypotheses$ and $\forall \hypotheses^1, \hypotheses^2 \subseteq \Hypotheses$, we have $\sigma(\hypothesis'; \hypotheses^1, \hypothesis^1) = \sigma(\hypothesis'; \hypotheses^2, \hypothesis^2)$. In other words, there exists $g^c: \Hypotheses \to \reals$, s.t. $\forall \hypothesis' \in \Hypotheses: \sigma(\hypothesis';\cdot, \cdot) = g^c(\hypothesis')$. Thus, $\sigma \in \SigmaGlobal$.
\end{proof}
   

\subsection{Part 2}\label{sec.appendix.seq-models_globalVS-local.part2}
Next, we show that there exists $(\Hypotheses, \Instances)$, such that $\forall \hinit\in \Hypotheses$, $\SigmaLocalTD_{\Instances,\Hypotheses,\hinit} > \SigmaGvsTD_{\Instances,\Hypotheses,\hinit}$. To prove this statement, we first establish the following lemma.

\begin{lemma}\label{lem:local-rtd-1}
For any $\Hypotheses$, $\Instances$, and $\hinit\in \Hypotheses$, if $\SigmaLocalTD_{\Instances,\Hypotheses,\hinit} = 1$, then $\SigmaGlobalTD_{\Instances,\Hypotheses,\hinit} = 1$.
\end{lemma}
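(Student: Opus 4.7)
The plan is to convert a local preference function $\sigma_\lcl$ achieving teaching dimension $1$ into a global preference function $\sigma_\glbl$ that also achieves teaching dimension $1$, by exploiting the fact that a teaching process of length one only ever queries the preference at the initial hypothesis $\hinit$.

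First, I would invoke the assumption $\SigmaLocalTD_{\Instances,\Hypotheses,\hinit}=1$ to obtain a collusion-free $\sigma_\lcl \in \SigmaLocal$ with $\TD_{\Instances,\Hypotheses,\hinit}(\sigma_\lcl)=1$. By definition of $\SigmaLocal$, there exists $g_\lcl:\Hypotheses\times\Hypotheses\rightarrow\reals$ such that $\sigma_\lcl(\hypothesis';\hypotheses,\hypothesis)=g_\lcl(\hypothesis',\hypothesis)$. I would then define
\[
g(\hypothesis') := g_\lcl(\hypothesis',\hinit), \qquad \sigma_\glbl(\hypothesis';\hypotheses,\hypothesis) := g(\hypothesis'),
\]
which is a candidate element of $\SigmaGlobal$, provided we verify collusion-freeness.

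Next, I would verify that $\sigma_\glbl$ is collusion-free in the sense of Definition~\ref{def:seq-col-free}. Since $\sigma_\glbl(\hypothesis';\cdot,\cdot)$ depends only on $\hypothesis'$, if $\hat{\hypothesis}$ is the unique minimizer of $g$ over some version space $\hypotheses_t$, then for every subset $\hypotheses_t \cap \Hypotheses(S)$ containing $\hat\hypothesis$, the hypothesis $\hat\hypothesis$ remains the unique minimizer. So $\sigma_\glbl \in \SigmaGlobal$. Now, to bound $\TD_{\Instances,\Hypotheses,\hinit}(\sigma_\glbl) \leq 1$, I would pick any target $\hstar$; because $\TD(\sigma_\lcl)=1$, there exists $\example$ with
\[
\{\hstar\} = \argmin_{\hypothesis'\in \Hypotheses(\{\example\})} \sigma_\lcl(\hypothesis';\Hypotheses(\{\example\}),\hinit) = \argmin_{\hypothesis'\in \Hypotheses(\{\example\})} g_\lcl(\hypothesis',\hinit) = \argmin_{\hypothesis'\in \Hypotheses(\{\example\})} g(\hypothesis'),
\]
so the same example $\example$ steers a $\sigma_\glbl$-learner from $\hinit$ to $\hstar$ in one step. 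Hence $\TD_{\Instances,\Hypotheses,\hinit}(\sigma_\glbl) \leq 1$, giving $\SigmaGlobalTD_{\Instances,\Hypotheses,\hinit} \leq 1$.

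Finally, I would argue the matching lower bound $\SigmaGlobalTD_{\Instances,\Hypotheses,\hinit} \geq 1$: from Part 1 already proved we have $\SigmaGlobal \subseteq \SigmaLocal$, so $\SigmaGlobalTD \geq \SigmaLocalTD = 1$ (the degenerate case $|\Hypotheses|=1$ is vacuous). Combining the two inequalities yields $\SigmaGlobalTD_{\Instances,\Hypotheses,\hinit}=1$. The only subtle step, and the one I would describe most carefully, is the collusion-freeness check for $\sigma_\glbl$: it is tempting to inherit it from $\sigma_\lcl$, but actually it falls out more directly because $\sigma_\glbl$ depends only on $\hypothesis'$, so uniqueness of the minimizer is automatically preserved under shrinking of the version space — this is the crux that makes the reduction go through.
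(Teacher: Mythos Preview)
Your proposal is correct and follows essentially the same approach as the paper: freeze the local preference at $\hinit$ to obtain a global preference, then observe that the single teaching example for each target under $\sigma_\lcl$ works verbatim under $\sigma_\glbl$. You are in fact more thorough than the paper's proof, which omits the explicit collusion-freeness check for $\sigma_\glbl$ and the lower-bound direction $\SigmaGlobalTD \geq 1$.
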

\begin{proof}[Proof of \lemref{lem:local-rtd-1}]
If $\SigmaLocalTD_{\Instances,\Hypotheses,\hinit} = 1$, there should be some $\sigma_{\lcl} \in \SigmaLocal$ such that $\TD_{\Instances,\Hypotheses,\hinit}(\sigma_\lcl) = 1$. Now consider $\sigma_{\glbl}$ such that $\forall \hypothesis', \sigma_\glbl(\hypothesis';\cdot,\cdot) = \sigma_{\lcl}(\hypothesis';\cdot,\hinit)$. If $\Teacher_{\sigma_{\lcl}}$ is the best teacher for $\sigma_{\lcl}$, then $\forall \hypothesis \in \Hypotheses: |T_{\sigma_{\lcl}}(\hypothesis)| = 1$.  For a given hypothesis $\hypothesis$, let us denote the single teaching example in the sequence $T_{\sigma_{\lcl}}(\hypothesis)$ as $z^\star_{h}$.
Then, this indicates that $\argmin_{\hypothesis' \in \Hypotheses(\{z^\star_{h}\})} \sigma_{\lcl}(\hypothesis'; \cdot,\hinit) = \{\hypothesis\}$. 

Subsequently, $\argmin_{\hypothesis' \in \Hypotheses(\{z^\star_{h}\})} \sigma_{\glbl}(\hypothesis';\cdot,\cdot) = \{\hypothesis\}$. In other words, $\Teacher_{\sigma_{\lcl}}$ is also a teacher for $\sigma_{\glbl}$. This indicates that $\SigmaGlobalTD_{\Instances,\Hypotheses,\hinit} = \TD_{\Instances,\Hypotheses,\hinit}(\sigma_\glbl) = 1$.
%
%
\end{proof}

Now we are ready to provide the proof for part 2.
\begin{proof}[Proof of Part 2 of \thmref{thm:local-eq-GVS}]
We identify $\Hypotheses$, $\Instances$, $\hinit$, where $\SigmaGvsTD_{\Instances,\Hypotheses,\hinit} = 1$ and $\SigmaGlobalTD_{\Instances,\Hypotheses,\hinit} = 2$. 
\tableref{tab:batch_model_example_h2} illustrates such a class.
Here, since $\SigmaGlobalTD_{\Instances,\Hypotheses,\hinit}  = 2$, then by \lemref{lem:local-rtd-1}, it must hold that $\SigmaLocalTD_{\Instances,\Hypotheses,\hinit} > 1 = \SigmaGvsTD_{\Instances,\Hypotheses,\hinit}$.
\end{proof}

\subsection{Part 3}\label{sec.appendix.seq-models_globalVS-local.part3}


In this section, we construct hypothesis classes where $\SigmaLocalTD_{\Instances,\Hypotheses,\hinit} < \SigmaGvsTD_{\Instances,\Hypotheses,\hinit}$, and show that this gap could be made arbitrarily large. We first show that for the powerset hypothesis class of size 7, $\SigmaGvsTD_{\Instances,\Hypotheses,\hinit} \geq 4$ (\lemref{lem:nctd-powerset-kby2}) and $\SigmaLocalTD_{\Instances,\Hypotheses,\hinit} \leq 3$ (\lemref{lem:lvs-H7-leq3}). Based on this result, we then provide a constructive procedure which extends the gap to be $\SigmaGvsTD_{\Instances,\Hypotheses,\hinit} - \SigmaLocalTD_{\Instances,\Hypotheses,\hinit} \geq 2^{m-1}$ for any choince of $m$ 
(\lemref{lem:k-2k}).


 

\begin{lemma}[Based on Theorem 23 of \cite{pmlr-v98-kirkpatrick19a}]\label{lem:nctd-powerset-kby2}
Consider the powerset hypothesis class of size $k$, i.e., $\Hypotheses=\{0,1\}^k$. Then, $\NCTD \geq \ceil{k/2}$.
\end{lemma}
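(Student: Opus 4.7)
The plan is to exploit the non-clashing condition on pairs of hypotheses at Hamming distance $1$, and then perform a double-counting argument on the teaching-set sizes. Recall that a teacher mapping $T$ is non-clashing on $\Hypotheses$ iff for any two distinct $h, h' \in \Hypotheses$, the combined teaching set $T(h) \cup T(h')$ contains some instance on which $h$ and $h'$ disagree; equivalently, $h|_{T(h) \cup T(h')} \neq h'|_{T(h) \cup T(h')}$. For the powerset class $\Hypotheses = \{0,1\}^k$ over the ground set $\Instances = [k]$, every pair $h, h \oplus e_i$ (i.e., hypotheses differing only at coordinate $i$) has a unique disagreement position, namely $i$, so the non-clashing condition forces $i \in T(h) \cup T(h \oplus e_i)$.

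Fix any non-clashing teacher $T$. The first step is to apply the observation above to every ordered pair $(h, i) \in \Hypotheses \times [k]$, obtaining the indicator inequality
\[
\mathbf{1}\{i \in T(h)\} + \mathbf{1}\{i \in T(h \oplus e_i)\} \geq 1.
\]
Summing over all $(h, i)$ gives $\sum_{h,i} \mathbf{1}\{i \in T(h)\} + \sum_{h,i} \mathbf{1}\{i \in T(h \oplus e_i)\} \geq k \cdot 2^k$.

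The second step is to recognize that both sums above equal $\sum_{h \in \Hypotheses} |T(h)|$: the first by definition, the second after the change of variables $g = h \oplus e_i$ (which, for each fixed $i$, is a bijection on $\Hypotheses$). Therefore $2 \sum_h |T(h)| \geq k \cdot 2^k$, i.e., the average teaching-set size is at least $k/2$. Since the maximum is no smaller than the average and sizes are integers, we conclude $\max_{h \in \Hypotheses} |T(h)| \geq \lceil k/2 \rceil$. Taking the minimum over all non-clashing $T$ yields $\NCTD(\Hypotheses) \geq \lceil k/2 \rceil$.

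There is no real obstacle here beyond identifying the right structural probe: using only distance-$1$ pairs is exactly what collapses the non-clashing condition into a clean coordinate-by-coordinate constraint, and the bijection $h \mapsto h \oplus e_i$ is what makes the double-counting symmetric. More elaborate counting over all hypothesis pairs (via the constraint $\sum_h (|C_h|-1) \leq \binom{|\Hypotheses|}{2}$, where $C_h$ is the consistent set) turns out to give only a trivial bound, so the main subtlety is restricting attention to Hamming-$1$ neighbors to obtain the tight $\lceil k/2 \rceil$ bound.
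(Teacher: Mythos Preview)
Your proof is correct and follows essentially the same approach as the paper: both exploit the non-clashing condition on Hamming-distance-$1$ pairs to force the unique disagreement coordinate into one of the two teaching sets, then sum over all such pairs to obtain $\sum_h |T(h)| \geq k\cdot 2^{k-1}$ and conclude by averaging. Your presentation is somewhat more explicit about the double-counting bijection $h \mapsto h \oplus e_i$, but the argument is the same.
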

\begin{proof}[Proof of \lemref{lem:nctd-powerset-kby2}]
First we make the following observation: If $T$ is a non-clashing teacher and $\hypothesis, \hypothesis' \in \Hypotheses$ where $\hypothesis = \hypothesis' \triangle \instance$ (i.e., these two hypotheses only differ in their label on one instance), it must be the case that $(\instance, \hypothesis(\instance)) \in T(\hypothesis)$, or $(\instance, \hypothesis'(\instance)) \in T(\hypothesis')$. This holds by nothing that since $\hypothesis$, and $\hypothesis'$ are only different on $\instance$, if $\instance$ is absent in their teaching sequences, this would lead to violation of the non-clashing property of the teacher.


Next we apply this observation on the powerset $k$ hypothesis class where $\Hypotheses$ consists of all hypotheses which have length $k$. This indicates that for every $\hypothesis \in \Hypotheses$ and $0 \leq j \leq (k-1)$, all the $k$ variants satisfy $\hypothesis \triangle \instance_j \in \Hypotheses$. By using this property for all pairs $\hypothesis$ and $\hypothesis \triangle \instance_j$ with $0 \leq j \leq (k-1)$, we can drive $\sum_{i = 0}^{2^k-1} |T(\hypothesis_i)| \geq \frac{k \cdot 2^k}{2}$. By applying the pigeonhole principle, this indicates that there exist an $\hypothesis \in \Hypotheses$ where $|T(\hypothesis)| \geq \frac{k}{2}$. Hence,  $\NCTD(\Hypotheses) \geq \ceil{\frac{k}{2}}$.
\end{proof}

For the powerset hypothesis class of size $7$,  based on \lemref{lem:nctd-powerset-kby2} we get $\SigmaGvsTD_{\Instances,\Hypotheses,\hinit} = 
\NCTD(\Hypotheses) \geq 4$. 
%
Next, in \lemref{lem:lvs-H7-leq3}, we show that for the powerset hypothesis class of size $7$, we can find a preference function $\sigma \in \SigmaLocal$ such that $\SigmaLocalTD_{\Instances,\Hypotheses,\hinit} \leq 3$.


\begin{lemma}\label{lem:lvs-H7-leq3}
Consider the powerset hypothesis class of size $7$, i.e., $\Hypotheses=\{0,1\}^7$. Then, $\SigmaLocalTD_{\Instances,\Hypotheses,\hinit} \leq 3$.
\end{lemma}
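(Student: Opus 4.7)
The plan is to construct an explicit local preference function $\sigma_\lcl \in \SigmaLocal$ certifying $\TD_{\Instances,\Hypotheses,\hinit}(\sigma_\lcl) \leq 3$, and then verify the bound by exhibiting, for each target $\hstar$, an adaptive teaching sequence of length at most three. Because a local preference function is specified by a family $\{\succeq_h\}_{h \in \Hypotheses}$ of total orderings indexed by the current hypothesis, and because at each step the learner simply selects the $\succeq_{h_{t-1}}$-minimum element of the current version space (by \eqnref{eq.learners-jump}), the task is to design these orderings so that Protocol~\ref{alg:interaction} terminates at the correct target within three rounds. This is non-trivial since after three labeled examples the version space still contains $2^{7-3} = 16$ hypotheses, so the preference function must carry out the remaining disambiguation.

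I would first identify each hypothesis with a vector in $\{0,1\}^7$ and impose a combinatorial scaffolding on $\Hypotheses$. A natural template is to root, for each current $h$, a depth-$3$ decision tree whose internal nodes correspond to queries of the form ``label coordinate $x_i$'' and whose $8$ leaves are labeled by hypotheses forming a transversal of the $8$ subcubes obtained after three labels. The ordering $\succeq_h$ is then chosen so that within each such subcube the prescribed leaf hypothesis is the $\succeq_h$-minimum. Collusion-freeness (\defref{def:seq-col-free}) is enforced by setting $\sigma_\lcl(h; \cdot, h) = 0$, guaranteeing that the learner stays at $h$ once the version space collapses to $\{h\}$.

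With such $\sigma_\lcl$ in hand, the verification step is to show that for every pair $(\hinit, \hstar)$ one can adaptively choose $(e_1, e_2, e_3)$, each consistent with $\hstar$, inducing the learner trajectory $\hinit \to h_1 \to h_2 \to \hstar$, where $h_j$ is the $\succeq_{h_{j-1}}$-minimum of the updated version space. I would carry out this verification by exploiting the transitive symmetry of the powerset class under coordinate-wise complementation to reduce to the case $\hstar = 0^7$, and then casing on the Hamming weight of $\hinit$: for each weight class one picks the first coordinate to label so that the learner lands on a canonical intermediate hypothesis predicted by the decision tree at $\hinit$, and then iterates.

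The main obstacle is the joint combinatorial design of the orderings. The family $\{\succeq_h\}$ cannot be chosen independently: since the intermediate hypotheses $h_1, h_2$ are themselves induced by the preference function we are constructing, the decision trees rooted at different current hypotheses must agree consistently along every reachable adaptive path. A naive choice such as ranking by Hamming distance forces the learner to flip one coordinate at a time and yields teaching complexity $7$; the construction must instead permit ``large jumps'' of up to four coordinates per step while remaining collusion-free, which is the core combinatorial challenge underlying the proof.
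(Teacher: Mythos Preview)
Your framework is right---construct a win-stay-lose-shift $\sigma_\lcl \in \SigmaLocal$ and exhibit length-$3$ teaching paths---but your plan over-complicates the construction and leaves the obstacle you name unresolved. The paper avoids the per-hypothesis consistency problem entirely by building a \emph{single} rooted tree on $\Hypotheses$, with root $\hinit$, in which every hypothesis appears exactly once; each edge is labeled by one teaching example that is inconsistent with the parent node. The preference at a node $h$ simply ranks $h$ first (self-preference, giving collusion-freeness as in \defref{def:seq-col-free}), then its tree-children, then everything else; the teaching sequence for $\hstar$ is the root-to-$\hstar$ path. Because each edge example must be inconsistent with the parent but cannot contradict examples higher on the path, the branching factor at levels $1,2,3$ is at most $7,6,5$, giving total capacity $1+7+42+210=260\geq 128=|\Hypotheses|$; hence depth $3$ suffices, and the paper exhibits one such tree explicitly. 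With each hypothesis occupying a unique position, there is no cross-tree consistency to enforce: $\succeq_h$ is determined once, by $h$'s children in the single tree. This is precisely the missing idea that dissolves your ``joint combinatorial design'' obstacle.

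Your symmetry reduction also misfires. Since $\sigma_\lcl$ is allowed to depend on $\hinit$ (the lemma only asks that $\min_\sigma \TD_{\Instances,\Hypotheses,\hinit}(\sigma)\leq 3$ for each fixed $\hinit$), reducing to $\hstar=0^7$ while letting $\hinit$ vary implicitly forces the preference function to be XOR-equivariant---an extra constraint the paper's construction does not satisfy and that you would need to justify separately (and which rules out the very ``large jump'' asymmetries you correctly say are needed). The simpler route is the paper's: fix $\hinit$, build the tree rooted there, and observe that the XOR-translate of this construction handles any other initial hypothesis.
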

\begin{proof}[Proof of \lemref{lem:lvs-H7-leq3}]
\looseness-1We construct a $\sigma \in \SigmaLocal$ such that $\TD_{\Instances,\Hypotheses,\hinit}(\pref) = 3$; see Figure~\ref{fig:Local_is_less_than_NCTD_7} and Table~\ref{tab:Local_is_less_than_NCTD_7}. Intuitively, we construct a tree of hypotheses with branching factor $7$ at the top level, branching factor of $6$ at the next level, and so on. Here, each branch corresponds to one teaching example, and each path from $\hinit$ to $\hypothesis\in\Hypotheses$ corresponds to a teaching sequence $\Teacher_\lcl(h)$. We need a tree of depth at most $3$ to include all the $2^7=128$ hypotheses to be taught as nodes in the tree. This gives us a construction of $\sigma \in \SigmaLocal$ function such that $\TD_{\Instances,\Hypotheses,\hinit}(\pref) = 3$, which implies that $\SigmaLocalTD_{\Instances,\Hypotheses,\hinit}(\pref) \leq 3$ thereby completing the proof.
\end{proof}

\paragraph{Remark 1.} Figure~\ref{fig:Local_is_less_than_NCTD_7} and Table~\ref{tab:Local_is_less_than_NCTD_7} only specify the preference relations induced by $\sigma$ function and do not specify the exact values of the preference function. One can easily construct the actual $\sigma$ function from these relations as follows: (i) for each hypothesis $\hypothesis \in \Hypotheses$, set $\sigma(\hypothesis; \cdot, \hypothesis) = 0$, and  (ii) for all pair of hypotheses $\hypothesis \neq \hypothesis' \in \Hypotheses$, set $\sigma(\hypothesis'; \cdot; \hypothesis) \in (0, |\Hypotheses| - 1]$ based on the rank of $\hypothesis'$ in the preference list of $\hypothesis$.

\paragraph{Remark 2.} The preference function $\sigma \in \SigmaLocal$ we constructed in Lemma~\ref{lem:lvs-H7-leq3} also belong to the family $\SigmaWinStayLoseShift$, i.e.,   $\sigma \in \SigmaLocal \cap \SigmaWinStayLoseShift$. 
\vspace{6mm}

In the following lemma, we provide a result for $\SigmaLocal$ family of preference functions that allow us to reason about the teaching complexity of larger powerset classes based on the teaching complexity of a smaller powerset class.

\begin{lemma}\label{lem:k-2k}
Let $\Hypotheses^k = \{0,1\}^k$ and $\Instances^k = \{\instance^k_0, \ldots, \instance^k_{k-1}\}$ denote the hypothesis class and set of instances for the powerset of size $k$. Similarly, let $\Hypotheses^{2k} = \{0,1\}^{2k}$ and $\Instances^{2k} = \{\instance^{2k}_0, \ldots, \instance^{2k}_{2k-1}\}$ denote the hypothesis class and set of instances for powerset of size $2k$. W.l.o.g., let $\hypothesis^k_0 \in \Hypotheses^k$ and $\hypothesis^{2k}_0 \in \Hypotheses^{2k}$ be the initial hypotheses with label zero over all the instances. For the powerset of size $k$, consider any  $\sigma^{k} \in \SigmaLocal^{k} \cap \SigmaWinStayLoseShift^{k}$ with the following properties: (i) $\forall \hypothesis^k \in \Hypotheses^k$, $\sigma^k(\hypothesis^k; \cdot, \hypothesis^k) = 0$, and  (ii) $\forall \hypothesis^k \neq \hypothesis^{k'} \in \Hypotheses^k$, $\sigma^k(\hypothesis^{k'}; \cdot, \hypothesis^k) \in (0, |\Hypotheses^k| - 1]$. Then, for the powerset of size $2k$, there exists  $\sigma^{2k}  \in \SigmaLocal^{2k} \cap \SigmaWinStayLoseShift^{2k}$ with the above two properties and the following teaching complexity: $\TD_{\Instances^{2k},\Hypotheses^{2k},\hypothesis^{2k}_0}(\pref^{2k}) \leq 2 \cdot \TD_{\Instances^{k},\Hypotheses^{k},\hypothesis^k_0}(\pref^{k})$.
\end{lemma}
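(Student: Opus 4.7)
The plan is to reduce teaching $\Hypotheses^{2k}$ to two consecutive teaching episodes on $\Hypotheses^k$. Partition $\Instances^{2k}$ into two halves $\Instances^{2k}_a = \{\instance^{2k}_0, \dots, \instance^{2k}_{k-1}\}$ and $\Instances^{2k}_b = \{\instance^{2k}_k, \dots, \instance^{2k}_{2k-1}\}$, each canonically identified with $\Instances^k$. Every $\hypothesis^{2k} \in \Hypotheses^{2k}$ then decomposes uniquely as a pair $(\hypothesis_a, \hypothesis_b)$ with $\hypothesis_a, \hypothesis_b \in \Hypotheses^k$; in particular $\hypothesis^{2k}_0 = (\hypothesis^k_0, \hypothesis^k_0)$, and write the target as $\hypothesis^{*2k} = (\hypothesis^*_a, \hypothesis^*_b)$. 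I will build $\sigma^{2k}$ so that the teacher can drive the learner in two phases: Phase~1 plays an optimal $\sigma^k$-teaching sequence $\Teacher^k(\hypothesis^*_a)$ embedded into $\Instances^{2k}_a$, taking the $a$-component from $\hypothesis^k_0$ to $\hypothesis^*_a$ while pinning the $b$-component at $\hypothesis^k_0$; Phase~2 plays $\Teacher^k(\hypothesis^*_b)$ embedded into $\Instances^{2k}_b$, driving the $b$-component to $\hypothesis^*_b$ while pinning $a$ at $\hypothesis^*_a$.

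To realize this behavior with a single preference function, let $M = |\Hypotheses^k|$ and, for each $\hypothesis_c \in \Hypotheses^k$, let $r_{\hypothesis_c}: \Hypotheses^k \to \{0, 1, \dots, M-1\}$ be a bijection ranking hypotheses in nondecreasing order of $\sigma^k(\cdot; \cdot, \hypothesis_c)$ (breaking ties arbitrarily); by WSLS on $\sigma^k$ we have $r_{\hypothesis_c}(\hypothesis_c) = 0$ uniquely. Define
$$\sigma^{2k}\bigl(\hypothesis'; \hypotheses, \hypothesis\bigr) \;:=\; M \cdot r_{\hypothesis_a}(\hypothesis'_a) \;+\; r_{\hypothesis_b}(\hypothesis'_b).$$
This integer-valued function lies in $\{0, 1, \dots, M^2 - 1\} = \{0, 1, \dots, |\Hypotheses^{2k}| - 1\}$, does not depend on $\hypotheses$ (so $\sigma^{2k} \in \SigmaLocal^{2k}$), and equals $0$ iff both ranks vanish iff $\hypothesis' = \hypothesis$, giving $\sigma^{2k} \in \SigmaWinStayLoseShift^{2k}$ together with the required analogues of properties (i) and (ii). The multiplier $M$ is chosen so that the $a$-coordinate dominates lexicographically: any strict decrease in $r_{\hypothesis_a}$ contributes at least $M$ to $\sigma^{2k}$, which strictly exceeds the maximum possible $b$-contribution $M-1$.

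The heart of the argument is that at every step during Phase~1 the version space $\hypotheses_t$ has the product form $A_t \times \Hypotheses^k$, where $A_t$ coincides with the version space a $\sigma^k$-learner on $\Instances^k$ would have after the corresponding Phase~1 examples; symmetrically during Phase~2 it has the form $A \times B_t$ with $A$ frozen from Phase~1. Given this, the lexicographic structure of $\sigma^{2k}$ forces the learner's argmin to factor: the $a$-part minimizes $r_{\hypothesis_a}$ over $A_t$, and the $b$-part minimizes $r_{\hypothesis_b}$ over $\Hypotheses^k$ (in Phase~1) or $B_t$ (in Phase~2). In Phase~1, WSLS pins the $b$-minimizer uniquely at $\hypothesis^k_0$, so the $b$-component never moves; meanwhile the $a$-component follows the dynamics of a $\sigma^k$-learner on $\Instances^k$ starting from $\hypothesis^k_0$, because $r_{\hypothesis_a}$ refines the $\sigma^k(\cdot; \cdot, \hypothesis_a)$ preference order, so the rank-argmin is contained in the $\sigma^k$-argmin and every worst-case response of the $\sigma^{2k}$-learner is also a valid worst-case response of a $\sigma^k$-learner. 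By the teaching guarantee of $\sigma^k$ the $a$-component reaches $\hypothesis^*_a$ within $|\Teacher^k(\hypothesis^*_a)|$ steps, and a symmetric analysis for Phase~2 drives $b$ to $\hypothesis^*_b$ within $|\Teacher^k(\hypothesis^*_b)|$ steps, yielding $\TD_{\Instances^{2k}, \Hypotheses^{2k}, \hypothesis^{2k}_0}(\sigma^{2k}) \leq 2 \cdot \TD_{\Instances^k, \Hypotheses^k, \hypothesis^k_0}(\sigma^k)$. The main obstacle is making the factorization/coupling step airtight: one must verify that the product structure of the version space is preserved throughout both phases, and that the rank-refinement composed with the WSLS-pinning of the inactive half turns every valid Phase-$c$ trajectory of the $\sigma^{2k}$-learner into a valid trajectory of a $\sigma^k$-learner on the $c$-half, so that the single-half teaching guarantee transfers verbatim.
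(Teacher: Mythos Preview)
Your proposal is correct and follows essentially the same two-phase strategy as the paper: split $\Instances^{2k}$ into two halves, first steer the learner to the ``pivot'' $(\hypothesis^*_a,\hypothesis^k_0)$ using only first-half examples (with WSLS pinning the second coordinate at $\hypothesis^k_0$), then steer within that pivot's group to $(\hypothesis^*_a,\hypothesis^*_b)$ using only second-half examples (with WSLS pinning the first coordinate). The only real difference is the concrete $\sigma^{2k}$ you use: the paper defines $\sigma^{2k}$ piecewise---copying $\sigma^k$ on pivot-to-pivot pairs, shifting $\sigma^k$ by $|\Hypotheses^k|$ on within-group pairs, and defaulting to $|\Hypotheses^{2k}|-1$ elsewhere---whereas you use a single lexicographic formula $M\cdot r_{\hypothesis_a}(\hypothesis'_a)+r_{\hypothesis_b}(\hypothesis'_b)$ built from rank refinements of $\sigma^k$. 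Your formula is a bit cleaner (the product factorization of the argmin and the verification of properties (i)--(ii) and WSLS are immediate), at the mild cost of introducing the tie-breaking ranks $r_{\hypothesis_c}$; the paper's piecewise construction avoids ranks but requires checking that the three cases interact correctly. Both routes rest on exactly the same ideas: the product structure $A_t\times B_t$ of the version space under half-restricted examples, and the fact that the active coordinate's trajectory is a valid $\sigma^k$-learner trajectory so that the $\sigma^k$ teaching guarantee transfers.
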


\begin{proof}[Proof of \lemref{lem:k-2k}]
We begin the proof by introducing a few definitions. First, we denote a special set of ``pivot'' hypotheses $P^{2k} \subset \Hypotheses^{2k}$ obtained by concatenating $k$ zeros after the powerset of size $k$, i.e., $\{0,1\}^k \times \{0\}^k$. Second, for $\hypothesis^{2k} \in \Hypotheses^{2k}$ and $0 \leq i <  j \leq 2k-1$, we define $\hypothesis^{2k}_{i,\dots, j} = (\hypothesis^{2k}(\instance_i), \ldots, \hypothesis^{2k}(\instance_j))$ to be the projection of $\hypothesis^{2k}$ to the subspace of instances indexed by $i,\ldots,j$. Third, for every pivot hypothesis $p^{2k} \in P^{2k}$, we define a group of hypotheses in $\Hypotheses^{2k}$ belonging to $p^{2k}$ by the set $P^{2k}_{p} = \{\hypothesis^{2k} \in \Hypotheses^{2k}: \hypothesis^{2k}_{0, ... ,k-1} = p^{2k}_{0, ... ,k-1}\}$. 

Next, we will construct a preference function $\sigma^{2k}  \in \SigmaLocal^{2k} $  which will satisfy all the requirements. We begin by  initializing $\pref_{2k}$ for all $\hypothesis^{2k}, \hypothesis^{2k'} \in \Hypotheses^{2k}$  as follows:
	\begin{align*}
		\pref^{2k}(\hypothesis^{2k'}; \cdot, \hypothesis^{2k})  = 
  		\begin{cases}
    		0 , & \text{if } \hypothesis^{2k} = \hypothesis^{2k'} \\
    		|\Hypotheses^{2k}| - 1, & \text{otherwise } \\
  		\end{cases}
	\end{align*}

Then, we assign preferences for hypotheses among the set $P^{2k}$.  For every $p^{2k}, p^{2k'} \in P^{2k}$ we assign $\pref^{2k}(p^{2k'}; \cdot, p^{2k}) = \pref^{k}(p^{2k'}_{0, ... ,k-1}; \cdot, p^{2k}_{0, ... ,k-1})$.

Finally, we assign preferences for a pivot hypothesis $p^{2k} \in P^{2k}$ to hypotheses in the set $P^{2k}_{p}$. For all $p^{2k} \in P^{2k}$, consider all pairs $\hypothesis^{2k} \neq \hypothesis^{2k'} \in P^{2k}_{p}$ and assign $\pref^{2k}(\hypothesis^{2k'}; \cdot, \hypothesis^{2k}) = \pref^k(\hypothesis^{2k'}_{k, ... ,2k-1}; \cdot, \hypothesis^{2k}_{k, ... ,2k-1}) + |\Hypotheses^k|$.

Based on this construction, it is easy to verify that $\pref^{2k} \in \SigmaLocal^{2k} \cap \SigmaWinStayLoseShift^{2k}$ and satisfy the two properties stated in the lemma. Next, we upper bound the teaching complexity $\TD_{\Instances^{2k},\Hypotheses^{2k},\hypothesis^{2k}_0}(\pref^{2k})$ based on the following two observations:
\begin{itemize}
	\item Starting from $\hypothesis_0^{2k}$, to teach any hypothesis $p^{2k} \in P^{2k}$, we need at most $\TD_{\Instances^{k},\Hypotheses^{k},\hypothesis^k_0}(\pref^{k})$ number of examples.
	\item Starting from a pivot $p^{2k} \in P^{2k}$, to teach any hypothesis $\hypothesis^{2k} \in P^{2k}_{p}$, we need at most $\TD_{\Instances^{k},\Hypotheses^{k},\hypothesis^k_0}(\pref^{k})$ number of examples. 
\end{itemize}
Based on the above two observations, we can show that the teaching complexity for $\sigma^{2k}$ satisfies $\TD_{\Instances^{2k},\Hypotheses^{2k},\hypothesis^{2k}_0}(\pref^{2k}) \leq 2 \cdot \TD_{\Instances^{k},\Hypotheses^{k},\hypothesis^k_0}(\pref^{k})$.
\end{proof}
Finally, now we provide the proof for part 3 of \thmref{thm:local-eq-GVS}.
\begin{proof}[Proof of Part 3 of \thmref{thm:local-eq-GVS}]
Fix a positive integer $m$. Let $(\Hypotheses, \Instances)$ represent the powerset class of size $7\cdot2^m$. Starting with a powerset of size $7$ where we have the teaching complexity result from \lemref{lem:lvs-H7-leq3}, and then iteratively applying \lemref{lem:k-2k} $m$ times, we can easily conclude that $\SigmaLocalTD_{\Hypotheses, \Instances, \hinit} \leq 3 \cdot 2^m$. Moreover, based on \lemref{lem:nctd-powerset-kby2}, we know that $\SigmaGvsTD_{\Instances, \Hypotheses, \hinit} = \NCTD(\Hypotheses) \geq 7\cdot2^{m - 1}$. Thus, we have $\SigmaGvsTD_{\Instances, \Hypotheses, \hinit} - \SigmaLocalTD_{\Instances, \Hypotheses, \hinit} \geq 2^{m - 1}$ and the proof is complete.
\end{proof}
\begin{figure}[!h]
    \centering
    \begin{subfigure}[b]{\textwidth}
        \centering
        \includegraphics[width = 0.75\linewidth]{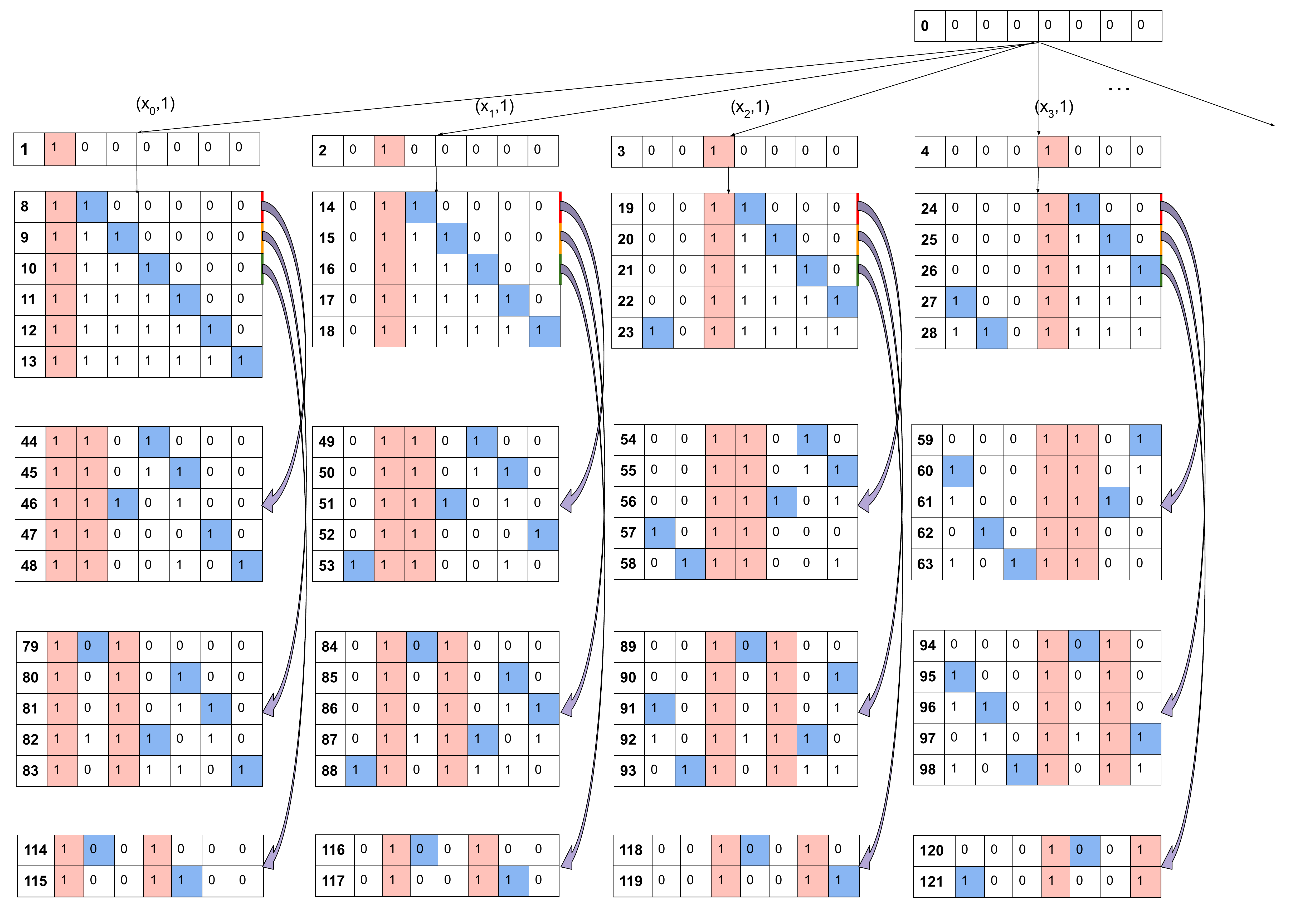}
        \caption{This figure is representing teaching sequences for the first four direct children of $\hinit$ (top four most preferred hypothesis of $\hinit$ after $\hinit$) and all of their children.}
    \end{subfigure}
    \centering
    \begin{subfigure}[b]{\textwidth}
        \centering
        \includegraphics[width = 0.75\linewidth]{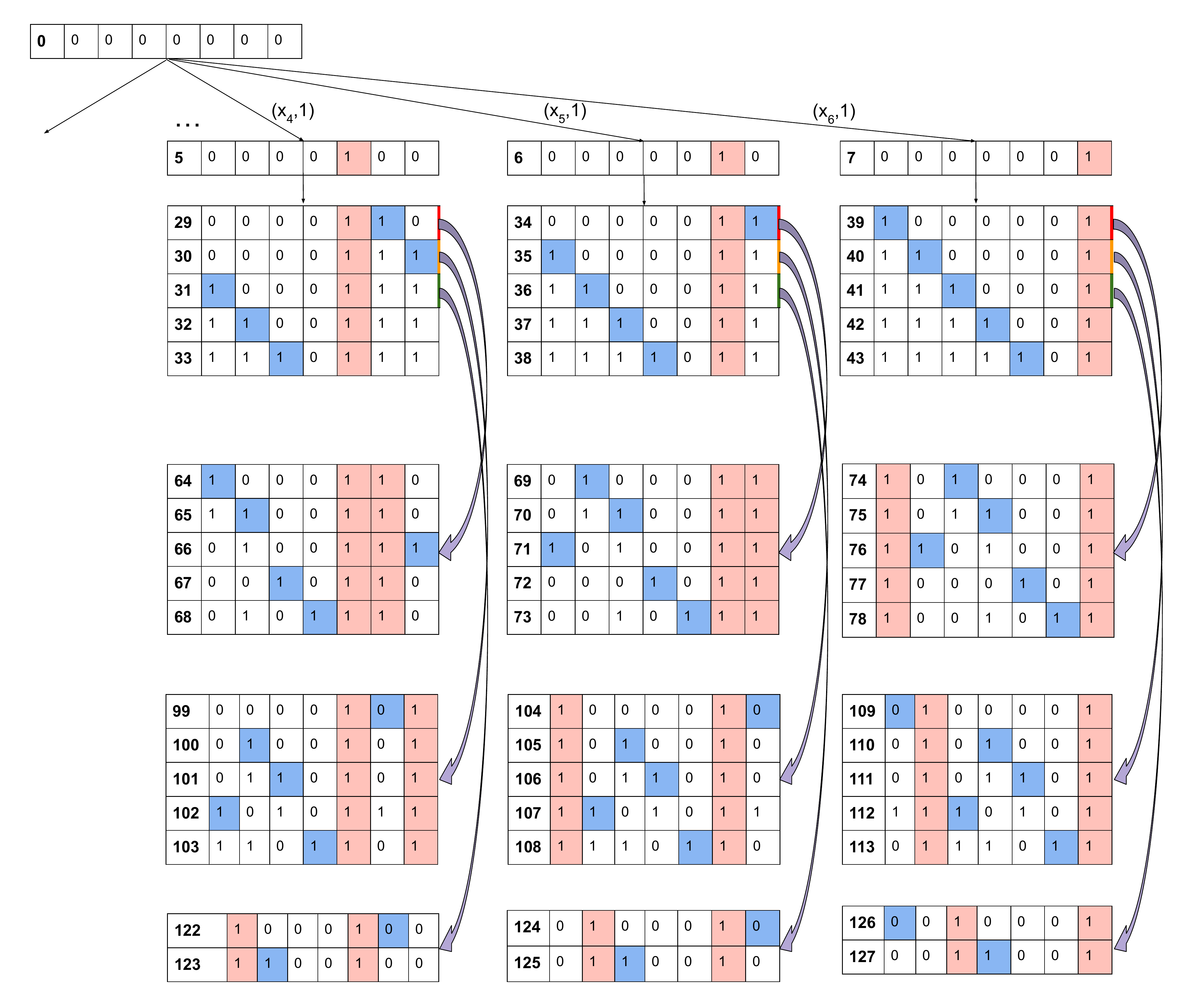}
        \caption{This figure is representing teaching sequences for the next three direct children of $\hinit$ (the next three most preferred hypothesis of $\hinit$) and all of their children.}
    \end{subfigure}
    \caption{Details of teaching sequences for a preference function $\sigma\in \SigmaLocal$, where $\TD_{\Instances,\Hypotheses,\hinit}(\pref) = 3$ for powerset $k=7$ class. For any hypothesis, the blue cell represents the last teaching example in the teaching sequence, and the red cells represent the rest of the teaching sequence. See Table~\ref{tab:Local_is_less_than_NCTD_7} for further details.
    }
    \label{fig:Local_is_less_than_NCTD_7}
\end{figure}
\clearpage

\begin{table}[!h]
\centering
\scalebox{0.95}{
\begin{tabular}{ c | c  ||  c | c }

$\hypothesis $ & $\Instances$ & \textbf{Preferences induced by $\sigma(\cdot ; \cdot, \hypothesis)$} & \textbf{Teaching sequence}\\
\hline
\underline{$\hypothesis_0$} & 0 0 0 0 0 0 0 &
$\hypothesis_0 > \hypothesis_1 > \hypothesis_2 > \hypothesis_3 > \hypothesis_4 >$\newline$ \hypothesis_5  > \hypothesis_6 > \hypothesis_7 >$ others &
$\paren{(\instance_0, 0)}$
\\
\hline
$\hypothesis_1$ & 1 0 0 0 0 0 0 &
$\hypothesis_1 > \hypothesis_8 > \hypothesis_9 > \hypothesis_{10} > \hypothesis_{11} > \hypothesis_{12} > \hypothesis_{13} $ others &
$\paren{(\instance_0, 1)}$
\\
\hline
$\hypothesis_8$ & 1 1 0 0 0 0 0 &
$\hypothesis_{8} > \hypothesis_{44} > \hypothesis_{45} > \hypothesis_{46} > \hypothesis_{47} > \hypothesis_{48} >$ others &
$\paren{(\instance_0, 1), (\instance_1, 1)}$
\\
$\hypothesis_{9}$ & 1 1 1 0 0 0 0 &
$\hypothesis_{9} > \hypothesis_{79} > \hypothesis_{80} > \hypothesis_{81} > \hypothesis_{82} > \hypothesis_{83} >$ others &
$\paren{(\instance_0, 1), (\instance_2, 1)}$
\\
$\hypothesis_{10}$ & 1 1 1 1 0 0 0 &
$\hypothesis_{10} > \hypothesis_{114} > \hypothesis_{115} >$ others &
$\paren{(\instance_0, 1), (\instance_3, 1)}$
\\
$\hypothesis_{11}$ & 1 1 1 1 1 0 0 &
$\hypothesis_{11} >$ others &
$\paren{(\instance_0, 1), (\instance_4, 1)}$
\\
$\hypothesis_{12}$ & 1 1 1 1 1 1 0 &
$\hypothesis_{12} >$ others &
$\paren{(\instance_0, 1), (\instance_5, 1)}$
\\
$\hypothesis_{13}$ & 1 1 1 1 1 1 1 &
$\hypothesis_{13} >$ others &
$\paren{(\instance_0, 1), (\instance_6, 1)}$
\\
\hline
$\hypothesis_{44}$ & 1 1 0 1 0 0 0 &
$\hypothesis_{44} >$ others &
$\paren{(\instance_0, 1), (\instance_1, 1), (\instance_3, 1)}$
\\
$\hypothesis_{45}$ & 1 1 0 1 1 0 0 &
$\hypothesis_{45}>$ others &
$\paren{(\instance_0, 1), (\instance_1, 1), (\instance_4, 1)}$
\\
$\hypothesis_{46}$ & 1 1 1 0 1 0 0 &
$\hypothesis_{46} >$ others &
$\paren{(\instance_0, 1), (\instance_1, 1), (\instance_2, 1)}$
\\
$\hypothesis_{47}$ & 1 1 0 0 0 1 0 &
$\hypothesis_{47} >$ others &
$\paren{(\instance_0, 1), (\instance_1, 1), (\instance_5, 1)}$
\\
$\hypothesis_{48}$ & 1 1 0 0 1 0 1 &
$\hypothesis_{48} >$ others &
$\paren{(\instance_0, 1), (\instance_1, 1), (\instance_6, 1)}$
\\
\hline
$\hypothesis_{79}$ & 1 0 1 0 0 0 0 &
$\hypothesis_{79} >$ others &
$\paren{(\instance_0, 1), (\instance_2, 1), (\instance_1, 0)}$
\\
$\hypothesis_{80}$ & 1 0 1 0 1 0 0 &
$\hypothesis_{80}>$ others &
$\paren{(\instance_0, 1), (\instance_2, 1), (\instance_4, 1)}$
\\
$\hypothesis_{81}$ & 1 0 1 0 1 1 0 &
$\hypothesis_{81} >$ others &
$\paren{(\instance_0, 1), (\instance_2, 1), (\instance_5, 1)}$
\\
$\hypothesis_{82}$ & 1 1 1 1 0 1 0 &
$\hypothesis_{47} >$ others &
$\paren{(\instance_0, 1), (\instance_2, 1), (\instance_3, 1)}$
\\
$\hypothesis_{83}$ & 1 0 1 1 1 0 1 &
$\hypothesis_{83} >$ others &
$\paren{(\instance_0, 1), (\instance_2, 1), (\instance_6, 1)}$
\\
\hline
$\hypothesis_{114}$ & 1 0 0 1 0 0 0 &
$\hypothesis_{114} >$ others &
$\paren{(\instance_0, 1), (\instance_3, 1), (\instance_1, 0)}$
\\
$\hypothesis_{115}$ & 1 0 0 1 1 0 0 &
$\hypothesis_{115}>$ others &
$\paren{(\instance_0, 1), (\instance_3, 1), (\instance_4, 1)}$
\\

\end{tabular}}
\vspace*{2mm}
\caption{More details about Figure~\ref{fig:Local_is_less_than_NCTD_7} -- This table lists down all the hypotheses in the left branch of the tree. For each of these hypotheses, it shows the preferences induced by the $\sigma$ function, as well as the teaching sequence to teach the hypothesis. Consider $\hypothesis_9$: We have the preferences induced by $\sigma(\cdot; \cdot, \hypothesis_9)$ as $\hypothesis_{9} > \hypothesis_{79} > \hypothesis_{80} > \hypothesis_{81} > \hypothesis_{82} > \hypothesis_{83} > \textnormal{others}$, and the teaching sequence for $\hypothesis_9$ is $((\instance_0, 1), (\instance_2, 1))$.}
\label{tab:Local_is_less_than_NCTD_7}
\end{table}
\section{Supplementary Materials for \secref{sec:newresults:sub-add}: Proof of \lemref{lem:additional:subadditive}}\label{appendix:newresults:sub-add}

\begin{proof}[Proof of \lemref{lem:additional:subadditive}] The proof is divided into two parts: proving sub-additivity and proving strict sub-additivity.

\paragraph{Part 1 of the proof on sub-additivity.} 
Let $\Sigma^a$ and  $\Sigma^b$ denote the $\SigmaWinStayLoseShift$ families of preference functions over the hypothesis classes $\Hypotheses^a$ and $\Hypotheses^b$ respectively. We use the notation $\Sigma^{ab}$ to denote the $\SigmaWinStayLoseShift$ family of preference functions over the disjoint union of the hypothesis classes given by $\Hypotheses^a \uplus \Hypotheses^b$. We will denote a preference function in these families as $\pref^a \in \Sigma^a$, $\pref^b \in \Sigma^b$, and $\pref^{ab} \in \Sigma^{ab}$. Furthermore, let us denote the best preference functions that achieve the minimal teaching complexity as follows:
\begin{align*}
 \pref^{a\star} &\in \argmin_{\pref^a \in \Sigma^a} \TD_{\Instances^a, \Hypotheses^a, \hypothesis^a_0}(\pref^a)\\   
 \pref^{b\star} &\in \argmin_{\pref^b \in \Sigma^b} \TD_{\Instances^b, \Hypotheses^b, \hypothesis^b_0}(\pref^b)
\end{align*}

We will establish the proof by constructing a preference function $\pref^{ab} \in \Sigma^{ab}$ such that $\TD_{\Instances^{ab}, \Hypotheses^{ab}, \hypothesis^{ab}_0}(\pref^{ab}) \leq \TD_{\Instances^a, \Hypotheses^a, \hypothesis^a_0}(\pref^{a\star}) + \TD_{\Instances^b, \Hypotheses^b, \hypothesis^b_0}(\pref^{b\star})$. For any hypotheses $\hypothesis^a, \hypothesis^{a'} \in \Hypotheses^a$, hypotheses $\hypothesis^b, \hypothesis^{b'} \in \Hypotheses^b$, and version spaces $\hypotheses^a \subseteq \Hypotheses^a$, $\hypotheses^b \subseteq \Hypotheses^b$, we construct a preference function $\pref^{ab}$ as follows:
\begin{align*}
    \pref^{ab}(\hypothesis^{a'} \uplus \hypothesis^{b'}; \hypotheses^{ab}, \hypothesis^a \uplus \hypothesis^b) = \pref^{a\star}(\hypothesis^{a'}; \hypotheses^{a}, \hypothesis^a) + \pref^{b\star}(\hypothesis^{b'}; \hypotheses^{b}, \hypothesis^b).
\end{align*}
where the version space $\hypotheses^{ab}$ for the disjoint union that is reachable during the teaching phase always takes the form as $\hypotheses^{ab} = \hypotheses^{a} \uplus \hypotheses^{b}$.


Let the starting hypothesis in $\Hypotheses^a \uplus \Hypotheses^b$ be $\hypothesis^a_0 \uplus \hypothesis^b_0$, and the target hypothesis be $\hypothesis^{a\star} \uplus \hypothesis^{b\star}$. If we provide a sequence of labeled instances only from $\Instances^{b}$, the version space left would always be $\Hypotheses^{a} \uplus \hypotheses^{b}$, where $\hypotheses^{b}$ is the version space of $\Hypotheses^{b}$ after providing these examples from $\Instances^{b}$. 
%
Now, we know that $\argmin_{\hypothesis^{a'}} \pref^{a\star}(\hypothesis^{a'} ; \Hypotheses^a, \hypothesis^a_0) = \{\hypothesis^a_0\}$;  consequently, $\hypothesis^{a'}$ must stay on $\hypothesis^a_0$. From here, we can conclude that if the teacher provides teaching sequence of $\hypothesis^{b\star}$ to the learner, the learner can be steered to $\hypothesis^a_0 \uplus \hypothesis^{b\star}$ with at most $\TD_{\Instances^b, \Hypotheses^b, \hypothesis^b_0}(\pref^{b\star})$ examples.

Let us denote the version space when the learner reaches $\hypothesis^a_0 \uplus \hypothesis^{b\star}$ as $\Hypotheses^a \uplus \hypotheses^b$. Next, if we continue providing labeled instances only from $\Instances^a$, the version space left would always be $\hypotheses^a \uplus \hypotheses^b$, where $\hypotheses^a$ is the version space of $\Hypotheses^a$ after providing these examples from $\Instances^a$. 
Moreover, as $\argmin_{\hypothesis^{b'}}  \pref^{b\star}(\hypothesis^{b'} ;  \hypotheses^b, \hypothesis^{b\star}) = \{\hypothesis^{b\star}\}$; consequently, $\hypothesis^{b'}$ must stay on $\hypothesis^{b\star}$. Therefore, if the teacher provides teaching sequence of $\hypothesis^{a\star}$ to the learner, the learner can be steered from $\hypothesis^a_0 \uplus \hypothesis^{b\star}$ to $\hypothesis^{a\star} \uplus \hypothesis^{b\star}$ with at most $\TD_{\Instances^a, \Hypotheses^a, \hypothesis^a_0}(\pref^{a\star})$ examples. Thus, we can conclude the following:
\begin{align*}
\Sigmatd{\Instances^a \cup \Instances^b, \Hypotheses^a \uplus \Hypotheses^b, \hypothesis_0^a \uplus \hypothesis_0^b} \leq \TD_{\Instances^a \cup \Instances^b, \Hypotheses^a \uplus \Hypotheses^b, \hypothesis^a_0  \uplus \hypothesis^b_0}(\pref^{ab}) \leq \Sigmatd{\Instances^a, \Hypotheses^a,\hypothesis_0^a} + \Sigmatd{\Instances^b, \Hypotheses^b,\hypothesis_0^b}.
\end{align*}

\paragraph{Part 2 of the proof on strict sub-additivity.} 
Next we prove that the family of preference functions $\Sigma := \SigmaWinStayLoseShift$ is strictly  sub-additive, i.e., there exist hypothesis classes where the relation $\leq$ holds with $<$.
Let $(\Hypotheses^3$, $\Instances^3)$ represent the powerset class of size $3$, and $(\Hypotheses^4$, $\Instances^4)$ represent the powerset class of size $4$. Also, w.l.o.g. let $\hypothesis^3_0$ to be the hypothesis with label zero over all the instances $\Instances^3$, and $\hypothesis^4_0$ to be the hypothesis with label zero over all the instances $\Instances^4$. In the following, we will prove that $\Sigmatd{\Instances^3 \cup \Instances^4, \Hypotheses^3 \uplus \Hypotheses^4, \hypothesis^3_0 \uplus \hypothesis^4_0} < \Sigmatd{\Instances^3, 
\Hypotheses^3, \hypothesis^3_0} + \Sigmatd{\Instances^4, \Hypotheses^4, \hypothesis^4_0}$.

Let $\Sigma^3$ denote the $\SigmaWinStayLoseShift$ family for the powerset class of size $3$ and $\sigma^3 \in \Sigma^3$ denote a preference function in this family. We will first show that $\forall \sigma^3 \in \Sigma^3: \TD_{\Instances^3, \Hypotheses^3, \hypothesis^3_0}(\sigma^3) \geq 2$. For the sake of contradiction, assume there exists $\sigma^3$ such that $\TD_{\Instances^3, \Hypotheses^3, \hypothesis^3_0}(\sigma^3) = 1$. Since $\argmin \sigma^3(\cdot; \cdot, \hypothesis^3_0) = \{ \hypothesis^3_0 \}$, we know that the teacher needs to provide an example which is not consistent with $\hypothesis^3_0$ to make the learner output a hypothesis different from $\hypothesis^3_0$. Since $|\Instances^3| = 3$, with teaching sequence of size $1$, only a maximum of three hypotheses $\hypothesis \neq \hypothesis^3_0$ can be taught. However, $|\Hypotheses^3| = 8$ which indicates that $\Sigmatd{\Instances^3, \Hypotheses^3, \hypothesis^3_0} \geq 2$. Similarly, we can establish that  $\Sigmatd{\Instances^4, \Hypotheses^4, \hypothesis^4_0} \geq 2$.

We will complete the proof by showing that $\Sigmatd{\Instances^3 \cup \Instances^4, \Hypotheses^3 \uplus \Hypotheses^4, \hypothesis^3_0 \uplus \hypothesis^4_0} \leq 3$.  We note that the hypothesis class $\Hypotheses^3 \uplus \Hypotheses^4$ over $\Instances^3 \cup \Instances^4$ is equivalent to the powerset class of size $7$ which we denote as $(\Hypotheses^7$, $\Instances^7)$.  In Lemma~\ref{lem:lvs-H7-leq3}, we have proved that $\Sigmatd{\Instances^7, \Hypotheses^7, \hypothesis_0^7} \leq 3$; a careful inspection of the constructed $\sigma$ in Lemma~\ref{lem:lvs-H7-leq3} reveals that it belongs to  $\SigmaWinStayLoseShift$ family which in turn implies $\Sigmatd{\Instances^7, \Hypotheses^7, \hypothesis_0^7} \leq 3$. Hence, we have $\Sigmatd{\Instances^3 \cup \Instances^4, \Hypotheses^3 \uplus \Hypotheses^4, \hypothesis^3_0 \uplus \hypothesis^4_0} \leq 3$ and  $\Sigmatd{\Instances^3, 
\Hypotheses^3, \hypothesis^3_0} + \Sigmatd{\Instances^4, \Hypotheses^4, \hypothesis^4_0} \geq 4$, which completes the proof. 


\end{proof}

}
}
{}
\end{document}